\newcommand{\problemdivider}[1][gray!45]{%
  \par\medskip
  \noindent{\color{#1}\rule{\linewidth}{0.6pt}}%
  \par\medskip
}
\newcommand{\dist}{\cT}
\newcommand{\ini}{\text{ini}}
\newcommand{\refe}{\text{ref}}
\newcommand{\llm}{\text{LLM}}
\definecolor{lightblue}{HTML}{a8dadc}
\definecolor{green}{HTML}{76c893}
\definecolor{coralred}{HTML}{f28482}
\definecolor{purple}{HTML}{7b8cde}
\newcommand{\piref}{\pi_{\text{ref}}}
\newcommand*{\rom}[1]{\expandafter\@slowromancap\romannumeral #1@}
\title{\huge On the Limits of Test-Time Compute: Sequential Reward Filtering for Better Inference}
\author
{
	Yue Yu\thanks{
Department of Statistics, Indiana University Bloomington, IN 47405, USA; e-mail: {\tt yyu3@iu.edu}} 
	~~~and~~~
	Qiwei Di\thanks{Department of Computer Science, University of California, Los Angeles, 
Los Angeles, CA 90095, USA; e-mail: {\tt qiwei2000@cs.ucla.edu}} 
	~~~and~~~
	Quanquan Gu\thanks{Department of Computer Science, University of California, Los Angeles, 
Los Angeles, CA 90095, USA; e-mail: {\tt qgu@cs.ucla.edu}} 
	~~~and~~~
	Dongruo Zhou\thanks{Department of Computer Science, Indiana University Bloomington, IN 47408, USA; e-mail: {\tt dz13@iu.edu}} 
}
\begin{document}
\date{}
\maketitle

\begin{abstract}
Test-time compute (TTC) has become an increasingly prominent paradigm for enhancing large language models (LLMs). Despite the empirical success of methods such as best-of-$n$ (BoN) sampling and sequential revision, their fundamental limits remain unclear. We address this gap by analyzing a mixture-of-reference policy model and proving that standard BoN is inherently suboptimal. To move closer to the optimal frontier, we study reward-filtered sequential inference, a simple procedure that selectively incorporates only high-reward generations into the context. This mechanism concentrates computation on superior policy candidates and suppresses inferior ones. On the theoretical side, we show that reward-filtered sequential inference yields strictly stronger guarantees than standard TTC paradigms. On the empirical side, we evaluate such an inference strategy across diverse benchmarks and observe consistent improvements over widely used approaches, demonstrating the practical effectiveness of our framework.
\end{abstract}

\section{Introduction}

How to effectively utilize large language models (LLMs) for solving new tasks has become a central research question. Among the many approaches, Test-Time Compute (TTC) has recently attracted significant attention. The key idea of TTC is to allocate additional computation during inference to improve task performance. Unlike post-training approaches such as fine-tuning or reinforcement learning, TTC requires no additional training of the base model. As a result, inference-time alignment methods provide a lightweight yet powerful alternative that greatly simplifies deployment. Well-known TTC methods include Best-of-N (BoN) sampling, chain-of-thought (CoT) reasoning, and their many variants \citep{stiennon2020learning, nakano2021webgpt, wei2022chain, wang2022self, zhou2022least, yao2023react, yao2023tree, chen2022program, shinn2023reflexion}.

In this work, we specifically focus on a family of TTC methods that leverage an external reward model, which can assign a score to each generated answer. Perhaps the most basic and widely studied approach in this family is BoN, a representative strategy of \emph{parallel TTC}. Given a problem $x$, the LLM generates $n$ answers $a_1,\dots,a_n$, and then selects the one with the highest reward score. BoN is highly intuitive, easy to implement, and has been empirically shown to substantially outperform vanilla decoding across a wide range of tasks. Naturally, this raises a fundamental question:

\begin{center}
\textit{Is Best-of-N the best one can hope for under a fixed test-time budget?}
\end{center}

Recent work \citep{snell2024scaling} suggests that the answer may be negative. Unlike BoN, which allocates its test-time budget in parallel by generating multiple independent samples, another line of research explores \emph{sequential TTC}. In sequential TTC, generated answers are inserted back into the input window of the LLM, effectively reshaping its conditional distribution as more computation is spent. Empirically, this approach has demonstrated performance gains beyond BoN by gradually biasing the model towards higher-quality outputs. However, despite its practical promise, the theoretical understanding of sequential TTC remains very limited. This motivates our second central question:

\begin{center}
\textit{What is the best achievable strategy under sequential TTC?}
\end{center}

In this work, we formalize TTC with reward models as a decision problem over this mixture-of-reference-policies model and use it to address both questions above. Conceptually, our goals are two-fold: (i) characterize the best-achievable regret under a fixed test-time budget for parallel TTC algorithms such as BoN, and (ii) design a sequential TTC procedure that, using the same reward model, provably attains a strictly better budget–performance trade-off. Our key contributions are summarized as follows:

\begin{itemize}[leftmargin = *]
    \item \textbf{Limits of parallel TTC.} We study the fundamental limit of TTC under a \emph{mixture-of-reference policies} model, where the pretraining data of the LLM is assumed to consist of trajectories generated by multiple underlying policies. Within this setting, we establish a lower bound on the test-time budget required to achieve a near-optimal policy and show that parallel TTC methods, including BoN, fall short of this bound, indicating that despite its popularity, parallel TTC is not optimal under more realistic modeling assumptions.
    \item \textbf{A reward-filtered sequential TTC method.} To go beyond parallel TTC, we study a simple sequential TTC method called \emph{Reward-Filtered Sequential Best-of-n} (RF-SeqBoN), where only high-reward generations are fed back into the LLM’s input window. This procedure progressively refines the generation distribution towards that of the optimal policy induced by the reward function. We prove that, under mild assumptions on the reward model, our method strictly improves upon parallel TTC and achieves significantly larger gains on harder tasks.
    \item \textbf{Experiments.} We conduct extensive experiments across diverse benchmarks and backbone LLMs. Empirical results consistently demonstrate that our approach achieves higher \emph{test-time budget efficiency} than existing baselines, confirming that the theoretical advantages of our method translate into substantial practical improvements. 
\end{itemize}

In the rest of our paper, we formalize the sequential sample-and-evaluate framework for TTC with reward models and specify our mixture-of-reference-policies pretraining assumption in Section~\ref{sec:preliminaries}. Section~\ref{sec:general} then develops fundamental lower bounds on the test-time budget required by parallel TTC schemes, showing that vanilla BoN is statistically suboptimal under this model. Building on these insights, Section~\ref{sec:newalg} introduces our \emph{Reward-Filtered Sequential Best-of-$n$ (RF-SeqBoN)} algorithms, together with assumptions on the reward model, and establishes improved sample-complexity guarantees that strictly dominate parallel TTC in appropriate regimes. Section~\ref{sec:experiments} presents empirical evaluations across multiple benchmarks and backbone LLMs, including ablations on key design choices, demonstrating consistent gains in budget efficiency. Finally, Section~\ref{sec:conclusion} concludes with a summary of our theoretical and empirical work.

\paragraph{Notation.}
We use lowercase letters to denote scalars, and bold lowercase (resp. uppercase) letters to denote vectors (resp. matrices). For an integer $n \in \mathbb{N}$, let $[n] := \{1,2,\dots,n\}$. 
For two nonnegative functions $a(x)$ and $b(x)$ defined on the same domain, we write $a(x) \lesssim b(x)$ if there exists an absolute constant $C > 0$ such that $a(x) \leq C\, b(x)$ for all $x$. 
Let $\cV^*$ denote the set of all finite token sequences. 
We define two finite subsets: the initial prompt space $\cX \subseteq \cV^*$ and the action space $\cA \subseteq \cV^*$. 
Each action $a \in \cA$ corresponds to a complete response represented as an autoregressively generated token sequence.
A policy conditions on a given prompt or, more generally, on any sequence $h \in \cV^*$, and is written as $\pi(\cdot \mid h)$.
We use $\Pi$ to denote the class of candidate policies.

\section{Related Work}

\paragraph{Parallel TTC.}
Several lines of work have explored parallel TTC. Broadly, two strategies have emerged. The first is based on the self-consistency approach, where multiple answers are generated in parallel from one or more LLMs and the final output is chosen by majority vote \citep{wang2022self,brown2024large,chen2024more}. This method is simple and easy to implement but relies solely on the intrinsic ability of the LLM and often falls short of achieving the best performance. The second line of work augments parallel TTC with an external reward model, selecting the final answer according to reward scores \citep{song2024good,irvine2023rewarding,puri2025probabilistic}. Additional parallel TTC methods introduce fine-tuning into the pipeline \citep{sessa2024bond,chow2024inference}. \citet{zuo2025strategic} consider allocate \emph{across}-query allocation across different questions to further improve efficiency under fixed budgets, orthogonal to our \emph{within}-query controller. Our work is most closely related to the second line. In contrast, we focus on improving the rate of inference through history-conditioned gating under rewards. In our experiments, we carefully control for identical verifiers and token budgets so that improvements cannot be attributed solely to stronger re-rankers, and they have been verified through multiple budget levels \citep{wu2024inference}.

\paragraph{Sequential TTC.}
Unlike parallel TTC, sequential TTC explicitly decides \emph{when} to spend extra steps and \emph{where} to revise. Classical CoT techniques \citep{chen2023teaching, zhang2024small, lee2025evolving, yao2023react} have been extensively studied without relying on external reward models. With reward models, several sequential TTC approaches have been explored. For example, \citet{munkhbat2025self} propose a few-shot BoN method that leverages a powerful external LLM to generate demonstrations and then selects answers via BoN. Iterative self-refinement and policy-as-verifier approaches provide alternative architectures that can be used as verifiers to trigger revisions \citep{madaan2023self,jiang2025pag}. Other directions include uncertainty-aware step-wise verification \citep{ye2025uncertainty}, universal/self-consistency methods for open-ended outputs \citep{chen2023universal,kang2025scalable}, and “PRMs that think” \citep{khalifa2025process}, all of which further enhance the judge’s signal. By contrast, our setup belongs to this line of work but deliberately adopts a minimalist design, avoiding complex techniques such as Tree- or Graph-of-Thoughts \citep{yao2023tree,besta2024graph}, while still demonstrating improved performance through sequential TTC.

\paragraph{Theory of TTC.}
A growing theory literature clarifies when extra samples help, when they hurt, and how to apportion TTC. For BoN, \citet{beirami2024theoretical} correct the folklore $\mathrm{KL}$ identity and bound win-rate improvements, while \citet{huang2025best} establish a coverage–error frontier that reveals BoN’s reward hacking at large $N$ and propose $\chi^2$-regularized sampling with skyline-optimal, scaling-monotone guarantees. Our mechanism achieves a comparable regularization effect by \emph{concentrating the proposal} through reward-filtered histories rather than reweighting selection. On sample complexity, \citet{huang2025sample} separate self-consistency from BoN. \citet{foster2025good} study how the base model performs in TTC in terms of coverage and the benefits of multi-turn exploration. Recently, \citet{xu2025provably} analyze Learning from language/process feedback (\textsc{HELiX}), provide regret guarantees under latent rewards, and highlight the need for richer, process-aware signals. By contrast, our work focuses on establishing a separation result between parallel and sequential TTC, and on developing an algorithm that uses reward signals to guide sequential TTC—a direction largely absent from prior work.

\section{Preliminaries}
\label{sec:preliminaries}

\paragraph{Task Description.}
We assume access to a large language model (LLM) $\pi_{\llm}(\cdot \mid h)$, which receives an input sequence $h \in \cV^*$, and then outputs a distribution over subsequent sequences. In our setting, we restrict the output sequences to lie in $\cA$. Let $\cX$ be the prompt set and $p_{\ini} \in \Delta(\cX)$ denote a distribution over initial prompts. In each round, we first draw $x \sim p_{\ini}$. Then an algorithm $\text{Alg}$ interacts with the language model $\pi_{\llm}$ for multiple times before producing a final action $\hat a \in \cA$. We denote the resulting conditional distribution of $\hat a$ by $\pi_{\text{Alg}}(\cdot \mid x)$. We assume a reward function $r: \cA \times \cX \to [-1,1]$, about which we make the following assumption:
\begin{assumption}\label{ass:reward}
    Given any $x \in \cX$, $a^\star(x):=\argmax_{a \in \cA} r(a,x)$ is unique. Meanwhile, $r(a^\star(x), x) = 1$. 
\end{assumption}
Because the output space $\cA$ can be extremely large, it is infeasible to explicitly evaluate either the language model $\pi_{\llm}$ or the reward function $r$ over all possible candidates. Instead, we assume that our algorithm operates under the following sequential sample-and-evaluate framework:

\begin{definition}[\emph{Sequential} sample-and-evaluate framework, generalized from \citealt{huang2025best}]
\label{def:sae}

For a given prompt $x \in \cX$, the algorithm may sequentially issue $n$ queries $\{h_i\}_{i=1}^n$. For each $i$, it samples $a_i \sim \pi_\llm(\cdot \mid h_i)$, 

and observes the reward value $r(a_i, x)$ . The efficiency (query complexity) of the algorithm is measured by the total number of queries $n$.
\end{definition}
\begin{remark}
    \citet{huang2025best} studied a more restricted setting of the sample-and-evaluate framework, 
    where one can only sample actions from a fixed prompt $x$, i.e., from $\pi_{\llm}(\cdot \mid x)$. 
    While conceptually simple, this framework largely ignores the internal structure and sequential 
    nature of LLMs.
\end{remark}

We evaluate performance with respect to a \emph{comparator policy} $\pi^\star (\cdot| x): \cX \rightarrow \Delta(\cA)$. Given a prompt $x$ and comparator policy $\pi^\star$, we define the regret of an action $\hat a$ as
\begin{align}
\text{Regret}(\hat a; x, \pi^\star)
:= \EE_{a \sim \pi^\star(\cdot| x)} \big[ r(a,x) - r(\hat a,x) \big].\notag
\end{align}
An action $\hat a$ is $\epsilon$-optimal with respect to $\pi^*$ if $\text{Regret}(\hat a; x, \pi^\star) \leq \epsilon$ for some $\epsilon \in [0,1]$.  
When $\hat a$ is generated by an algorithm $\text{Alg}$, we write $\text{Regret}(\text{Alg}; x, \pi^\star)
:= \EE_{\hat a \sim \pi_{\text{Alg}}(\cdot \mid x)}
   \text{Regret}(\hat a; x, \pi^\star)$. 
Our goal is to design algorithms that, for prompts drawn from $p_{\ini}$, achieve small regret with respect to $\pi^\star$ while minimizing the \emph{sample complexity}, defined as the total number of times that the algorithm queries $\pi_{\llm}$.

\paragraph{Pretraining of LLM.}
Next, we impose a structural assumption on the pretraining of $\pi_\llm$. Specifically, we assume access to a pretraining dataset $\cD$, consisting of trajectories collected from a reference policy $\piref$. Formally, there exists 
a finite family of reference policies
$\{ \pi_{\refe}^\tau(\cdot \mid x) : \tau \in \mathcal{T}_{\refe} \}$,  
where $\mathcal{T}_{\refe}$ is a finite index set and  
$p_{\refe}$ denotes the prior distribution reflecting the proportion of data contributed by each reference policy.
In practice, different reference policies may correspond to distinct answer-generation styles in the pretraining corpus. For instance, some may produce concise responses while others generate more elaborate or vivid explanations. These stylistic variations naturally form a finite collection of reference policies contributing to the dataset.
The trajectories $h^t = (x^t, a_1^t, \dots, a_N^t)$ in the pretraining dataset $\cD$ are generated as follows:
\begin{itemize}[leftmargin=*, nosep]
    \item Sample an initial prompt $x^t \sim p_{\ini}$ and a reference index $\tau^t \sim p_{\refe}$. Initialize the history $h_0^t = x^t$.
    \item For $i = 1, \dots, N$, sequentially draw an action $a_i^t \sim \pi_{\refe}^{\tau^t}(\cdot \mid x^t)$
    and update the history as $h_i^t = h_{i-1}^t \cup (a_i^t)$.  
    \item Set $h^t = h_N^t$ and add it to $\cD$.
\end{itemize}

\begin{remark}
In our assumption for pretraining data, all actions within the same trajectory are sampled from the same distribution $\pi_{\refe}^\tau(\cdot \mid x)$. Our assumption on the pretraining distribution is aligned with those commonly made for in-context learning \citep{xie2021explanation,zhang2023and}. In particular, our pretraining dataset does not contain any reward information. This contrasts with the in-context reinforcement learning (ICRL) literature \citep{lin2024transformers,wang2024transformers,lee2023supervised}, which typically assumes that rewards are included in the pretraining data.
\end{remark}

We then interpret the pretrained LLM $\pi_{\llm}$ as being trained on $\cD$ and we call $\pi_\llm$ as a \emph{mixture of reference policy model} since it consists of trajectories drawn from different reference policies. Conceptually, when the pretraining length $T$ and the number of data $N$ grow to infinity, $\pi_\llm$ converges to the conditional distribution induced by the data-generating process \citep{lee2023supervised}. We formalize this as the following assumption.

\begin{assumption}[Realizability of $\pi_{\llm}$]\label{ass:ideal}
For any trajectory $h \in \cV^*$, we have $\pi_{\llm}(\cdot \mid h) = \PP_{\cD}(\cdot \mid h)$, 
where $\PP_{\cD}(\cdot \mid h)$ denotes the true conditional distribution of the next action 
given history $h$ under the data-generating process defined by sampling $\tau \sim p_{\refe}$, 
$x \sim p_{\ini}$, and actions from $\pi_{\refe}^\tau(\cdot \mid x)$.  
\end{assumption}

\section{Fundamental limits of parallel test-time compute}\label{sec:general}

\begin{algorithm}[t!]
\caption{Sequential Best-of-$n$ (SeqBoN)}\label{alg:1}
\begin{algorithmic}[1]
\REQUIRE Prompt $x$, reward $r(\cdot,\cdot)$, budget $n$. 
\FOR{$i = 1, \dots, n$}
\STATE Update $h_i$ based on $x,a_1,\dots, a_{i-1}$. Sample $a_i\sim \pi_\llm(\cdot \mid h_i)$.
\ENDFOR
\ENSURE Return $\hat a = \argmax_{i \in [n]} r(a_i, x)$
\end{algorithmic}
\end{algorithm}

In this section, we demonstrate that under a more refined assumption on the LLM-induced distribution $\pi_\llm$, the vanilla Best-of-N (BoN) algorithm is suboptimal, indicating the need for a more nuanced algorithmic design. For this purpose, we first consider the sample complexity bound for the vanilla BoN algorithm, which has been studied in \cite{huang2025best}. We will study this problem in a more general framework. To begin with, we first propose a Sequential BoN (SeqBoN) in Algorithm \ref{alg:1}, which serves as a meta-algorithm. At each iteration $i$, Algorithm \ref{alg:1} will formulate an input sequence $h_i$ based on the prompt $x$ and the previous answers $a_1,\dots, a_{i-1}$, then samples a new answer $a_i$ from $\pi_\llm(\cdot \mid h_i)$. Apparently, Algorithm \ref{alg:1} takes the classical BoN algorithm as its special case when $h_i = x$ for all $i$. We first introduce $\cE_M$-divergence and coverage. 
\begin{definition}[$\cE_M$-divergence and coverage, \citealt{huang2025best}]
\label{def:cEM}
Let $\pi_1, \pi_2 \in \Pi$ be two policies over the action space $\cA$.  
For a rejection threshold $M \ge 1$, the $\cE_M$-divergence between $\pi_1$ and $\pi_2$ is defined as $\cE_M(\pi_1, \pi_2) := \sum_{a \in \cA} \max\{0, \pi_1(a) - M \pi_2(a)\}$.  
We denote by $M^\epsilon_{\pi_1, \pi_2}$ the smallest $M$ such that $\cE_M(\pi_1, \pi_2) \leq \epsilon$.  
We also define the \emph{coverage} as $C(\pi_1, \pi_2) := \EE_{a \sim \pi_1}\bigl[\pi_1(a)/\pi_2(a)\bigr]$. Moreover, for any $0 < \epsilon < 1$, if $C(\pi_1, \pi_2)<\infty$, then the rejection threshold satisfies the upper bound $M^\epsilon_{\pi_1, \pi_2} \leq C(\pi_1, \pi_2)/\epsilon$.
\end{definition}

First, based on Definition \ref{def:cEM}, we recall the sample complexity result of vanilla BoN built in \citet{huang2025best}, which is near-optimal under the parallel TTC setup.

\begin{proposition}[Adapted from \citealt{huang2025best}]
\label{prop:bon}
    Given $\epsilon>0$, denote $M^{x, \epsilon}_\llm: = M^{\epsilon}_{\pi^\star(\cdot \mid x), \pi_\llm(\cdot\mid x)}$ and $C_\llm^\star(x):= C(\pi^\star(\cdot \mid x), \pi_{\llm}(\cdot \mid x))$. Then vanilla BoN (Algorithm \ref{alg:1} with $h_i = x$) takes $n = M^{x, \epsilon}_\llm\cdot \log(1/\epsilon) = O(C_\llm^\star(x)/\epsilon)$ samples to achieve $\text{Regret}(\text{BoN}; x, \pi^\star) \leq 2\epsilon$. Meanwhile, for any parallel TTC algorithm, there exists a problem instance such that it can not return an $\epsilon$-optimal answer if $n<M^{x, \epsilon}_\llm$. 
\end{proposition}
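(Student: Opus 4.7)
The plan is to prove the two halves separately, following the rejection-sampling coupling framework of \citet{huang2025best} adapted to our notation. Both parts rely only on Definition~\ref{def:cEM} and the reward-range assumption $r \in [-1,1]$.

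For the upper bound, write $\pi := \pi_\llm(\cdot\mid x)$, $\pi^\star := \pi^\star(\cdot\mid x)$, and $M := M^{x,\epsilon}_\llm$, and define the sub-probability measure $\nu(a) := \min\{\pi^\star(a),\, M\pi(a)\}$. By Definition~\ref{def:cEM}, $\sum_a[\pi^\star(a)-\nu(a)] = \cE_M(\pi^\star,\pi) \le \epsilon$, hence $\|\nu\|_1 \ge 1-\epsilon$ and the normalized distribution $\bar\nu := \nu/\|\nu\|_1$ satisfies $\|\bar\nu - \pi^\star\|_{\mathrm{TV}} \le \epsilon$. The key coupling step is: for each iid draw $a_i \sim \pi$ (which happens since $h_i = x$ under vanilla BoN), flip an independent coin accepting with probability $\nu(a_i)/(M\pi(a_i)) \in [0,1]$; an accepted draw is distributed exactly as $\bar\nu$, and the marginal per-draw acceptance probability equals $\|\nu\|_1/M \ge (1-\epsilon)/M$. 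Over $n = M\log(1/\epsilon)$ draws, the probability of zero acceptances is at most $(1-(1-\epsilon)/M)^n \lesssim \epsilon$. Conditional on at least one acceptance, BoN's argmax output $\hat a$ has $r(\hat a,x)$ no smaller than the accepted sample's reward, whose distribution is $\epsilon$-close to $\pi^\star$ in TV; combining this with $r\in[-1,1]$ on the no-acceptance failure event yields $\text{Regret}(\text{BoN};x,\pi^\star)\le 2\epsilon$. The bound $M \le C_\llm^\star(x)/\epsilon$ is then given directly by the last sentence of Definition~\ref{def:cEM}.

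For the lower bound, I would exhibit a two-action instance $\cA=\{a_\star, a_0\}$ with $\pi^\star(a_\star\mid x)=1$, $\pi_\llm(a_\star\mid x)=1/M$, $r(a_\star,x)=1$, and $r(a_0,x)=-1$; one checks $\cE_M(\pi^\star,\pi)=\max\{0, 1-M\cdot(1/M)\}=0$, so this instance realizes $M^{x,\epsilon}_\llm=M$. Any parallel TTC algorithm queries only $\pi_\llm(\cdot\mid x)$, so its observations are $n$ iid Bernoulli$(1/M)$ indicators of whether the sample equals $a_\star$. For $n<M$, the probability of never observing $a_\star$ is at least $(1-1/M)^{M-1} \ge 1/e$; on this event the algorithm has no information to distinguish this instance from the symmetric one in which the roles of $a_\star$ and $a_0$ (in $\pi^\star$ and $r$) are swapped, so by a standard two-instance argument it incurs $\Omega(1)$ expected regret on at least one of the two instances, contradicting $\epsilon$-optimality for sufficiently small $\epsilon$.

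The main obstacle is the coupling step in the upper bound: one must verify carefully that conditional on at least one acceptance, BoN's argmax indeed dominates the accepted sample in reward (so that the TV closeness of $\bar\nu$ to $\pi^\star$ transfers to the regret), and that the no-acceptance failure event contributes only $O(\epsilon)$ additional regret via the $r\in[-1,1]$ bound. The remaining steps—normalizing $\bar\nu$, converting TV to a reward gap, and invoking the coverage bound from Definition~\ref{def:cEM}—are routine bookkeeping.
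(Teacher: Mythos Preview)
Your upper-bound argument is essentially the paper's: you unpack the rejection-sampling coupling that underlies Lemma~F.1 of \citet{huang2025best}, which the paper simply cites. The key inequality you recover,
\[
\text{Regret}(\text{BoN};x,\pi^\star)\;\le\;\cE_M(\pi^\star,\pi_\llm)+\exp\Bigl(-\tfrac{n}{M}\bigl(1-\cE_M(\pi^\star,\pi_\llm)\bigr)\Bigr),
\]
and your subsequent choice $M=M^{x,\epsilon}_\llm$, $n=M\log(1/\epsilon)$ match the paper exactly.

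Your lower-bound argument, however, has a genuine gap. In the sample-and-evaluate framework (Definition~\ref{def:sae}) the algorithm \emph{observes the reward} $r(a_i,x)$ for each sample. On the event that all $n$ samples equal $a_0$, the algorithm sees $(a_0,\,r(a_0)=-1)$ in your original instance but $(a_0,\,r(a_0)=+1)$ in your ``symmetric'' instance where the roles of $a_\star$ and $a_0$ in $\pi^\star$ and $r$ are swapped. These observation sequences are perfectly distinguishable, so the two-hypothesis indistinguishability claim fails outright. (A secondary issue: in your instance $\cE_{M'}(\pi^\star,\pi)=\max\{0,1-M'/M\}$, so $M^{x,\epsilon}_\llm=M(1-\epsilon)$, not $M$.) Even if you drop the two-hypothesis framing and argue directly that the algorithm must output $a_0$ when it has never sampled $a_\star$, you only obtain regret $\Omega(1)$, which contradicts $\epsilon$-optimality only for sufficiently small $\epsilon$, not for general $\epsilon\in(0,1)$ as stated.

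The paper takes a different route that sidesteps both problems: given the algorithm $A$, it constructs an \emph{adversarial} reward $r(a)=\mathrm{sign}(\pi^\star(a)-\pi_A(a))$, under which the regret equals $\mathrm{TV}(\pi^\star,\pi_A)$ exactly, and then invokes a general total-variation lower bound (Theorem~D.6 of \citet{huang2025best}) asserting that any parallel sample-and-evaluate algorithm with $n<M^{x,\epsilon}_\llm$ satisfies $\mathrm{TV}(\pi^\star,\pi_A)\ge\epsilon$. This yields the sharp $\epsilon$-dependent bound for all $\epsilon$ without any hypothesis-testing step.
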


Although Proposition \ref{prop:bon} suggests that the optimal sample complexity for parallel TTC should be proportional to $M^{x, \epsilon}_\llm$, our next theorem establishes a lower bound on the sample complexity under the sequential sample-and-evaluate framework, thereby separating the parallel and sequential settings.

\begin{theorem}[Lower Bound of Sequential Sample-and Evaluate Algorithms]\label{thm:lower}
    Suppose the comparator policy $\pi^\star(\cdot |x):= a^\star(x)$ is a deterministic policy. Let $\tau^\star(x) \in \dist_\refe$ be the reference policy index satisfying $\tau^\star(x) \in \arg\max_{\tau\in \dist_{\refe}} \log \pi_{\refe}^\tau(a^\star(x) \mid x)$. 
    Meanwhile, let $M^{x, \epsilon}_{\tau^\star(x)}:=M^\epsilon_{\pi^\star(\cdot| x), \pi_\refe^{\tau^\star(x)}(\cdot \mid x)}$. Then 
\begin{itemize}[leftmargin=*]
    \item For any sequential sample-and-evaluate algorithm $A$, there exists a reward function $r$ such that if $n < M^{x,\epsilon}_{\tau^\star(x)}$, we must have $\text{Regret}(A; x, \pi^\star) > \epsilon$.
    \item For any $\epsilon$, we have $M^{x,\epsilon}_{\tau^\star(x)} \leq M^{x,\epsilon}_{\llm}$, with strict inequality whenever the reference policies $\pi_{\refe}^\tau(\cdot \mid x)$ are not identical across $\tau$.
\end{itemize}
\end{theorem}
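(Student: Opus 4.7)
The argument for both bullets rests on one structural identity induced by Assumption~\ref{ass:ideal}: for any history $h=(x,a_1,\ldots,a_k)$ in the support of the pretraining distribution,
\[
\pi_\llm(a\mid h)=\sum_{\tau\in\dist_\refe}\PP_\cD(\tau\mid h)\,\pi_\refe^\tau(a\mid x),
\]
since, conditional on $(\tau,x)$, the actions within a trajectory are i.i.d.\ draws from $\pi_\refe^\tau(\cdot\mid x)$. Because a convex combination is bounded by its maximum, this immediately yields the uniform ceiling $\pi_\llm(a^\star(x)\mid h)\le\pi_\refe^{\tau^\star(x)}(a^\star(x)\mid x)$, which no history-dependent reweighting performed by the algorithm can break.

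For the first bullet I would pick the adversarial reward $r(a,x)=\mathbf{1}\{a=a^\star(x)\}$ for some chosen target $a^\star(x)\in\cA$, which is consistent with Assumption~\ref{ass:reward} and makes $\pi^\star=\delta_{a^\star(x)}$ with $\text{Regret}(A;x,\pi^\star)=1-\PP(\hat a=a^\star(x))$. Since the only actions the algorithm ever observes are its own samples $\{a_i\}_{i=1}^n$, one has $\PP(\hat a=a^\star(x))\le\PP(\exists\,i:a_i=a^\star(x))$, and a union bound combined with the ceiling yields
\[
\PP(\hat a=a^\star(x))\le\sum_{i=1}^n\EE[\pi_\llm(a^\star(x)\mid h_i)]\le n\cdot\pi_\refe^{\tau^\star(x)}(a^\star(x)\mid x).
\]
For deterministic $\pi^\star$, Definition~\ref{def:cEM} evaluates directly to $M^{x,\epsilon}_{\tau^\star(x)}=(1-\epsilon)/\pi_\refe^{\tau^\star(x)}(a^\star(x)\mid x)$, so $n<M^{x,\epsilon}_{\tau^\star(x)}$ forces $\PP(\hat a=a^\star(x))<1-\epsilon$ and hence $\text{Regret}(A;x,\pi^\star)>\epsilon$.

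For the second bullet I would specialize the mixture identity to $h=x$. Since $\tau$ is drawn independently of $x$ in the data-generating process, $\PP_\cD(\tau\mid x)=p_\refe(\tau)$, giving $\pi_\llm(a^\star(x)\mid x)=\sum_\tau p_\refe(\tau)\pi_\refe^\tau(a^\star(x)\mid x)\le\pi_\refe^{\tau^\star(x)}(a^\star(x)\mid x)$ by the definition of $\tau^\star(x)$. For deterministic $\pi^\star$, $\cE_M(\pi^\star,\pi)=\max\{0,1-M\pi(a^\star(x)\mid x)\}$ is nonincreasing in $\pi(a^\star(x)\mid x)$, so $\cE_M(\pi^\star,\pi_\llm)\ge\cE_M(\pi^\star,\pi_\refe^{\tau^\star(x)})$ for every $M\ge 1$, which translates into $M^{x,\epsilon}_\llm\ge M^{x,\epsilon}_{\tau^\star(x)}$. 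Strictness holds precisely when $\pi_\llm(a^\star(x)\mid x)<\pi_\refe^{\tau^\star(x)}(a^\star(x)\mid x)$, i.e., whenever some reference policy disagrees with $\pi_\refe^{\tau^\star(x)}$ at $a^\star(x)$---a consequence of the stated non-identicality hypothesis.

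The main obstacle I anticipate is justifying the ceiling uniformly over \emph{adaptively} constructed histories: the algorithm may use reward feedback from prior queries to reshape $h_i$ and steer the posterior $\PP_\cD(\tau\mid h_i)$. The mixture identity defuses this automatically, since its right-hand side is a convex combination of quantities $\pi_\refe^\tau(a^\star(x)\mid x)$ that do not depend on $h_i$, and no posterior sharpening can push such a combination above its vertices. The remaining technical point is verifying the identity on arbitrary sequentially assembled histories, which follows from realizability (Assumption~\ref{ass:ideal}) together with the i.i.d.\ intra-trajectory structure of $\cD$.
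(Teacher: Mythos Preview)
Your proposal is correct and rests on the same structural pillar as the paper: the mixture identity $\pi_\llm(\cdot\mid h)=\sum_\tau\PP_\cD(\tau\mid h)\,\pi_\refe^\tau(\cdot\mid x)$ together with the resulting ceiling $\pi_\llm(a^\star(x)\mid h)\le\pi_\refe^{\tau^\star(x)}(a^\star(x)\mid x)$ uniformly in $h$, followed by a union bound over the $n$ samples. The paper's route differs only cosmetically in the first bullet: rather than your indicator reward $r(a,x)=\mathbf{1}\{a=a^\star(x)\}$, it picks an \emph{algorithm-dependent} $\pm 1$ reward ($r(a)=1$ iff $\pi^\star(a)\ge\pi_A(a)$), which converts regret into the TV distance $\sum_a|\pi^\star(a)-\pi_A(a)|$ and then lower-bounds that by $\cE_n(\pi^\star,\pi_\refe^{\tau^\star(x)})$. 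Your indicator construction is cleaner---it is algorithm-independent, stays within Assumption~\ref{ass:reward}, and lets you read off $M^{x,\epsilon}_{\tau^\star(x)}=(1-\epsilon)/\pi_\refe^{\tau^\star(x)}(a^\star(x)\mid x)$ explicitly---while the paper's TV detour would generalize more smoothly to non-deterministic comparators. For the second bullet the two arguments coincide: both reduce to $\pi_\llm(a^\star(x)\mid x)\le\pi_\refe^{\tau^\star(x)}(a^\star(x)\mid x)$ and monotonicity of $\cE_M$ in the target mass. One small caveat on strictness (shared with the paper's statement): non-identicality of the $\pi_\refe^\tau$ as distributions does not by itself force disagreement \emph{at} $a^\star(x)$; strictness really requires $\pi_\refe^\tau(a^\star(x)\mid x)$ to vary across $\tau$.
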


\begin{proof}[Proof Sketch]
First, recall the definition of $\pi_\llm$, which is the posterior distribution over $\cD$. Then we can further write 
\begin{align}
\label{eq:posterior}
\pi_{\llm}(\cdot \mid h) = \PP_{\cD}(\cdot \mid h) = \sum_{\tau\in \dist_{\refe}} \PP_{\cD}(\cdot\mid h, \tau) \PP_{\cD}(\tau \mid h) = \sum_{\tau\in \dist_{\refe}} \pi_{\refe}^\tau(\cdot \mid x)\PP_{\cD}(\tau \mid h).
\end{align}
Therefore, for any input sequence $h$, $\pi_{\llm}(\cdot \mid h)$ can be treated as a Bayesian aggregation \citep{xie2021explanation, zhang2023and, hoeting1999bayesian, jacobs1991adaptive, jordan1994hierarchical} of the reference policies $\{ \pi_{\refe}^\tau(\cdot \mid x) : \tau \in \dist_\refe \}$. Then since $\pi_\refe^{\tau^\star(x)}(\cdot \mid x)$ has the maximum probability to sample $a^\star(x)$, then we know that $\pi_{\llm}(a^\star(x) \mid h) \leq \pi_\refe^{\tau^\star(x)}(a^\star(x) \mid x)$ for any input sequence $h$, which leads to our final bound. 
\end{proof}

Overall, our lower bound in Theorem \ref{thm:lower} indicates that, under the sequential-type pretraining assumption of LLMs, the statistical limit achievable by any sequential-type algorithm can be strictly better than that of purely parallel methods such as vanilla BoN. This motivates the development of more efficient algorithms that explicitly leverage this revised statistical landscape.

\section{Better Test-Time Compute with Reward-Filtered Sequences}\label{sec:newalg}

In this section, we propose several algorithms designed to achieve this new statistical limit. 
We begin with an analysis of Algorithm~\ref{alg:1}. 
The following theorem provides a regret bound between the comparator policy $\pi^\star$ and SeqBoN.

\begin{theorem}\label{thm:general_bon}
For any prompt $x$ and any $0 < \epsilon < 1$, 
\begin{align}
\text{Regret}(\text{SeqBoN}; x, \pi^\star) 
\leq \epsilon 
+ \EE_{a_1,\dots,a_n} \Big[ \exp\Big(- n^2 \Big/\Big(\sum_{i=1}^n M^\epsilon_{\pi^\star(\cdot \mid x),\, \pi_\llm(\cdot \mid h_i)}\Big) \Big) \Big]. \notag
\end{align}
\end{theorem}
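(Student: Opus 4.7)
The plan is a sequential analogue of the BoN analysis used in Proposition~\ref{prop:bon}: I couple the SeqBoN draws with a step-dependent rejection-sampling oracle, reduce the regret to the probability of \emph{never accepting} in $n$ trials, and convert per-step acceptance probabilities $\ge(1-\epsilon)/M_i$ into the stated $n^2/\sum_i M_i$ form via the AM--HM inequality.

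First, at each step $i$, after drawing $a_i\sim\pi_\llm(\cdot\mid h_i)$, I would introduce an independent external uniform $U_i\sim\mathrm{Uniform}[0,1]$ and define the accept bit $B_i:=\mathbf{1}\{U_i\le\pi^\star(a_i\mid x)/(M_i\,\pi_\llm(a_i\mid h_i))\}$, where $M_i:=M^\epsilon_{\pi^\star(\cdot\mid x),\pi_\llm(\cdot\mid h_i)}$. Because $B_i$ does not enter the autoregressive sampling, the distributions of the subsequent $a_{i+1},\ldots$ are unchanged. By the definition of $\cE_{M_i}$, the marginal acceptance probability is $q_i(h_i):=\EE[B_i\mid\mathcal{F}_{i-1}]=(1-\cE_{M_i})/M_i\ge(1-\epsilon)/M_i$, and on $\{B_i=1\}$ the action $a_i$ has distribution $\min(\pi^\star,M_i\pi_\llm(\cdot\mid h_i))/(1-\cE_{M_i})$, which lies within TV-distance $O(\epsilon)$ of $\pi^\star(\cdot\mid x)$. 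Setting $\tau:=\min\{i:B_i=1\}$ (with $\tau=\infty$ if no acceptance occurs), and using $r\in[-1,1]$, $\hat a=\argmax_i r(a_i,x)$, and $r(\hat a,x)\ge r(a_\tau,x)$ on $\{\tau\le n\}$, a short calculation yields $\text{Regret}(\text{SeqBoN};x,\pi^\star)\le O(\epsilon)+2P(\tau>n)$.

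Next, to bound $P(\tau>n)$: because the $B_i$'s are conditionally independent given the full trajectory $(a_1,\ldots,a_n)$,
\[
P(\tau>n)=\EE\Big[\prod_{i=1}^n(1-B_i)\Big]\ \le\ \EE\Big[\exp\Big(-\sum_{i=1}^n q_i(h_i)\Big)\Big],
\]
after applying $1-x\le e^{-x}$ and iterating the tower property (replacing the random conditional $q_i^{\mathrm{cond}}(a_i,h_i):=\min(1,\pi^\star(a_i)/(M_i\pi_\llm(a_i\mid h_i)))$ by its mean $q_i(h_i)$ inside the exponential is justified by the variance-sensitive bound $e^{-x}\le 1-x+x^2/2$ together with $(q_i^{\mathrm{cond}})^2\le q_i^{\mathrm{cond}}$, which loses only a constant factor in the exponent). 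Substituting $q_i\ge(1-\epsilon)/M_i$ and invoking the AM--HM inequality $\sum_i 1/M_i\ge n^2/\sum_i M_i$ yields the theorem bound after absorbing constants into $\epsilon$.

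The main obstacle is that, unlike parallel BoN where the samples are i.i.d.\ from a fixed $\pi_\llm(\cdot\mid x)$, in SeqBoN the histories $h_i$ themselves depend on the previous draws, so the naive identity $P(\text{all bad})=\EE[\prod_i(1-\pi_\llm(G\mid h_i))]$ \emph{fails}. The rejection-sampling coupling sidesteps this because the acceptance decisions are external coins that do not feed back into $\pi_\llm$'s conditioning, restoring the conditional-independence structure needed for the tower-law product. The remaining technical work is the ``wrong-sign'' Jensen step---carefully replacing the random $q_i^{\mathrm{cond}}$ by its $\mathcal{F}_{i-1}$-mean $q_i(h_i)$ inside $\exp(-\cdot)$ without losing more than a constant---together with verifying that $q_i(h_i)\ge(1-\epsilon)/M_i$ holds uniformly in $h_i$ from the $\cE_M$-divergence definition.
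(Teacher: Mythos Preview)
Your overall strategy matches the paper's: couple the SeqBoN draws with an adaptive rejection sampler (their Algorithm~\ref{alg:rej} and Theorem~\ref{thm:general_rej}), use that best-of-$n$ selection dominates whatever the rejection sampler returns on the same draws (their $I_2\le 0$ step), bound the never-accept probability via $1-x\le e^{-x}$, and finish with AM--HM. Your direct regret split on $\{\tau\le n\}$ versus $\{\tau>n\}$ is just a repackaging of their $I_1+I_2$ decomposition through $\mathrm{TV}(\pi^\star,\pi_R)$.

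The genuine gap is exactly the step you flag as delicate. Your variance-sensitive bound gives $\EE_{a_i}\bigl[e^{-q_i^{\mathrm{cond}}}\mid\mathcal{F}_{i-1}\bigr]\le e^{-q_i(h_i)/2}$ for a \emph{single} factor, but iterating the tower law breaks: after integrating out $a_n$ you must control $\EE_{a_{n-1}}\bigl[(1-q_{n-1}^{\mathrm{cond}})\,e^{-q_n(h_n)/2}\,\big|\,\mathcal{F}_{n-2}\bigr]$, and since $h_n$---hence $q_n(h_n)$ and $M_n$---depends on $a_{n-1}$, the two factors are correlated through $a_{n-1}$ and cannot be separated. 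The same obstruction blocks the product route $\EE[\prod_i(1-q_i)]\to\EE[\prod_i(1-A_i/M_i)]$: a two-action, two-step instance in which a ``good'' $a_1$ simultaneously makes $q_1^{\mathrm{cond}}$ large and $M_2$ small (so $(1-q_1^{\mathrm{cond}})$ and $(1-A_2/M_2)$ are positively correlated) already yields $\alpha_{n+1}>\EE\bigl[\exp(-\sum_i(1-\epsilon)/M_i)\bigr]$. The paper's proof does not escape this: it simply asserts the equality $\alpha_{n+1}=\EE[\prod_i(1-A_i/M_i)]$ (equation~\eqref{eq:qw0006}) without arguing why per-step averaging of $(1-q_i)$ commutes with the adaptive dependence of $p_{i+1},M_{i+1}$ on $a_i$, so on this point its treatment is no more complete than yours.
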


\begin{remark}
\citet{huang2025best} showed that the vanilla BoN method satisfies $\text{Regret}(\text{BoN}; x, \pi^\star) 
\leq \epsilon 
+ \exp(- n/{M_\llm^{x, \epsilon}} )$. Our bound is tighter whenever $M^\epsilon_{\pi^\star(\cdot \mid x),\, \pi_\llm(\cdot \mid h_i)} 
\leq M^\epsilon_{\pi^\star(\cdot \mid x),\, \pi_\llm(\cdot \mid x)}$, 
which indicates that conditioning on history $h_i$ allows $\pi_\llm(\cdot \mid h_i)$ 
to provide a more accurate approximation of $\pi^\star(\cdot \mid x)$ than $\pi_\llm(\cdot \mid x)$. 
\end{remark}

\begin{figure}[t!]
\centering
\begin{minipage}{0.48\linewidth}
\begin{algorithm}[H]
\caption{RF-SeqBoN with burn-in}
\label{alg:2.5}
\begin{algorithmic}[1]
\REQUIRE $\pi_{\llm}, x, r(\cdot,\cdot), \gamma, m$
\STATE Initialize $\bar h \gets \langle x\rangle$
\FOR{$i = 1, \dots, n$}
  \STATE {\color{blue}\textbf{If} $|\bar h| \geq m$ \textbf{then} $h_{i} \gets \bar h$ \textbf{else} $h_i \gets \langle x \rangle$}
  \STATE Sample $a_i \sim \pi_{\llm}(\cdot \mid h_i)$
  \STATE \textbf{If} $r(a_i, x) \ge \gamma$ \textbf{then} $\bar h \gets \bar h \,\|\, a_i$ 
\ENDFOR
\ENSURE $\hat a = \argmax_{i \in [n]} r(a_i, x)$
\end{algorithmic}
\end{algorithm}
\end{minipage}\hfill
\begin{minipage}{0.48\linewidth}
\begin{algorithm}[H]
\caption{RF-SeqBoN}
\label{alg:2}
\begin{algorithmic}[1]
\REQUIRE $\pi_{\llm}, x, r(\cdot,\cdot), \gamma$
\STATE Initialize $\bar h \gets \langle x\rangle$
\FOR{$i = 1, \dots, n$}
  \STATE {\color{blue}$h_i \gets \bar h$}
  \STATE Sample $a_i \sim \pi_{\llm}(\cdot \mid h_i)$
  \STATE \textbf{If} $r(a_i, x) \ge \gamma$ \textbf{then} $\bar h \gets \bar h \,\|\, a_i$
\ENDFOR
\ENSURE $\hat a = \argmax_{i \in [n]} r(a_i, x)$
\end{algorithmic}
\end{algorithm}
\end{minipage}
\end{figure}

Recall that, to achieve the lower bound $M^{x,\epsilon}_{\tau^\star(x)}$, 
it suffices to design an algorithm that constructs histories $h_i$ such that 
$\pi_\llm(\cdot \mid h_i) \to \pi_{\refe}^{\tau^\star(x)}(\cdot \mid x)$. 
This can be accomplished if $h_i$ consists of actions drawn from the optimal reference policy 
$\pi_{\refe}^{\tau^\star(x)}(\cdot \mid x)$. 
Such a strategy has been widely used in in-context learning, 
where a number of works establish theoretical guarantees on how many context samples are needed 
to achieve this convergence \citep{wies2023learnability, zhang2023and, li2023transformers, bai2023transformers}. 
However, in our setting, the reference policies $\pi_{\refe}$ are inaccessible at evaluation time, 
since they are only available during pretraining.

We make the following observation: although we cannot directly sample from the optimal reference policy 
$\pi_{\refe}^{\tau^\star(x)}(\cdot \mid x)$, we can instead construct $h_i$ as a \emph{sequence of optimal actions} 
$a^\star(x)$, which still ensures that $\pi_\llm(\cdot \mid h_i) \to \pi_{\refe}^{\tau^\star(x)}(\cdot \mid x)$. 
To see this, consider the history $h = (\underbrace{a^\star(x), \dots, a^\star(x)}_{k \text{ times}}, x)$. Given the fact that $\tau^\star(x)$ maximizes $\pi_{\refe}^\tau(a^\star(x) \mid x)$ and  
$\PP_{\cD}(\tau^\star(x) \mid h) \to 1$ as $k \to \infty$, under our Assumption \ref{ass:ideal}, we have
\begin{small}
\begin{align}
    \pi_{\llm}(\cdot \mid h) 
    &= \sum_{\tau \in \dist_{\refe}} \pi_{\refe}^\tau(\cdot \mid x) \, \PP_{\cD}(\tau \mid h) = \sum_{\tau \in \dist_{\refe}} \pi_{\refe}^\tau(\cdot \mid x) \,
    \frac{\pi_\refe^\tau(a^\star(x) \mid x)^k \, p_{\refe}(\tau)}
         {\sum_{\tau' \in \dist_{\refe}} \pi_\refe^{\tau'}(a^\star(x) \mid x)^k \, p_{\refe}(\tau')}\rightarrow \pi_{\refe}^{\tau^\star(x)}(\cdot \mid x). \notag
\end{align}
\end{small}

Based on this observation, we investigate a simple variant of Algorithm~\ref{alg:1}, termed the \emph{Reward-Filtered Sequential Best-of-$n$ (RF-SeqBoN)} algorithm, with two instantiations presented in Algorithms~\ref{alg:2.5} and~\ref{alg:2}. At a high level, RF-SeqBoN can be viewed as a hybrid of BoN and the rewind-and-repeat strategy \citep{kim2024guaranteed,beirami2024theoretical}. At each iteration, the algorithm refines the answer-generation distribution by appending to the input window only those answers whose reward exceeds a threshold $\gamma$, thereby biasing the history toward near-optimal solutions. A larger $\gamma$ enforces stricter filtering at the expense of efficiency, whereas in practice a good balance is often achieved with $\gamma < 1$. Furthermore, Algorithm~\ref{alg:2.5} introduces a \emph{burn-in} parameter $m$ to control the length of the input sequence $h_i$.

\begin{remark}
Our studied algorithm falls under the general selection inference (SI) framework \citep{creswell2022selection,hu2024unveiling}, where a selection module is employed to identify in-context examples. However, vanilla SI typically relies on the LLM itself, often using likelihood scores to select the most probable examples from the history. This strategy overlooks task-specific information encoded by the reward signal. In contrast, Algorithms~\ref{alg:2.5} and~\ref{alg:2} select histories based on reward scores, thereby better capturing task-relevant information.
\end{remark}

In order to make sure that our approximation of $a^\star(x)$ is good enough, we have the following assumption on the reward model. 
\begin{assumption}\label{assumption:maxref}
For any $x$, there exist a threshold $\gamma^\star(x) < 1$, a margin $\Delta(x) > 0$, and ${\tau^\star(x)\in \dist_{\refe}}$ such that for any $a \in \cA$ satisfying $r(a,x) \geq \gamma^\star(x)$, we have 
\[
      \log \pi_{\refe}^{\tau^\star(x)}(a \mid x) - \sup_{\tau \neq \tau^\star(x)} \log \pi_{\refe}^\tau(a \mid x) \geq \Delta(x).
\]
Specifically, we denote $p_{\llm}(x): = \PP_{a \sim \pi_{\llm}(\cdot\mid x)} (r(a,x)\geq \gamma^\star(x))$. 
\end{assumption}
Assumption~\ref{assumption:maxref} requires that any action whose reward exceeds the threshold $\gamma^\star(x)$ can be reliably attributed to the optimal reference policy $\tau^\star(x)$, in the sense that its generation probability under $\tau^\star(x)$ is at least $\Delta$ larger than under any other reference policy. This margin condition ensures that high-reward actions are statistically distinguishable from those generated by suboptimal policies, enabling the algorithm to identify $\tau^\star(x)$ with high confidence from a finite number of samples. This assumption can be satisfied under mild conditions (please refer to Appendix~\ref{app:condition}). When it is violated, the reward feedback loop in RF-SeqBoN can in principle amplify mis-specification and lead to inference-time reward hacking; we discuss these risks and practical mitigation strategies in Appendix~\ref{app:reward-hacking}. With this assumption, we come to present our main results.

\begin{theorem}\label{thm:reward_filter}
    Under Assumptions \ref{ass:reward} , \ref{ass:ideal} and \ref{assumption:maxref}, suppose $\epsilon < 1-\gamma^\star(x)$ and $\text{supp}(\pi^\star(\cdot \mid x)) \subseteq \{r(a,x) \geq \gamma^\star(x)\}$. Let $p_\llm(x)$ be defined as in Assumption \ref{assumption:maxref}. 
Then set 
\begin{align}
\gamma &= \gamma^\star(x),\qquad m = \frac{\log [\epsilon^{-1}p_{\refe}^{-1}(\tau^\star(x))]}{\Delta(x)}, \qquad 
\bar n := \frac{m}{p_{\llm}(x)}, \notag \\
n &= \text{poly}\log(1/\epsilon) \cdot 
\begin{cases}
    M^{x, \epsilon}_{\llm} 
    & ,\text{if } M^{x, \epsilon}_{\llm} \leq \bar n, \\[6pt]
    M^{x, \epsilon}_{\tau^\star(x)} + \sqrt{\bar n \big(M^{x, \epsilon}_{\llm} - M^{x, \epsilon}_{\tau^\star(x)}\big)} 
    & ,\text{if } M^{x, \epsilon}_{\llm} > \bar n.
\end{cases} \label{help:0}
\end{align}
 for $\pi^\star$, the output policy of Algorithm \ref{alg:2.5} is $4\epsilon$-optimal to $\pi^\star$. 
\end{theorem}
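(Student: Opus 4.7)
The plan is to instantiate Theorem~\ref{thm:general_bon} on the history sequence produced by Algorithm~\ref{alg:2.5} and argue that, after a burn-in phase, the rejection threshold $M^\epsilon_{\pi^\star(\cdot\mid x),\pi_\llm(\cdot\mid h_i)}$ collapses from $M^{x,\epsilon}_\llm$ down to (essentially) $M^{x,\epsilon}_{\tau^\star(x)}$. First I would split the $n$ queries into the burn-in phase $i\le T_{\text{burn}}$ (where $h_i=\langle x\rangle$, so $M^\epsilon_{\pi^\star,\pi_\llm(\cdot\mid h_i)}=M^{x,\epsilon}_\llm$) and the post-burn-in phase $i>T_{\text{burn}}$ (where $h_i=\bar h$ with $|\bar h|\ge m$). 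Here $T_{\text{burn}}$ is random and equals the first time that the number of accepted samples reaches $m$, so $T_{\text{burn}}$ is the $m$-th success time of a Bernoulli$(p_\llm(x))$ process. A Chernoff/negative-binomial tail gives $T_{\text{burn}}\le 2\bar n=2m/p_\llm(x)$ with probability at least $1-\epsilon$; on the complement we simply pay the trivial $\epsilon$ regret contribution.

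Next, conditional on $T_{\text{burn}}<\infty$, the accumulated history $\bar h=(x,a_{i_1},\dots,a_{i_m})$ contains $m$ actions each satisfying $r(a_{i_j},x)\ge\gamma^\star(x)$. Applying Assumption~\ref{assumption:maxref} and the Bayesian formula \eqref{eq:posterior}, I would bound the posterior as
\begin{align*}
\PP_\cD(\tau^\star(x)\mid\bar h)\;\ge\;1-\frac{|\dist_\refe|}{p_\refe(\tau^\star(x))}\,e^{-m\Delta(x)}\;\ge\;1-\epsilon
\end{align*}
by the choice of $m=\log[\epsilon^{-1}p_\refe^{-1}(\tau^\star(x))]/\Delta(x)$. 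Writing $\pi_\llm(\cdot\mid\bar h)=(1-\delta)\pi_\refe^{\tau^\star(x)}(\cdot\mid x)+\delta\,\mu$ for some $\delta\le\epsilon$ and a residual $\mu\in\Delta(\cA)$, a direct calculation shows $\pi_\llm(a\mid\bar h)\ge(1-\epsilon)\pi_\refe^{\tau^\star(x)}(a\mid x)$, hence $M^\epsilon_{\pi^\star(\cdot\mid x),\pi_\llm(\cdot\mid\bar h)}\le(1-\epsilon)^{-1}M^{x,\epsilon}_{\tau^\star(x)}\lesssim M^{x,\epsilon}_{\tau^\star(x)}$. The support condition $\mathrm{supp}(\pi^\star(\cdot\mid x))\subseteq\{r(\cdot,x)\ge\gamma^\star(x)\}$ and $\epsilon<1-\gamma^\star(x)$ ensure that the reward-filtered output $\hat a$ still competes against $\pi^\star$ up to an additive $\epsilon$.

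Plugging these per-step rejection thresholds into Theorem~\ref{thm:general_bon}, and choosing $n$ so that the exponential term is $\le\epsilon$, reduces the problem to requiring
\begin{align*}
n^2\;\gtrsim\;\log(1/\epsilon)\Big[\min(n,2\bar n)\,M^{x,\epsilon}_\llm+\max(n-2\bar n,0)\,M^{x,\epsilon}_{\tau^\star(x)}\Big].
\end{align*}
When $M^{x,\epsilon}_\llm\le\bar n$, the right-hand side is dominated by the vanilla-BoN term and $n\asymp M^{x,\epsilon}_\llm\log(1/\epsilon)$ suffices (this recovers Proposition~\ref{prop:bon} in the easy regime where burn-in may not even finish). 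When $M^{x,\epsilon}_\llm>\bar n$, burn-in completes with high probability, and rewriting the bound as $n^2\gtrsim\log(1/\epsilon)[\,nM^{x,\epsilon}_{\tau^\star(x)}+\bar n(M^{x,\epsilon}_\llm-M^{x,\epsilon}_{\tau^\star(x)})\,]$ and solving the resulting quadratic in $n$ via AM-GM yields the stated $n\lesssim\mathrm{poly}\log(1/\epsilon)\cdot[M^{x,\epsilon}_{\tau^\star(x)}+\sqrt{\bar n(M^{x,\epsilon}_\llm-M^{x,\epsilon}_{\tau^\star(x)})}]$. Summing the burn-in failure probability, the posterior error, the support/threshold slack, and the SeqBoN exponential residual gives a total regret $\le 4\epsilon$.

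The main obstacle I anticipate is the transition argument turning posterior concentration $\PP_\cD(\tau^\star(x)\mid\bar h)\ge 1-\epsilon$ into a clean bound on $M^\epsilon_{\pi^\star,\pi_\llm(\cdot\mid\bar h)}$, since the $\cE_M$-divergence is not uniformly Lipschitz in the mixing distribution: the rejection threshold can in principle blow up on rare actions where the residual component $\mu$ concentrates. Handling this requires using the fact that $\pi^\star$ is supported on high-reward actions and that Assumption~\ref{assumption:maxref} forces those actions to be well-covered by $\pi_\refe^{\tau^\star(x)}$, which is exactly what turns the pointwise mixture decomposition into a workable upper bound on $M^\epsilon_{\pi^\star,\pi_\llm(\cdot\mid\bar h)}$. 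Everything else is bookkeeping: Chernoff for $T_{\text{burn}}$, Bayes for the posterior, and a quadratic solve for the final sample complexity.
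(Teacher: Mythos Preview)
Your proposal is essentially the paper's proof: define the burn-in index $i^\star$ (your $T_{\text{burn}}$), bound its tail via a negative-binomial/Chernoff argument, use Assumption~\ref{assumption:maxref} and the Bayes posterior to get the pointwise lower bound $\pi_\llm(a\mid\bar h)\ge(1-\epsilon)\,\pi_\refe^{\tau^\star(x)}(a\mid x)$ after burn-in (this is the paper's Lemma~\ref{lemma:error}), translate that into $M^{2\epsilon}_{\pi^\star,\pi_\llm(\cdot\mid h_i)}\le M^{x,\epsilon}_{\tau^\star(x)}$, plug into Theorem~\ref{thm:general_bon}, and solve the resulting quadratic in $n$. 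One minor slip: your posterior bound carries an extraneous $|\dist_\refe|$ factor, so with the stated $m$ you only get $1-|\dist_\refe|\,\epsilon$ rather than $1-\epsilon$; the paper avoids this by summing $\sum_{\tau\neq\tau^\star}p_\refe(\tau)/p_\refe(\tau^\star)\le 1/p_\refe(\tau^\star)$ instead of bounding each term separately, which gives the clean $w(\tau^\star)\ge 1/(1+e^{-m\Delta}/p_\refe(\tau^\star))$ and makes the stated $m$ work exactly.
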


\begin{remark}
The sample complexity of Algorithm \ref{alg:2.5} has two different regimes, dependent on $M^{x, \epsilon}_{\llm}$, which represents the necessary samples to use vanilla BoN to solve $x$, and a threshold $\bar n$. When $M^{x, \epsilon}_{\llm} \leq \bar n$, our Algorithm \ref{alg:2.5} performs the same as vanilla BoN, which suggests sequential TTC only outperforms parallel BoN when the problem instance $x$ is hard to be resolved. 
When $M^{x, \epsilon}_{\llm}$ is large, the sample complexity of Algorithm \ref{alg:2.5} falls into another regime. It can be seen that the sample complexity can be shaved to $\sqrt{\bar n M^{x, \epsilon}_{\llm}}$ when the theoretical lower bound $M^{x, \epsilon}_{\tau^\star(x)} \ll M^{x, \epsilon}_{\llm}$, which suggests that our Algorithm \ref{alg:2.5} could save a lot sample complexity when there exists a `best' reference policy $\pi_\refe^{\tau^\star(x)}(\cdot \mid x)$ that finds the optimal actions efficiently. 
\end{remark}
\begin{remark}
Theorem \ref{thm:reward_filter} requires $\epsilon$ to be a small term upper bounded by $1-\gamma^\star(x)$ and the comparator policy $\pi^\star$ only distributes on the support set whose reward is large. Note that such requirements are actually mild: we are mostly interested in finding the near-optimal actions whose reward is large. We look forward to extending our result to general $\epsilon$ and $\pi^\star$ as future work.  
\end{remark}

Theorem \ref{thm:reward_filter} represents the general form of sample complexity. When we have the policy coverage assumption denoted in Definition \ref{def:sae}, we also have the following corollary that characterizes the sample complexity of Algorithm \ref{alg:2.5}.

\begin{corollary}\label{coro:1}
Following the assumption on $\pi^\star$ defined in Theorem~\ref{thm:reward_filter}, denote $C^\star_\llm(x)$ as in Proposition \ref{prop:bon} and 
denote $\kappa(x) := \max\big\{p_{\refe}(\tau^\star(x)),\, e^{-\Delta(x)} \big\}$. 
Then, for sufficiently small $\epsilon > 0$ satisfying
\begin{align}
\epsilon \leq \min\Bigg\{1-\gamma^\star(x), \;
\frac{C^\star_{\llm}(x)\, p_\llm(x)\,\Delta(x)}{\log p_\refe^{-1}(\tau^\star(x))}, \;
\frac{C^\star_{\llm}(x)\, p_\llm(x)\,\Delta(x)}{\log p_\refe^{-1}(\tau^\star(x))}\cdot \frac{\kappa^2(x)}{1 - \kappa(x)} \Bigg\}, \notag
\end{align}
Algorithm~\ref{alg:2.5} finds an $\epsilon$-optimal policy to $\pi^\star$ with sample complexity
\begin{align}
    n = \text{poly}\log(1/\epsilon)\cdot \kappa(x)\cdot \frac{C^\star_{\llm}(x)}{\epsilon}. \notag
\end{align}
\end{corollary}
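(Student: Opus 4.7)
\textbf{Plan for the proof of Corollary \ref{coro:1}.} The corollary is essentially a specialization of Theorem \ref{thm:reward_filter} that trades the abstract rejection thresholds $M^{x,\epsilon}_\llm$ and $M^{x,\epsilon}_{\tau^\star(x)}$ for the coverage quantity $C^\star_\llm(x)$ under the stated range of $\epsilon$. My plan is to (i) use the mixture structure of $\pi_\llm$ together with Assumption \ref{assumption:maxref} to relate the two rejection thresholds to a single coverage factor, (ii) verify that the three upper bounds on $\epsilon$ put us in the relevant regime of Theorem \ref{thm:reward_filter}, and (iii) simplify the two-term sample complexity in \eqref{help:0} into the advertised form $\text{poly}\log(1/\epsilon)\cdot \kappa(x) C^\star_\llm(x)/\epsilon$.

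\textbf{Step 1: Controlling $\pi_\refe^{\tau^\star(x)}$ versus $\pi_\llm$ on the support of $\pi^\star$.} Using the Bayesian decomposition from \eqref{eq:posterior} together with Assumption \ref{assumption:maxref}, for every $a$ with $r(a,x)\ge \gamma^\star(x)$ I would bound
\begin{align}
\pi_\llm(a\mid x)
&= p_\refe(\tau^\star(x))\pi_\refe^{\tau^\star(x)}(a\mid x)+\sum_{\tau\neq \tau^\star(x)} p_\refe(\tau)\pi_\refe^\tau(a\mid x) \notag\\
&\le \bigl[p_\refe(\tau^\star(x))+\bigl(1-p_\refe(\tau^\star(x))\bigr)e^{-\Delta(x)}\bigr]\pi_\refe^{\tau^\star(x)}(a\mid x)
\;\le\; 2\kappa(x)\,\pi_\refe^{\tau^\star(x)}(a\mid x).\notag
\end{align}
Since $\text{supp}(\pi^\star(\cdot\mid x))\subseteq\{r(a,x)\ge \gamma^\star(x)\}$, the $\cE_M$-divergence directly satisfies $\cE_{M}(\pi^\star,\pi_\refe^{\tau^\star(x)})\le \cE_{M/(2\kappa(x))}(\pi^\star,\pi_\llm)$, which gives $M^{x,\epsilon}_{\tau^\star(x)}\le 2\kappa(x)\,M^{x,\epsilon}_\llm$. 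Combined with the coverage upper bound from Definition \ref{def:cEM}, I obtain
\[
M^{x,\epsilon}_\llm \le \frac{C^\star_\llm(x)}{\epsilon},\qquad
M^{x,\epsilon}_{\tau^\star(x)} \le \frac{2\kappa(x)\,C^\star_\llm(x)}{\epsilon}.
\]

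\textbf{Step 2: Placing ourselves in the ``hard'' regime and bounding the two terms.} The first condition on $\epsilon$ is exactly the hypothesis $\epsilon<1-\gamma^\star(x)$ required by Theorem \ref{thm:reward_filter}. The second condition implies $C^\star_\llm(x)/\epsilon \gtrsim \log p_\refe^{-1}(\tau^\star(x))/(p_\llm(x)\Delta(x))$, which is $\bar n$ up to the $\log\epsilon^{-1}$ piece that I absorb into $\text{poly}\log(1/\epsilon)$; this puts us (via the lower bound that Theorem \ref{thm:reward_filter} implicitly requires) in the branch $M^{x,\epsilon}_\llm>\bar n$, so the sample complexity reads
\[
n \;\lesssim\; \text{poly}\log(1/\epsilon)\cdot\Bigl[M^{x,\epsilon}_{\tau^\star(x)}+\sqrt{\bar n\bigl(M^{x,\epsilon}_\llm-M^{x,\epsilon}_{\tau^\star(x)}\bigr)}\Bigr].
\]
For the first term, Step 1 already gives the desired bound $2\kappa(x)\,C^\star_\llm(x)/\epsilon$. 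For the second term, I use $M^{x,\epsilon}_\llm-M^{x,\epsilon}_{\tau^\star(x)}\le (1-\kappa(x))M^{x,\epsilon}_\llm\le (1-\kappa(x))C^\star_\llm(x)/\epsilon$ (obtained by combining the two-sided inequalities $p_\refe(\tau^\star(x))M^{x,\epsilon}_\llm\le M^{x,\epsilon}_{\tau^\star(x)}\le 2\kappa(x)M^{x,\epsilon}_\llm$ and taking the tighter of the available lower bounds on $M^{x,\epsilon}_{\tau^\star(x)}$), so
\[
\sqrt{\bar n\bigl(M^{x,\epsilon}_\llm-M^{x,\epsilon}_{\tau^\star(x)}\bigr)}
\;\le\; \sqrt{\bar n\,(1-\kappa(x))\,C^\star_\llm(x)/\epsilon}.
\]
The third condition on $\epsilon$ is exactly what is needed to push this upper bound below $\kappa(x)C^\star_\llm(x)/\epsilon$: plugging in $\bar n= \log[\epsilon^{-1}p_\refe^{-1}(\tau^\star(x))]/(\Delta(x)p_\llm(x))$, the inequality $\bar n(1-\kappa(x))\le \kappa^2(x)C^\star_\llm(x)/\epsilon$ rearranges into the third bound on $\epsilon$ (up to $\log\epsilon^{-1}$ factors absorbed into $\text{poly}\log$).

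\textbf{Step 3: Assembly and the main obstacle.} Adding the two contributions yields $n\lesssim \text{poly}\log(1/\epsilon)\cdot\kappa(x)C^\star_\llm(x)/\epsilon$, as claimed. I expect the main technical difficulty to be the tight bookkeeping in Step 2: the $\kappa^2(x)/(1-\kappa(x))$ factor in the third condition on $\epsilon$ arises from balancing the Young-style bound on $\sqrt{\bar n\cdot\Delta M}$ against the target $\kappa(x)C^\star_\llm(x)/\epsilon$, and it is sensitive to whether $\kappa(x)$ is realized by $p_\refe(\tau^\star(x))$ or by $e^{-\Delta(x)}$. I would therefore handle the two cases of the $\max$ in $\kappa(x)$ separately, using $p_\refe(\tau^\star(x))M^{x,\epsilon}_\llm\le M^{x,\epsilon}_{\tau^\star(x)}$ in one branch and the two-sided $\alpha$-bound in the other, and finally verify that, on the stated range of $\epsilon$, all $\log\epsilon^{-1}$ terms arising from $\bar n$ and from the $\log(1/\epsilon)$ factor in Theorem \ref{thm:reward_filter} combine into the single $\text{poly}\log(1/\epsilon)$ prefactor in the corollary.
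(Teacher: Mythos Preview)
Your approach is essentially the paper's: both use the mixture decomposition \eqref{eq:posterior} together with Assumption~\ref{assumption:maxref} to show that on $\operatorname{supp}(\pi^\star)$ one has $\pi_\llm(a\mid x)\lesssim \kappa(x)\,\pi_\refe^{\tau^\star(x)}(a\mid x)$, hence the rejection threshold (equivalently, the coverage) for $\pi_\refe^{\tau^\star(x)}$ is $O(\kappa(x))$ times that for $\pi_\llm$; the paper phrases Step~1 at the coverage level, deriving $C(\pi^\star,\pi_\refe^{\tau^\star(x)})\lesssim\kappa(x)\,C^\star_\llm(x)$, which is the same pointwise inequality you use.

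The one substantive difference is that the paper does not plug into the two-branch formula \eqref{help:0} but goes back to the single intermediate inequality \eqref{help:cont} from the proof of Theorem~\ref{thm:reward_filter}. Substituting $M^{x,\epsilon}_\llm\le C^\star_\llm(x)/\epsilon$ and $M^{x,\epsilon}_{\tau^\star(x)}\lesssim \kappa(x)\,C^\star_\llm(x)/\epsilon$ directly into \eqref{help:cont} gives a branch-free quadratic in $n$ whose solution is the advertised bound once $\epsilon$ is in the stated range. This sidesteps your Step~2 claim that the second condition on $\epsilon$ ``puts us in the branch $M^{x,\epsilon}_\llm>\bar n$'': that implication is not valid as written, since you only obtain $C^\star_\llm(x)/\epsilon\gtrsim\bar n$, and the inequality $M^{x,\epsilon}_\llm\le C^\star_\llm(x)/\epsilon$ points the wrong way. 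Routing through \eqref{help:cont} also removes the need for your Step~3 case analysis on which argument of the $\max$ realizes $\kappa(x)$.
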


For Algorithm \ref{alg:2} that removes the burn-in stage, we still have that asynmpotically, under the coverage assumption, Algorithm \ref{alg:2} achieves the same sample complexity to Algorithm \ref{alg:2.5}.

\begin{corollary}\label{coro:adaptive}
    Following the same setup in Corollary \ref{coro:1}, denote $d(x):=(1+ e^{-\Delta(x)}/p_{\refe}(\tau^\star(x)))$. Then, for sufficiently small $\epsilon > 0$ satisfying
\begin{align}
\epsilon \leq \min\Bigg\{1-\gamma^\star(x), 
\frac{C^\star_{\llm}(x)\PP_{a \sim \pi_{\refe}^{\tau^\star(x)}(\cdot \mid x)}(r(a,x) \geq \gamma^\star(x))\Delta(x)}{\log p_\refe^{-1}(\tau^\star(x))}\cdot \frac{\kappa(x)}{d^2(x)} \Bigg\}, \notag
\end{align}
Algorithm~\ref{alg:2.5} finds an $\epsilon$-optimal policy to $\pi^\star$ with the same sample complexity as Corollary~\ref{coro:1}. 
\end{corollary}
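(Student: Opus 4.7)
The plan is to reduce the analysis of Algorithm~\ref{alg:2} to that of Algorithm~\ref{alg:2.5} by showing that, without an explicit burn-in, the first $m$ accepted high-reward samples play the role of an implicit burn-in that pushes the posterior $\PP_\cD(\tau\mid \bar h)$ close to a point mass on $\tau^\star(x)$. Once this happens, the remaining iterations behave as BoN against the target reference policy $\pi_\refe^{\tau^\star(x)}(\cdot\mid x)$, and we can recycle the post-burn-in analysis already used for Corollary~\ref{coro:1}. I will therefore split the run of Algorithm~\ref{alg:2} into (i) an implicit burn-in until $|\bar h|\ge m$ and (ii) a stationary phase, and bound each separately before combining them under the stated condition on $\epsilon$.

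For the implicit burn-in, the key observation is a uniform lower bound on the per-step high-reward probability after the very first acceptance. Before any accepted sample, the proposal is $\pi_\llm(\cdot\mid x)$ and the high-reward rate is $p_\llm(x)$, so the waiting time for the first acceptance is geometric with that rate. As soon as one high-reward action $a$ enters $\bar h$, Assumption~\ref{assumption:maxref} gives $\pi_\refe^\tau(a\mid x)\le e^{-\Delta(x)}\pi_\refe^{\tau^\star(x)}(a\mid x)$ for $\tau\ne\tau^\star(x)$, and a single Bayes update yields
\begin{align}
\PP_\cD(\tau^\star(x)\mid \bar h)\;\ge\;\frac{p_\refe(\tau^\star(x))}{p_\refe(\tau^\star(x))+(1-p_\refe(\tau^\star(x)))e^{-\Delta(x)}}\;\ge\;\frac{1}{d(x)}.\notag
\end{align}
Monotonicity of Bayesian updating on high-reward samples preserves this lower bound thereafter, so for every subsequent iteration the high-reward probability is at least $d(x)^{-1}\cdot\PP_{a\sim\pi_\refe^{\tau^\star(x)}(\cdot\mid x)}(r(a,x)\ge\gamma^\star(x))$. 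A multiplicative Chernoff bound then shows that with high probability $|\bar h|$ reaches $m$ within $O\bigl(m\,d(x)/\PP_{a\sim\pi_\refe^{\tau^\star(x)}(\cdot\mid x)}(r(a,x)\ge\gamma^\star(x))\bigr)$ iterations; this is precisely the quantity that, after substituting the prescribed $m=\log[\epsilon^{-1}p_\refe^{-1}(\tau^\star(x))]/\Delta(x)$, is forced to be at most $\kappa(x)C^\star_\llm(x)/\epsilon$ by the small-$\epsilon$ condition in the corollary, with $d^2(x)$ absorbing the extra slack coming from the first acceptance.

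For the stationary phase, once $|\bar h|\ge m$, iterating the same Bayes bound gives $\PP_\cD(\tau\ne\tau^\star(x)\mid\bar h)\lesssim p_\refe^{-1}(\tau^\star(x))e^{-m\Delta(x)}\le\epsilon$ by the choice of $m$, hence $\pi_\llm(\cdot\mid\bar h)$ is within $\epsilon$ total variation of $\pi_\refe^{\tau^\star(x)}(\cdot\mid x)$. From this point on, exactly the argument used inside the proof of Theorem~\ref{thm:reward_filter} applies with $M=M^{x,\epsilon}_{\tau^\star(x)}$, and the coverage inequality from Definition~\ref{def:cEM} converts this into the bound $\text{poly}\log(1/\epsilon)\cdot\kappa(x)C^\star_\llm(x)/\epsilon$ through the same manipulations that produce Corollary~\ref{coro:1}.

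Combining the two phases gives a total sample complexity that is the sum of the implicit burn-in cost and the post-burn-in BoN cost; verifying that the stated condition on $\epsilon$ forces the burn-in cost to be dominated by the BoN cost finishes the argument. The main obstacle I anticipate is the non-stationarity during implicit burn-in: strictly speaking, the proposal distribution at every step depends on the current $\bar h$ (and on the subset of reference policies still charged by the posterior), so the sequence of acceptance indicators is not i.i.d. The uniform lower bound $1/d(x)$ on $\PP_\cD(\tau^\star(x)\mid\bar h)$ after the first acceptance is the device that tames this non-stationarity, reducing the problem to a stochastically dominated geometric process and explaining the appearance of the $d^2(x)$ factor in the admissibility condition on $\epsilon$.
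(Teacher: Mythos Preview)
Your plan is essentially the paper's own proof: split at the first index $i^\star$ where $|\bar h|=m$, use the posterior lower bound $\PP_\cD(\tau^\star(x)\mid\bar h)\ge 1/d(x)$ (this is exactly Lemma~\ref{lemma:error} with $k\ge1$) to uniformly lower-bound the per-step acceptance rate by $d(x)^{-1}\PP_{a\sim\pi_\refe^{\tau^\star(x)}}(r(a,x)\ge\gamma^\star(x))$, control $i^\star$ by concentration, and then run the post-burn-in analysis. The paper uses Bernstein on a sum of $m$ geometric variables where you invoke Chernoff/domination, and the paper feeds \emph{all} $n$ samples into Theorem~\ref{thm:general_bon} (with coverage inflated by $d(x)$ for $i\le i^\star$) where you propose to treat only the post-burn-in samples as BoN---both are valid.

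One point to correct: your rationalization of the $d^2(x)$ factor is wrong. The first-acceptance rate is already $\ge p_\refe(\tau^\star(x))\PP_{a\sim\pi_\refe^{\tau^\star(x)}}(r(a,x)\ge\gamma^\star(x))\ge d(x)^{-1}\PP_{a\sim\pi_\refe^{\tau^\star(x)}}(r(a,x)\ge\gamma^\star(x))$, so there is no extra slack there. In the paper, one $d(x)$ comes from the burn-in length $\bar n\asymp m\,d(x)/\PP_{a\sim\pi_\refe^{\tau^\star(x)}}(r(a,x)\ge\gamma^\star(x))$ and the second from the coverage bound $C(\pi^\star,\pi_\llm(\cdot\mid h_i))\le d(x)\,\kappa(x)\,C^\star_\llm(x)$ used for $i\le i^\star$ inside Theorem~\ref{thm:general_bon}; the product $\bar n\cdot d(x)$ is what produces $d^2(x)$ in the admissibility condition. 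Your variant that discards the burn-in samples from the BoN accounting would in fact only incur a single $d(x)$ and thus prove the corollary under a weaker constraint on $\epsilon$---which is fine, just not for the reason you gave.
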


\begin{remark}
Corollaries~\ref{coro:1} and \ref{coro:adaptive} show that, compared with the parallel TTC with sample complexity  $\tfrac{C^\star_{\llm}(x)}{\epsilon}$, 
    Algorithm~\ref{alg:2.5} achieves a strictly lower sample complexity since $\kappa(x)<1$. Notably, the improvement gap will be large when the probability mass of the optimal reference policy $p_{\refe}(\tau^\star(x))$ is small, yet the gap $\Delta(x)$ between the optimal and suboptimal reference policies is large. 
\end{remark}
\begin{remark}
Corollaries~\ref{coro:1} and \ref{coro:adaptive} also reveal that $\epsilon$ must be sufficiently small. 
    The admissible range of $\epsilon$ is controlled jointly by the gap $\Delta(x)$ and the probability $p_\llm(x)$ that the LLM outputs actions with reward above the threshold. These findings are consistent with empirical observations: sequential TTC methods tend to outperform parallel TTC methods when the optimal trajectory is relatively rare but significantly more rewarding than alternatives \citep{snell2024scaling}.
\end{remark}

\section{Experiments}
\label{sec:experiments}

\paragraph{Models and Datasets.}
Following recent work on math-reasoning evaluation under test-time scaling~\citep{guha2025openthoughts,wang2025m1,agarwal2025first,otth2025maximizing}, we evaluate our approach on five benchmarks with verifiable answers, each targeting different aspects of reasoning. The first is \textbf{MATH500}, a 500-problem subset of the MATH dataset that preserves the competition-style short-answer format and topic distribution \citep{hendrycks2021measuring,lightman2023let}; performance is measured by exact-match accuracy on the final numeric or algebraic expression. The second is \textbf{GPQA–Diamond}, a 198-question high-agreement split of GPQA spanning biology, physics, and chemistry. This benchmark is designed to be “Google-proof,” with expert-authored and validated multiple-choice questions \citep{rein2024gpqa}; evaluation is based on standard multiple-choice accuracy.  We also consider competition-style math benchmarks: \textbf{AMC\textquoteright 23}, a 40-problem subset of the 2023 American Mathematics Competitions assessing non-routine problem-solving in algebra, geometry, number theory, and combinatorics \citep{maa2023amc}; \textbf{AIME\textquoteright 24}, consisting of the 30 problems from the 2024 AIME I/II, each requiring a three-digit answer in the range \([000,999]\) \citep{maa2024aime}; and \textbf{AIME\textquoteright 25}, analogous to AIME\textquoteright 24 but using the 2025 AIME I/II set \citep{maa2025aime}; all of the competition-style math benchmarks are evaluated using exact-match accuracy on the final integer answer. 

For foundation models, we use \textbf{Qwen3–4B–Instruct} and \textbf{Qwen3–0.6B–Thinking} in our primary experiments, recently released open-weight instruction-tuned models which exhibits competitive capabilities at small scales, including instruction following, multilingual usage, and reasoning, and are broadly available under permissive licenses \citep{yang2025qwen25technicalreport,yang2025qwen3}. Reward models (RMs) map candidate solutions to scalar signals. We use a Process Reward Model (PRM): \textbf{Llama3.1–8B–PRM–Deepseek–Data}, where we only use the aggregation reward for each answer we generate once, a scalar reward $r\in[0,1]$ \citep{rlhflow2024rlhf}. 

\paragraph{Baselines and Implementation.}
We compare \textbf{RF-SeqBoN} implemented following Algorithm \ref{alg:2} against several variants of Algorithm \ref{alg:1} to suggest the validity of the reward-filtering scheme. In detail, we compare Algorithm \ref{alg:2} with \textbf{vanilla BoN (BoN)}, which refers to Algorithm \ref{alg:1} with $h_i = \langle x \rangle$; \textbf{Pure sequential BoN (PureSeq)}: A single trajectory with a fixed revision schedule and no cross-trajectory selection, which refers to Algorithm \ref{alg:1} with $h_i = \langle x, a_1,\dots, a_{i-1} \rangle$. In practice, due to the maximum token limit of LLMs, for PureSeq and RF-SeqBoN we restrict the number of appended answers in $h_i$ to a fixed budget, maintaining the context as a sliding window of most recent generations in a first-in-first-out (FIFO) manner. Details are deferred to Appendix \ref{app:hyperparameters}. 

\begin{figure}[t]
    \centering
    \includegraphics[width=0.95\linewidth]{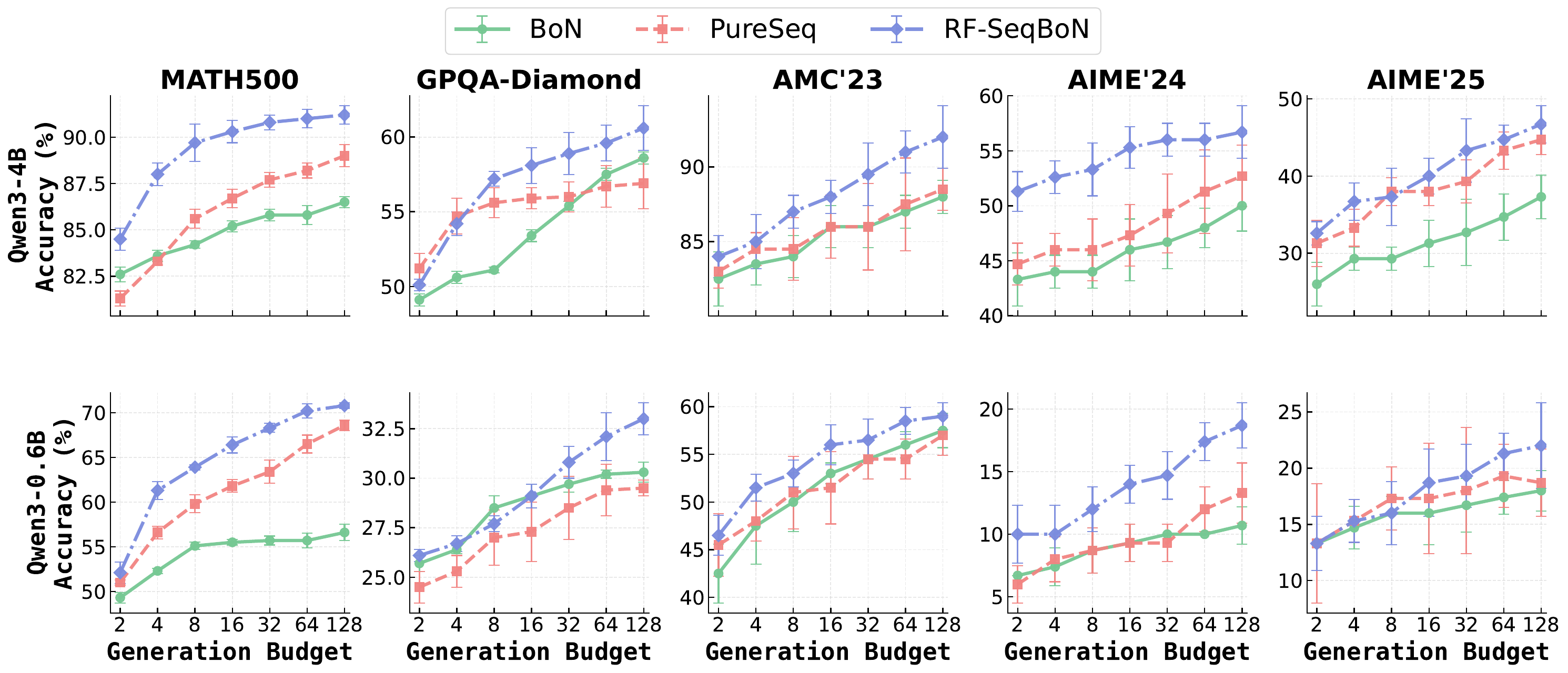}
    \caption{Main results with Qwen3-4B-Instruct \textbf{(top)} and Qwen3-0.6B-Thinking \textbf{(bottom)} foundation models of values with generation budget $N$. Each column corresponds to one benchmark dataset. The points and error bars show the mean and standard deviation across five repeated experiments, respectively.}
    \vskip -10pt
    \label{fig:main}
\end{figure}

\paragraph{Results. }
We characterize accuracy–compute trade-offs across datasets and models. Accuracy is measured via exact-match (for MATH500, AMC\textquoteright 23, AIME\textquoteright 24 and AIME\textquoteright 25) and multiple-choice accuracy (for GPQA–Diamond). Figure \ref{fig:main} summarizes accuracy as a function of total test-time budget \(N\) for BoN, PureSeq, and our method RF-SeqBoN across the three benchmarks. We have the following observations:
\begin{itemize}[leftmargin = *]
    \item RF-SeqBoN, which conditions the input on a history filtered by reward scores, consistently improves over vanilla BoN. This validates that maintaining a reward-biased history provides more informative context and leads to higher final accuracy. PureSeq also leverages history, but it keeps all past generations without filtering. As a result, it accumulates both useful and noisy answers, which makes it less stable than RF-SeqBoN on GPQA-Diamond and AIME'25 with larger budgets.  
    \item As \(N\) increases, we observe two regimes. On benchmarks like MATH500 and AIME'24, the advantage of RF-SeqBoN persists or even grows with larger budgets. In contrast, on GPQA-Diamond, AMC'23 and AIME'25, the baselines gradually catch up at high budgets, suggesting that all methods are approaching the intrinsic limit of the backbone model rather than continuing to benefit from more trajectories. 
    \item With the smaller model (Qwen3–0.6B-Thinking), absolute performance is lower, but RF-SeqBoN still consistently outperforms both BoN and PureSeq, suggesting that the reward-filtered history mechanism is robust across model scales.  
\end{itemize}

\begin{wrapfigure}{r}{0.5\textwidth}
    \centering
    \vspace{-7mm} 
    \includegraphics[width=0.48\textwidth]{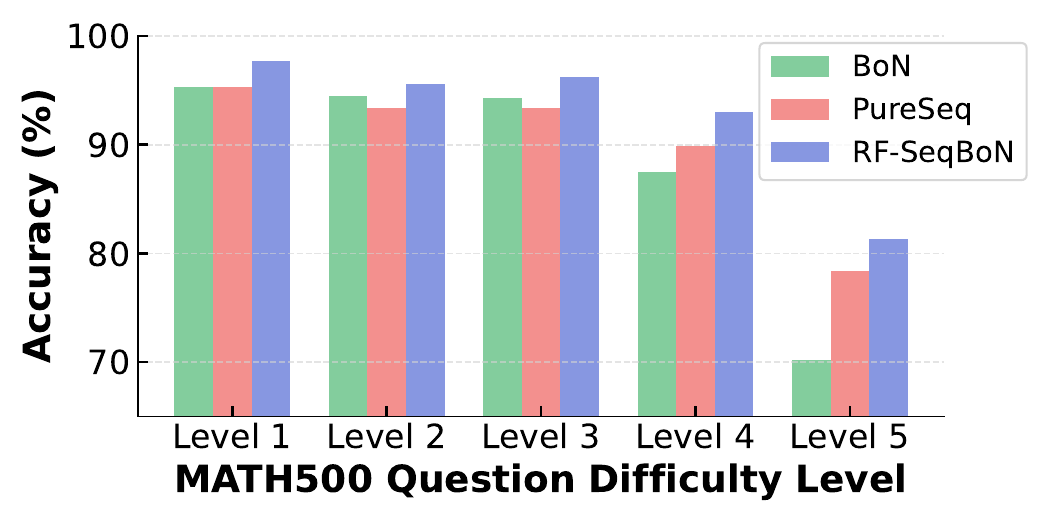}
    \caption{Breakdown of MATH500 performance when generation budget $N=128$ across five difficulty levels.}
    \label{fig:case_study_difficulty}
\end{wrapfigure}

\paragraph{Case Study. } We conduct a breakdown analysis of MATH500 by difficulty levels. We follow the level information brought by \citet{hendrycks2021measuring} ranging from 1 (easiest) to 5 (most challenging). 
The results of generation budget $N=128$ are presented in Figure~\ref{fig:case_study_difficulty}, using the Qwen3-4B-Instruct foundation model, consistent with the main results reported in Figure~\ref{fig:main}. We observe a clear trend of decreasing accuracy as the problem difficulty increases. 
In particular, PureSeq performs slightly worse than BoN on easier subsets (levels 1--3), but achieves substantial gains on more difficult subsets (levels 4--5). 
Across all difficulty levels, RF-SeqBoN consistently outperforms both baselines, demonstrating its robustness in handling problems of varying complexity.

\subsection{Ablation Studies}
\paragraph{Choice of Hyperparameter $\gamma$. }
\label{sec:ablation_gamma}
To quantify the effect of the reward–filter threshold $\gamma$, we evaluate RF-SeqBoN with Qwen3-4B-Instruct on MATH500 while varying $\gamma$. Figure~\ref{fig:ablation_combined}(a) reports accuracies across generation budgets. We observe that the choice of $\gamma$ indeed affects the performance of RF-SeqBoN. If $\gamma$ is too low, noisy trajectories persist and hinder self-refinement; if $\gamma$ is too high (e.g., $\gamma \approx 1$), most candidates are rejected and RF-SeqBoN degenerates toward BoN, forfeiting the benefits of using history for refinement. Meanwhile, our method is not particularly sensitive to $\gamma$ within a reasonable range: the performance for $\gamma = 0.93, 0.95, 0.97$ remains nearly identical and optimal, while only $\gamma = 0.9$ and $\gamma = 0.99$ exhibit noticeable drops. Additional statistics on how many filtered answers remain in the LLM context for each value of $\gamma$ at $N=128$ on MATH-500 are reported in Appendix~\ref{app:gamma-stats}.

\paragraph{Choice of Reward Models. }
We also adapt an outcome reward model (ORM), \textbf{AceMath–7B–RM}, to assess the robustness of reward signals \citep{acemath2024}. Compared with the PRM, this provides a weaker reward signal since intermediate reasoning steps are not evaluated. Figure~\ref{fig:ablation_combined}(b) reports accuracies across generation budgets under this ORM configuration. We find that RF-SeqBoN still remains the best among the baselines, validating the effectiveness of the reward-filtering strategy under different reward models. However, unlike with PRM, our adapted ORM exhibits a minor performance drop when the budget is large. We suspect this phenomenon may be attributed to \textit{reward over-optimization} \citep{gao2023scaling, frick2024evaluate, huang2025best}, and how to incorporate the reward-filtering idea to mitigate reward over-optimization remains an interesting future direction.

\begin{figure*}[t!]
    \centering

    \begin{minipage}{0.32\linewidth}
        \centering
        \includegraphics[width=\linewidth]{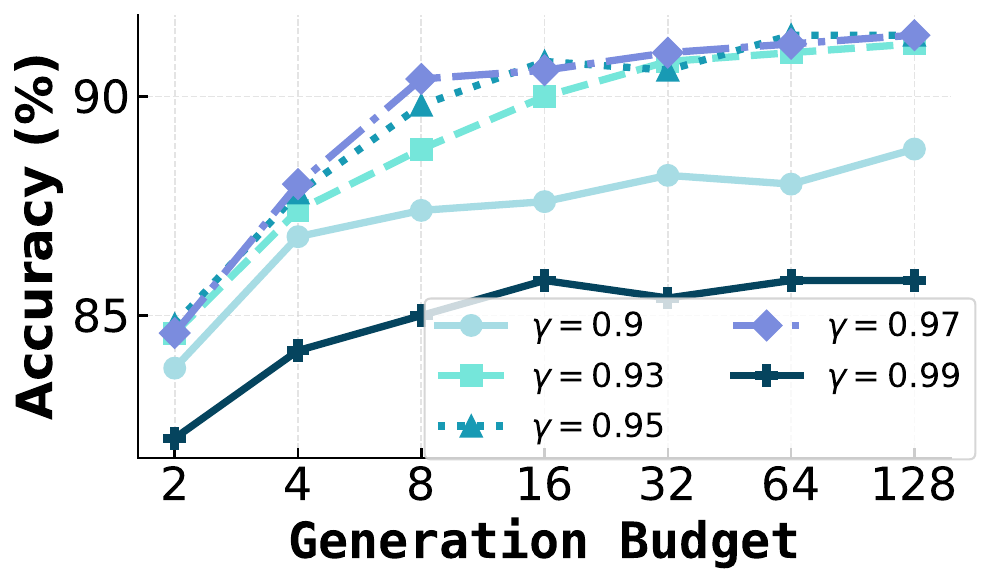}
        \caption*{(a) Ablation on $ \gamma $.}
        \label{fig:ablation_gamma}
    \end{minipage}
    \hfill
    \begin{minipage}{0.32\linewidth}
        \centering
        \includegraphics[width=\linewidth]{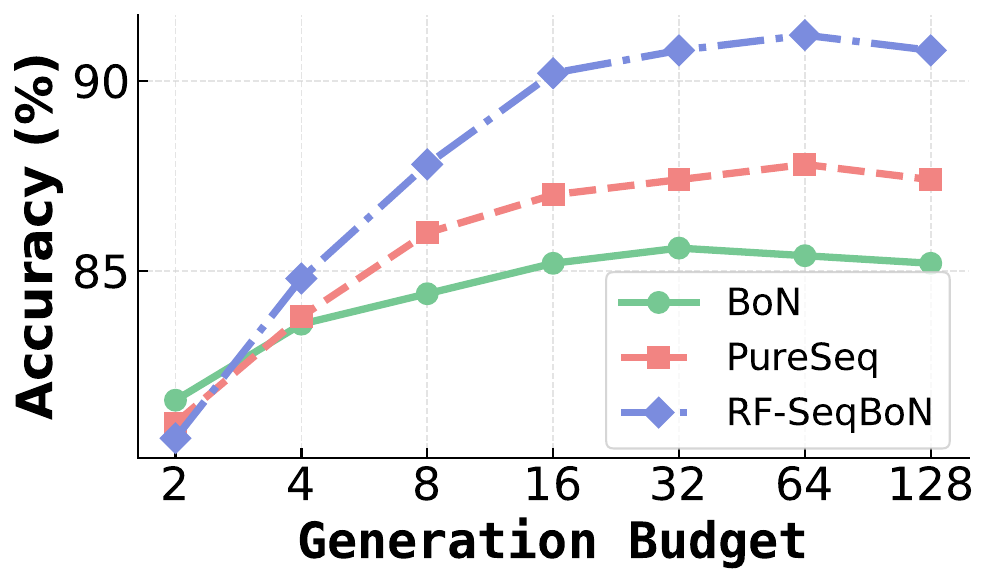}
        \caption*{(b) Ablation with ORM.}
        \label{fig:ablation_orm}
    \end{minipage}
    \hfill
    \begin{minipage}{0.32\linewidth}
        \centering
        \includegraphics[width=\linewidth]{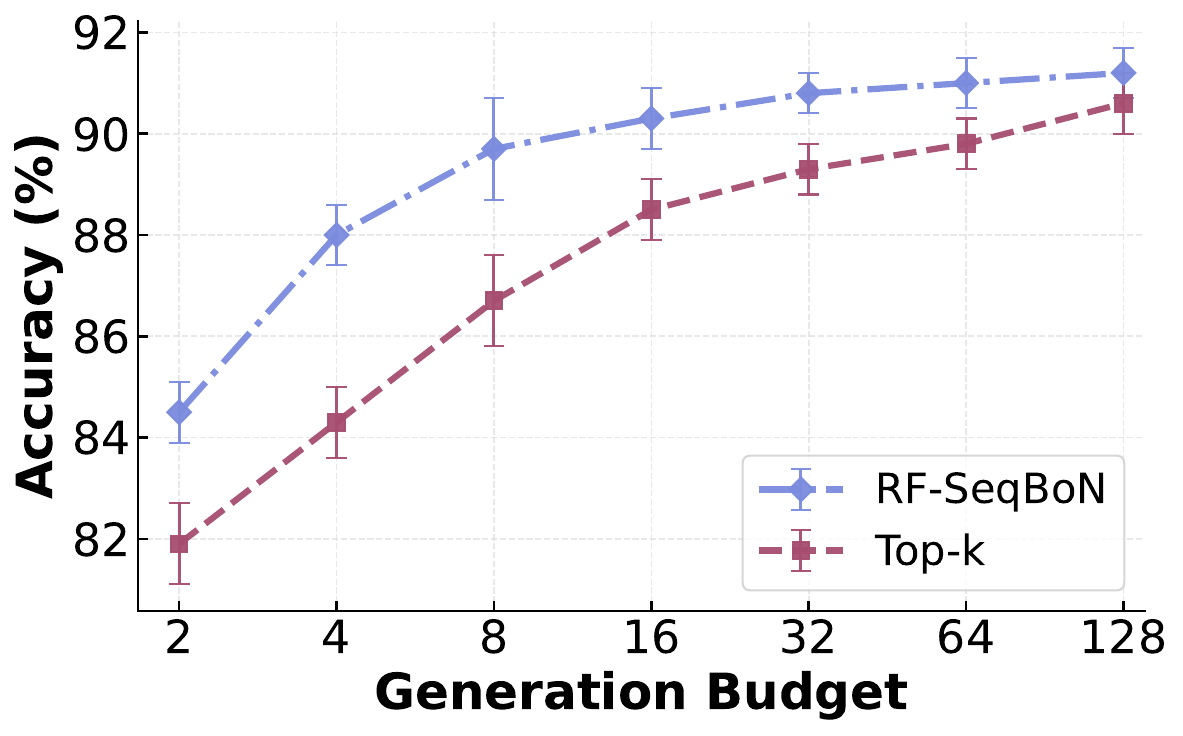}
        \caption*{(c) Ablation with Top-k.}
        \label{fig:ablation_topk}
    \end{minipage}

    \caption{Ablation study results on MATH500 using Qwen3-4B-Instruct as the foundation model.}
    \label{fig:ablation_combined}
\end{figure*}

\paragraph{Comparison Against Top-k Algorithm. } 
In addition to the BoN and PureSeq baselines, we also conducted ablation study to compare RF-SeqBoN against a \textbf{Top-k} refinement strategy, which does not use an explicit reward threshold. Both methods can be viewed as instantiations of the SeqBoN meta-algorithm (Algorithm~\ref{alg:1}): at iteration $i$, the algorithm samples a new answer from $\pi_\mathrm{LLM}\left( \cdot | h_i \right)$  and then updates the history based on the observed reward. For Top-k, we maintain all previously generated answers and their reward scores for a given question and construct $h_i$ by appending the top-$k$ responses ranked by reward, subject to the same sliding-window truncation used for PureSeq and RF-SeqBoN. To ensure a fair comparison, we set $k=3$, matching the \texttt{history\_budget=3} used for all sequential TTC algorithms, so that the two methods differ only in whether history is defined by a relative ranking (Top-k) or by crossing a fixed reward threshold $\gamma$ (RF-SeqBoN). The experiment is conducted on MATH500 dataset with Qwen3–4B–Instruct backbone model and the  Llama3.1–8B–PRM–Deepseek–Data reward model, using the same configuration as above. 

Figure~\ref{fig:ablation_combined}(c) reports accuracy as a function of generation budget $N$ for RF-SeqBoN and Top-k. Across all budgets, RF-SeqBoN consistently outperforms Top-k, with the margin being most pronounced in the low- and mid-budget regimes (small $N$) and gradually shrinking as $N$ increases. This pattern aligns with the intuition that, when the generation budget is limited, Top-k is prone to retaining several “locally good” but in fact incorrect trajectories: on difficult problems where genuinely high-reward responses are rare, the top-$k$ pool may consist entirely of low-quality yet relatively better generations. Once such spurious but high-ranked solutions are repeatedly recycled into the context, the method effectively degenerates toward PureSeq, reinforcing an erroneous reasoning trace instead of exploring alternative solution paths. By contrast, RF-SeqBoN discards \emph{all} responses whose reward falls below $\gamma$, preventing these degenerate feedback loops and ensuring that only truly high-quality trajectories are used as in-context exemplars. This selective reuse appears particularly beneficial when $N$ is small, precisely the setting where every history slot must be used most judiciously.

\section{Conclusion}
\label{sec:conclusion}
We presented a principled framework for TTC under a mixture-of-policies model and derived fundamental limits on the budget required for near-optimal performance. Our analysis shows that parallel TTC strategies such as BoN are inherently suboptimal. To address this, we proposed \emph{Reward-Filtered Sequential Best-of-$n$ (RF-SeqBoN)}, which selectively incorporates high-reward generations and provably refines the distribution toward the optimal policy. Experiments across diverse benchmarks confirm that RF-SeqBoN offers both stronger theoretical guarantees and substantial empirical gains in budget efficiency.

\appendix 



\section{Proof in Section \ref{sec:general}}
In this section, we provide proofs in Section \ref{sec:general}.

\subsection{Proof of Proposition \ref{prop:bon}}
\begin{proof}[Proof of Proposition \ref{prop:bon}]
    Using Lemma F.1 in \citet{huang2025best}, for any $M > 0$, the regret of vanilla BoN can be upper bounded by
    \begin{align*}
        \text{Regret}(\text{BoN}, x,\pi^*) \le \cE_M(\pi^*, \pi_{\llm}) + \exp \Big(-\frac{n}{M}\cdot\big(1-\cE_M(\pi^*,\pi_{\llm})\big)\Big).
    \end{align*}
    When taking $M = M_{\llm}^{x,\epsilon}$, we have $\cE_M(\pi^*,\pi_{\llm}) \le \epsilon$. Therefore, taking $n = M_{\llm}^{x,\epsilon} \log (1/\epsilon)$ will lead to $\text{Regret}(\text{BoN}, x,\pi^*) \le 2\epsilon$. On the other hand, for any parallel TTC algorithm $A$, consider the following problem instance, which is the same as that in the proof of Theorem \ref{thm:lower}. Let the reward be defined as: 
    \begin{align}
r(a): = \left\{ \begin{aligned}
    1,\ & \pi^\star(a)\geq \pi_A(a),\notag \\
    -1,\  &  \pi^\star(a)< \pi_A(a).\notag \\
\end{aligned}\right.
\end{align}
Therefore, the regret of algorithm $A$ can be represented as the TV distance between $\pi^*$
 and $\pi_A$, i.e.,
\begin{align}
\text{Regret}(A; x, \pi^\star) &= \EE_{a \sim \pi^\star(\cdot)} r(a) - \EE_{a \sim \pi_A(\cdot )} r(a)\notag \\
& = \sum_{a\in \cA} r(a)(\pi^\star(a) - \pi_A(a))\notag \\
& = \sum_{a\in \cA} |\pi^\star(a) - \pi_A(a)|.\label{eq:aa01}
\end{align}
Using Theorem D.6 in \citet{huang2025best}, when the number of samples $n \le M_{\llm}^{x,\epsilon}$, the TV distance between $\pi^*$ and $\pi_A$ can be lower bounded by $\epsilon$.
\begin{align}
\label{eq:aa02}
    \text{TV}\big(\pi^*(\cdot|x), \pi(\cdot|x)\big) \ge \epsilon.
\end{align}
Combining \eqref{eq:aa01} and \eqref{eq:aa02}, we have
\begin{align*}
    \text{Regret}(A; x, \pi^\star) \ge \epsilon.
\end{align*}
This completes the proof of Proposition \ref{prop:bon}.
\end{proof}

\subsection{Proof of Theorem \ref{thm:lower}}
We omit $x$ in the following proof. Denote $\tau$ such that $\tau$ is the index that takes the maximum action with highest probability, i.e., 
\begin{align}
    \tau^\star(x) \in \arg\max_{\tau\in \dist_{\refe}} \log \pi_{\refe}^\tau(a^\star(x) \mid x).\label{eq:aaa}
    \end{align}
Then fix any $M$, denote 
\begin{align}
\cA_M:=\{a: \pi^\star(a)\geq M\cdot \pi^\tau(a)\}. 
\end{align}

\begin{proof}[Proof of Theorem \ref{thm:lower}]
For the first result, we set the reward $r$ as 
\begin{align}
r(a): = \left\{ \begin{aligned}
    1,\ & \pi^\star(a)\geq \pi_A(a),\notag \\
    -1,\  &  \pi^\star(a)< \pi_A(a),\notag \\
\end{aligned}\right.
\end{align}
Then we have 
\begin{align}
\text{Regret}(A; x, \pi^\star) &= \EE_{a \sim \pi^\star(\cdot)} r(a) - \EE_{a \sim \pi_A(\cdot )} r(a)\notag \\
& = \sum_{a\in \cA} r(a)(\pi^\star(a) - \pi_A(a))\notag \\
& = \sum_{a\in \cA} |\pi^\star(a) - \pi_A(a)|.\notag
\end{align}
In this example, the regret of algorithm $A$ with respect to $\pi^\star$ is equivalent to the total variation distance between $\pi^\star$ and $\pi_A$.
Next, if $n < M^{x, \epsilon}_{\tau^\star(x)}$, we have 
\begin{align}
\sum_{a\in \cA} |\pi^\star(a) - \pi_A(a)| &\geq \sum_{a \in \cA_M}| \pi^\star(a) - \pi_A(a)|\notag \\
& \geq \sum_{a \in \cA_M}\pi^\star(a) - \EE_{a_i \sim p_i}\PP(a \in \{a_i\})\notag\\
& \geq \sum_{a \in \cA_M}\pi^\star(a) - n\cdot \pi_\refe^\tau(a \mid x)\notag\\
& = \mathcal{E}_{n}(\pi^\star, \pi_\refe^\tau(\cdot \mid x)),
\end{align}
where the second inequality we use the fact that $\pi_A$ can only output $a$ when $a \in \{a_i\}$, $p_i(\cdot) = \pi_{\llm}(\cdot|h_i)$.
The third one holds due to the fact that for any $a \in \cA_M\subseteq \text{supp}(\pi^\star)$, we have $\pi^\tau_{\text{ref}}(a)$ achieves the maximum probability over $\tau \in \dist_{\refe}$, i.e., $\pi^\tau_{\text{ref}}(a) \ge p_i(a)$. Therefore, recall that $n < M^{x, \epsilon}_{\tau^\star(x)}$, and $M^{x, \epsilon}_{\tau^\star(x)}$ is the smallest $M$ such that $\mathcal{E}_{M}(\pi^\star, \pi_\refe^\tau(\cdot \mid x)) \le \epsilon$. We have $\text{Regret}(A; x, \pi^\star) \geq \epsilon$. 

For the second result, we show that $M^{x, \epsilon}_{\tau^\star(x)} \leq M^{x, \epsilon}_\llm$. By definition of the $\cE_M$-divergence (Definition~\ref{def:cEM}), we have 
\begin{align}
\cE_{M^{x, \epsilon}_\llm}(\pi^\star(\cdot \mid x), \pi_\refe^{\tau^\star(x)}(\cdot \mid x))&: = \sum_{a \in \cA} \max \{0, \pi^\star(\cdot \mid x) - M^{x, \epsilon}_\llm\pi_\refe^{\tau^\star(x)}(\cdot \mid x)\}\notag \\
& \leq \sum_{a \in \cA} \max \{0, \pi^\star(\cdot \mid x) - M^{x, \epsilon}_\llm \pi_\llm(\cdot \mid x)\}\notag \\
& \leq \epsilon,\notag
\end{align}
where the first inequality holds due to \eqref{eq:aaa}. The second inequality holds due to the definition of $M^{x, \epsilon}_\llm$.
Using again $M^{x, \epsilon}_{\tau^\star(x)}$ is the smallest $M$ such that $\mathcal{E}_{M}(\pi^\star(\cdot \mid x), \pi_\refe^\tau(\cdot \mid x)) \le \epsilon$, we draw the final conclusion. 
\end{proof}

\section{Proof in Section \ref{sec:newalg}}

We first prove Theorem \ref{thm:general_bon}. In order to prove that, we first propose a generalized version of rejection

\subsection{Proof of Theorem \ref{thm:general_bon}}
\label{sec:pf_bon}

\begin{algorithm}[t!]
\caption{Adaptive Rejection Sampling ($\texttt{RejectionSampling}_{n}((w_i,p_i,M_i)_{i=1}^n, x)$)}\label{alg:rej}
\begin{algorithmic}[1]
\REQUIRE Prompt $x$, sampling budgets $n$, adaptive sampling policies $(p_i)_{i=1}^n$, importance weights $(w_i)_{i=1}^n$, truncation level $(M_i)_{i=1}^n$.
\STATE Sample $(a_1, \dots, a_{n}, a_{n+1})$ satisfying $a_i \sim p_i$, where $p_i(\cdot)$ can depend on $a_1,\dots, a_{i-1}$. 
\FOR{$i = 1\dots n$}
    \STATE Sample a Bernoulli random variable $\xi_i$ such that 
    $\mathbb{P}(\xi_i = 1 \mid a_i) = q_i, q_i:=\min\left\{ \frac{w_i(a_i)}{M_i}, 1 \right\}$, where $w_i, M_i$ can depend on $a_1,\dots, a_{i-1}$. 
    \IF{$\xi_i = 1$}
        \RETURN $a_i$
    \ENDIF
\ENDFOR
\RETURN  $ a_{n+1}$
\end{algorithmic}
\end{algorithm}

First, we consider the generalized version of rejection sampling in Algorithm \ref{alg:rej} as an auxiliary policy. Specifically, we set the importance weights $w_i(a) := \pi^{\star}(a)/p_i(a)$. The result is formulated in the following theorem.
\begin{theorem}\label{thm:general_rej}
For any comparator policy $\pi^{\star}$, let $\pi_{R}$ denote the distribution induced by the adaptive rejection sampling algorithm $\texttt{RejectionSampling}_{n}((w_i,p_i,M_i)_{i=1}^n, x)$ (Algorithm \ref{alg:rej}) with importance weights $w_i(a) := \pi^{\star}(a)/p_i(a)$. Then the total variation distance between $\pi^{\star}$ and $\pi_R$ can be upper bounded by:
\[
\mathrm{TV}(\pi^\star,\pi_R) \;\le\; \sum_{i=1}^{n} \alpha_i \mathbb{E}_{a_1,\dots,a_n}[\mathcal{E}_i] \;+\; 2\alpha_{n+1},
\]
where $\cE_i := \cE_{M_i}(\pi^{\star}, p_i)$, and $(\alpha_i)_{i=1}^{n+1}$ is a sequence of positive weights satisfying $\sum_{i=1}^{n+1} \alpha_i = 1$. 
Furthermore, we have
\begin{align*}    \alpha_{n+1} \le \EE_{a_1,\dots,a_N} \bigg[\exp\bigg(-\sum_{i=1}^n \frac{1}{M_i}\bigg)\bigg].
\end{align*}
If we set 
$M_i = M^\epsilon_{\pi^\star(\cdot, x), \pi_\llm(\cdot \mid h_i)}$, we have $\cE_i \le \epsilon$ due to Definition \ref{def:cEM}. Then the $TV$ distance can be bounded by 
\begin{align}
\mathrm{TV}(\pi^\star,\pi_R) \leq \epsilon + \EE_{a_1,\dots,a_n} \bigg[\exp\bigg(-\sum_{i=1}^n \frac{1}{M^\epsilon_{\pi^\star(\cdot \mid x), \pi_\llm(\cdot \mid h_i)}}\bigg)\bigg].
\end{align}
\end{theorem}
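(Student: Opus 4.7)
My proof would proceed by a stopping-time mixture decomposition. Let $T\in\{1,\ldots,n+1\}$ denote the first index at which $\xi_i=1$, with $T=n+1$ meaning no acceptance ever occurs, and set $\alpha_i:=\PP(T=i)$, so that $\sum_{i=1}^{n+1}\alpha_i=1$ by construction. The output of Algorithm~\ref{alg:rej} then has law $\pi_R=\sum_{i=1}^{n+1}\alpha_i\,\mu_i$, where $\mu_i(\cdot):=\PP(a_T\in\cdot\mid T=i)$. By joint convexity of total variation,
\[
\mathrm{TV}(\pi^\star,\pi_R)\;\le\;\sum_{i=1}^{n+1}\alpha_i\,\mathrm{TV}(\pi^\star,\mu_i),
\]
and the task reduces to bounding the per-step conditional TV for $i\le n$ and the fallback probability $\alpha_{n+1}$ separately.

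\textbf{Per-step TV bound.} For each $i\le n$ I would condition on the history $\mathcal{H}_{i-1}=(a_1,\xi_1,\ldots,a_{i-1},\xi_{i-1})$, under which $p_i$ and $M_i$ are deterministic, and compute the conditional law of $a_i$ given $T=i$. Substituting $w_i(a)=\pi^\star(a)/p_i(a)$ into the rejection rule gives
\[
\mu_i(a\mid\mathcal{H}_{i-1})\;=\;\frac{\min\{p_i(a),\,\pi^\star(a)/M_i\}}{(1-\cE_i)/M_i},
\]
where $\cE_i:=\cE_{M_i}(\pi^\star,p_i)$; the normalizing constant follows from the identities $\min(u,v)=v-\max(0,v-u)$ and $\sum_a\max(0,\pi^\star(a)/M_i-p_i(a))=\cE_i/M_i$. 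Splitting $\cA$ at the rejection boundary $\{a:\pi^\star(a)>M_i\,p_i(a)\}$ and carrying out a short algebraic comparison yields $\mathrm{TV}(\pi^\star,\mu_i(\cdot\mid\mathcal{H}_{i-1}))\le\cE_i$; marginalizing over the history then produces the claimed $\alpha_i\,\EE[\cE_i]$ contribution. For the fallback index $i=n+1$ the distribution $\mu_{n+1}$ is whatever the algorithm outputs when all $\xi_i=0$, and combining the trivial bound $\mathrm{TV}(\pi^\star,\mu_{n+1})\le 1$ with one extra triangle inequality (accounting for both $\pi^\star$ and $\mu_{n+1}$ on this fallback event) yields the factor of $2$ in front of $\alpha_{n+1}$.

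\textbf{Fallback probability and main obstacle.} The same conditional calculation gives the per-step acceptance probability exactly as $\EE_{a_i\sim p_i}[\min\{w_i(a_i)/M_i,1\}]=(1-\cE_i)/M_i$, so by the tower property and the elementary bound $1-x\le e^{-x}$,
\[
\alpha_{n+1}\;=\;\EE\Big[\prod_{i=1}^n\bigl(1-(1-\cE_i)/M_i\bigr)\Big]\;\le\;\EE\Big[\exp\Big(-\sum_{i=1}^n(1-\cE_i)/M_i\Big)\Big],
\]
which collapses to the stated $\EE[\exp(-\sum_i 1/M_i)]$ once $\cE_i\le\epsilon$ under the choice $M_i=M^\epsilon_{\pi^\star,\,p_i}$, absorbing a benign $e^{\epsilon\sum 1/M_i}$ factor. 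The main obstacle is precisely the adaptivity of the triple $(p_i,M_i,w_i)$: unlike classical non-adaptive rejection sampling, one cannot factor the joint density over samples naively, and each $\cE_i$ is itself a history-dependent random variable. The resolution is to carry out every algebraic identity above conditionally on $\mathcal{H}_{i-1}$ and invoke the tower property only at the end, which threads the adaptivity through cleanly while preserving the familiar non-adaptive step-wise bounds; the final substitution of the theorem's choice of $M_i$ then packages the two pieces into the displayed $\epsilon+\EE[\exp(-\sum 1/M_i)]$ estimate.
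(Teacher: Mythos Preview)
Your proposal is correct and follows essentially the same route as the paper: both decompose $\pi_R$ via the acceptance stopping time, identify the conditional accepted law as $\pi_i/A_i$ with $A_i=1-\mathcal{E}_i$, bound each per-step TV contribution by $O(\mathcal{E}_i)$, and control the fallback probability via the product $\prod_i\bigl(1-(1-\mathcal{E}_i)/M_i\bigr)\le\exp\bigl(-\sum_i 1/M_i\bigr)$. The only cosmetic difference is that you invoke joint convexity of TV to separate the mixture, whereas the paper carries out the analogous $\ell_1$ splitting by direct computation on $\sum_a|\pi^\star(a)-\pi_R(a)|$.
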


\begin{proof}
    For simplicity, we fix $x$ and ignore all the $x$ dependence. We denote $p_i(\cdot): = \pi_\llm(\cdot \mid h_i)$. 
Following the proof of Lemma D.4 in \citet{huang2025best}, we consider the truncated pseudo-distribution
\[
\pi_i(a) := \min\{\pi^\star(a),\, M_i p_i(a)\},
\]
and $A_i := \sum_a \pi_i(a) \le 1$ is the total mass of $\pi_i$. Moreover, we have
\begin{align*}
    A_i &= \sum_a \min\{\pi^\star(a),\, M_i p_i(a)\}\\
    & = \sum_{a \notin \cA_i} \pi^{\star}(a) + \sum_{a\in \cA_i} M_i p_i(a)\\
    & = 1 - \sum_{a \in \cA_i} \Big[\pi^{\star}(a) - M_i p_i(a)\Big].
\end{align*}
where $\cA_{i} = \{a \in \cA: \pi^*(a) > M_i p_i(a)\}$. Recalling the definition of $\cE_{M_i}(\pi^{\star},p_i)$ in Definition \ref{def:cEM}, we have

\[\quad 
\mathcal{E}_i := \cE_{M_i}(\pi^{\star}, p_i) = 1 - A_i.\]

Following the proof of Lemma D.4 in \citet{huang2025best}, we have
\begin{align}
\notag
\PP_{a' \sim p_i}(a' = a \mid \xi_i = 1) = \frac{\pi_i(a)}{A_i}. 
\end{align}

From now on, we begin to prove the original statement. 
For $a_i \sim p_i$, let 
$q_i := \min\Bigl\{ \tfrac{\pi^\star(a_i)}{M_i p_i(a_i)},\,1\Bigr\}$ be the probability of acceptance at step $i$.
We define
\[
\alpha_i := \mathbb{E}_{a_1,\dots,a_i}\big[(1-q_1)\cdots(1-q_{i-1})q_i\big], 
\quad 
\alpha_{n+1} := \mathbb{E}_{a_1,\dots,a_n}\big[(1-q_1)\cdots(1-q_n)\big],
\]
so that $\sum_{i=1}^{n+1}\alpha_i = 1$. 
Let $i^\star$ denote the index at which Algorithm~\ref{alg:rej} stops, and let $\hat a$ denote the output.  

Hence,
\begin{align}
\notag
\pi_R(a) &= \PP(\hat a = a)\\ \label{eq:qw0001}
&= \underbrace{\sum_{i=1}^N \PP(\hat a = a \mid i^\star = i)\PP(i^\star = i)}_{T_1}
   + \underbrace{\PP(\hat a = a \mid i^\star = n+1)\PP(i^\star = n+1)}_{T_2}. 
\end{align}

For the first term $T_1$, we compute
\begin{align}
T_1 
&= \sum_{i=1}^n \EE_{a_1,\dots,a_{i-1}}
   \PP_{a' \sim p_i}(a'=a \mid \xi_i=1)\,
   \EE_{a_1,\dots,a_n}\PP(i^\star=i \mid a_1,\dots,a_n) \notag \\\notag
&= \sum_{i=1}^n \EE_{a_1,\dots,a_{i-1}} \frac{\pi_i(a)}{A_i}
   \cdot \underbrace{\EE_{a_1,\dots,a_i}(1-q_1)\cdots(1-q_{i-1})q_i}_{\alpha_i} \\\notag
&= \sum_{i=1}^n \alpha_i \EE_{a_1,\dots,a_n}\frac{\pi_i(a)}{A_i} \\\label{eq:qw0002}
&= \EE_{a_1,\dots,a_n}\bigg[\sum_{i=1}^n \alpha_i \frac{\pi_i(a)}{A_i}\bigg].
\end{align}

For the second term $T_2$, we have
\begin{align}
\notag
T_2 
&\le \PP(i^\star = n+1) \\\notag
&= \EE_{a_1,\dots,a_n}\PP(i^\star=n+1 \mid a_1,\dots,a_n) \\\label{eq:qw0003}
&= \underbrace{\EE_{a_1,\dots,a_n}(1-q_1)\cdots(1-q_n)}_{\alpha_{n+1}}.
\end{align}
Substituting \eqref{eq:qw0002} and \eqref{eq:qw0003} into \eqref{eq:qw0001}, we have
\begin{align*}
    |\pi^{\star}(a)-\pi_R(a)| &\le |\pi^{\star}(a) - T_1| + T_2\\
    & \le \bigg|\sum_{i=1}^{n+1} \alpha_i \pi^{\star}(a) - \EE_{a_1,\dots,a_n}\bigg[\sum_{i=1}^n \alpha_i \frac{\pi_i(a)}{A_i}\bigg]\bigg| + \alpha_{n+1}\\
    &\le \bigg|\sum_{i=1}^{n} \alpha_i \pi^{\star}(y) - \EE_{a_1,\dots,a_n}\bigg[\sum_{i=1}^n \alpha_i \frac{\pi_i(a)}{A_i}\bigg]\bigg| + 2\alpha_{n+1},
\end{align*}
where the second inequality uses $\sum_{i=1}^{n+1}\alpha_i = 1$. 
Now consider the total variation distance:
\begin{align}
\text{TV}(\pi^\star,\pi_R) 
&:= \sum_a \big|\pi^\star(a) - \pi_R(a)\big| \notag \\\notag
&\le \sum_a \bigg|\sum_{i=1}^n \alpha_i \pi^\star(a) 
        - \EE_{a_1,\dots,a_n}\bigg[\sum_{i=1}^n \alpha_i \frac{\pi_i(y)}{A_i}\bigg]\bigg|
        + 2\alpha_{n+1} \\\notag
        & \le \sum_a \bigg|\EE_{a_1,\dots,a_n}\bigg[ \sum_{i=1}^n \alpha_i \Big(\pi^\star(a) 
        - \tfrac{\pi_i(a)}{A_i}\Big)\bigg]\bigg|
        + 2\alpha_{n+1}\\\label{eq:qw0004}
&\le \sum_{i=1}^n \alpha_i \EE_{a_1,\dots,a_n}\bigg[\sum_a\Big|\pi^\star(a) - \tfrac{\pi_i(a)}{A_i}\Big|\bigg] 
    + 2\alpha_{n+1},
\end{align}
where the last inequality holds due to the triangle inequality.
Finally, for each $i$, we bound
\begin{align}
\notag
\sum_a \Big|\pi^\star(a) - \tfrac{\pi_i(a)}{A_i}\Big|
&\le \sum_a \big|\pi^\star(a) - \pi_i(a)\big| 
   + \sum_a \Big|\pi_i(a) - \tfrac{\pi_i(a)}{A_i}\Big| \\\notag
&= \sum_{a\in \cA_i} (\pi^\star(a) - M_ip_i(a)) 
   + \sum_a \Big(\tfrac{\pi_i(a)}{A_i} - \pi_i(a)\Big) \\\notag
&= \mathcal{E}_i + (1-A_i)\\\label{eq:qw0005}
& = 2\cE_i,
\end{align}
where $\cA_i = \{a \in \cA: \pi^*(a) > M_i p_i(a)\}$. The first equation holds due to $\pi_i(a) := \min\{\pi^\star(a),\, M_i p_i(a)\}$ and $A_i \le 1$. The second equation holds due to $\sum_a \pi_i(a) = A_i$. The last equation holds due to $\cE_i = 1-A_i$.
Substituting \eqref{eq:qw0005} into \eqref{eq:qw0004} yields
\begin{align}
\text{TV}(\pi^\star,\pi_R) \le \sum_{i=1}^n \alpha_i \mathbb{E}_{a_1,\dots,a_n}[\mathcal{E}_i] + 2\alpha_{n+1}.
\end{align}
Moreover, we have
\begin{align}
\label{eq:qw0006}
   \alpha_{n+1} = \EE_{a_1,\dots,a_n}\bigg[\prod_{i=1}^n \bigg(1 - \frac{A_i}{M_i}\bigg)\bigg] =\EE_{a_1,\dots,a_n}\bigg[\prod_{i=1}^n \bigg(1 - \frac{1-\mathcal{E}_i}{M_i}\bigg)\bigg].
\end{align}
Finally, setting $M_i =M^\epsilon_{\pi^\star(\cdot \mid x), \pi_\llm(\cdot \mid h_i)}$ we have $\cE_i \leq \epsilon$, and 
\begin{align}
\notag
\alpha_{n+1} &= \EE_{a_1,\dots,a_n}\bigg[\prod_{i=1}^n \bigg(1 - \frac{1-\mathcal{E}_i}{M_i}\bigg)\bigg]\\\label{eq:qw0007} 
&\leq \EE_{a_1,\dots,a_n} \bigg[\exp\bigg(-\sum_{i=1}^n \frac{1}{M_i}\bigg)\bigg]\notag \\
& \leq \EE_{a_1,\dots,a_n} \bigg[\exp\bigg(- \frac{n^2}{\sum_{i=1}^n M^\epsilon_{\pi^\star(\cdot \mid x), \pi_\llm(\cdot \mid h_i)}}\bigg)\bigg]
\end{align}
where we use $1-x \le \exp(-x), \forall x \in \RR$ and Definition \ref{def:cEM}. Combining \eqref{eq:qw0006} and \eqref{eq:qw0007}, we complete the proof of Theorem \ref{thm:general_rej}.
\end{proof}

Using this result, now we can prove Theorem \ref{thm:general_bon}.
\begin{proof}[Proof of Theorem \ref{thm:general_bon}]
let $\pi_R$ be the auxilliary distribution defined in Theorem~\ref{thm:general_rej}. To begin with, we have 
\begin{align}
&\EE_{a \sim \pi^\star}[r(a, x)] - \EE_{a \sim \pi_{\text{abon}}}[r(a, x)] \notag \\
&=\underbrace{\EE_{a \sim \pi^\star}[r(a, x)] - \EE_{a \sim \pi_R}[r(a, x)]}_{I_1} + \underbrace{\EE_{a \sim \pi_R}[r(a, x)] - \EE_{a \sim \pi_{\text{abon}}}[r(a, x)]}_{I_2}\notag .
\end{align}
For $I_1$, we have
\begin{align*}
    I_1 \le \mathrm{TV}(\pi^\star,\pi_R).
\end{align*}
For $I_2$, we have
\begin{align*}
    I_2 = \EE_{a_1,\dots, a_n}\big[\EE_{a \sim \pi_R \mid a_1,\dots, a_n} [r(a,x)] - \EE_{a \sim \pi_{\text{abon}} \mid a_1,\dots, a_n} [r(a,x)]\big].
\end{align*}
Note that for any fixed $a_1, \dots, a_n$, Algorithm~\ref{alg:1} will always return $a_i$ which achieves the maximum reward. Thus, $I_2$ is non-positive. Finally, we use Theorem \ref{thm:general_rej} to bound $\mathrm{TV}(\pi^\star,\pi_R)$ and thus we complete the proof of Theorem \ref{thm:general_bon}.  
\end{proof}

\subsection{Proof of Proposition \ref{prop:smooth}}\label{app:condition}

\begin{proposition}\label{prop:smooth}
    Suppose the following conditions hold for any $x$: there exists some embedding $\phi(a,x)$ such that
    \begin{itemize}[leftmargin = *]
        \item The reward function $r(\cdot, x)$ is smooth at $a^\star(x)$ w.r.t. $\phi(a,x)$.
        \item For any ${\tau\in \dist_{\refe}}$, the reference policy $\log\pi_{\refe}^\tau(\cdot \mid x)$ is $L$-smooth w.r.t. $\phi(a,x)$.
        \item There exists $\Delta(x) > 0$ such that $\log\pi_{\refe}^{\tau^\star(x)}(a^\star(x) \mid x) - \sup_{\tau \neq \tau^\star(x)} \log\pi_{\refe}^\tau(a^\star(x) \mid x) > \Delta(x).$
    \end{itemize}
    Then there exists a threshold $\gamma^\star(x) < 1$ such that Assumption~\ref{assumption:maxref} holds.
\end{proposition}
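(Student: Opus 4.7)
The plan is to upgrade the margin condition at the single point $a^\star(x)$ (the third hypothesis) to a uniform margin over the entire high-reward level set $\{a : r(a,x) \geq \gamma^\star(x)\}$, by using the two smoothness hypotheses to control how $\log \pi_{\refe}^\tau(\cdot \mid x)$ varies in the $\phi$-embedding near $a^\star(x)$, and then trading the embedding radius against the gap $\Delta(x)$.

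The first step is to combine the smoothness of $r(\cdot,x)$ at $a^\star(x)$ with uniqueness of the maximizer (Assumption~\ref{ass:reward}, together with $r(a^\star(x),x)=1$) to produce a modulus $\rho(\gamma) \to 0$ as $\gamma \uparrow 1$ such that
\begin{align}
r(a,x) \geq \gamma \;\Longrightarrow\; \|\phi(a,x) - \phi(a^\star(x),x)\| \leq \rho(\gamma). \notag
\end{align}
Intuitively, smoothness of $r$ in $\phi$-coordinates ensures that the $\gamma$-superlevel sets, viewed through the embedding map $\phi$, form a shrinking family of neighborhoods of $\phi(a^\star(x),x)$; for a finite $\cA$ this is automatic, while for general $\cA$ one reads off the quantitative rate $\rho(\gamma)=O(\sqrt{1-\gamma})$ from a Taylor expansion at the unique maximizer.

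The second step is to apply the $L$-smoothness of $\log \pi_{\refe}^\tau(\cdot\mid x)$ in $\phi$, uniformly over $\tau \in \dist_\refe$, to turn that embedding-radius bound into a value bound: for every $a$ in the high-reward set,
\begin{align}
\bigl|\log \pi_{\refe}^\tau(a \mid x) - \log \pi_{\refe}^\tau(a^\star(x) \mid x)\bigr| \leq L\, \rho(\gamma), \qquad \forall\, \tau \in \dist_\refe. \notag
\end{align}
Combined with the third hypothesis and the triangle inequality, this yields, for every such $a$,
\begin{align}
\log \pi_{\refe}^{\tau^\star(x)}(a\mid x) - \sup_{\tau \neq \tau^\star(x)} \log \pi_{\refe}^\tau(a \mid x) \;\geq\; \Delta(x) - 2L\,\rho(\gamma). \notag
\end{align}
I would then pick $\gamma^\star(x) \in (0,1)$ close enough to $1$ that $\rho(\gamma^\star(x)) \leq \Delta(x)/(4L)$, so that the right-hand side is at least $\Delta(x)/2$. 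This establishes Assumption~\ref{assumption:maxref} with the same maximizer $\tau^\star(x)$ and a (slightly shrunk) margin $\Delta(x)/2$.

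The main obstacle I anticipate is pinning down the quantitative level-set shrinkage in the first step: the bare statement "$r$ is smooth at $a^\star(x)$" does not by itself imply $\rho(\gamma)\to 0$, since smoothness only controls how reward can \emph{drop} away from $a^\star$, not how rapidly it must drop. The honest version of the argument therefore needs to read the uniqueness of $a^\star(x)$ as a \emph{strict} (e.g., locally quadratic) maximum in $\phi$-coordinates, or appeal to compactness plus continuity of $r$ in $\phi$ together with uniqueness. Once such a nondegeneracy is in place, the remaining $L$-smoothness transfer and the triangle-inequality bookkeeping are routine.
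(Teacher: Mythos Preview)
Your proposal is correct and follows essentially the same route as the paper: choose $\gamma^\star(x)$ so that the superlevel set $\{r(a,x)\ge\gamma^\star(x)\}$ lies within $\phi$-radius $\Delta(x)/(4L)$ of $a^\star(x)$, then apply the $L$-smoothness of each $\log\pi_{\refe}^\tau$ together with the pointwise margin at $a^\star(x)$ to obtain a uniform margin of $\Delta(x)/2$. The obstacle you flag in the first step is real, and the paper does not address it either---it simply asserts the implication $r(a,x)>\gamma^\star(x)\Rightarrow\|\phi(a,x)-\phi(a^\star(x),x)\|\le\Delta(x)/(4L)$ as a direct consequence of ``smoothness,'' without the nondegeneracy or uniqueness discussion you provide.
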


Proposition~\ref{prop:smooth} provides a sufficient condition for Assumption~\ref{assumption:maxref}. 
The smoothness of $r$ and $\pi_{\refe}$ ensures that in a neighborhood of $a^\star(x)$, both the reward value and the probability gap between $\tau^\star(x)$ and other reference policies vary continuously. 
Since the margin at $a^\star(x)$ is strictly positive ($>\Delta(x)$), this gap persists in a small neighborhood around $a^\star(x)$. 
Consequently, there exists a reward threshold $\gamma^\star(x) < 1$ such that any action with $r(a,x) \ge \gamma^\star(x)$ lies within this neighborhood and inherits the same probability margin. 
Intuitively, if the optimal action is well separated from all others in terms of generation probability, and both the reward and policy distributions change smoothly, this separation extends to all sufficiently high-reward actions.

\begin{proof}[Proof of Proposition \ref{prop:smooth}]
    Since $r$ is smooth on the point $a^\star(x)$, then there must exists $\gamma^*(x)$ such that
\begin{align}
r(a^\star(x), x) - r(a, x)<1-\gamma^*(x) \Rightarrow \|\phi(a,x) - \phi(a^\star(x), x)\| \leq \Delta/(4L).
\end{align}
Then, due to smoothness assumption on $\pi_{\refe}$, we have
\begin{align}
&\log\pi_{\refe}^{\tau^\star(x)}(a \mid x) - \sup_{\tau \neq \tau^\star(x)}\log\pi_{\refe}^\tau(a \mid x) \notag \\[4pt]
& \geq \log \pi_{\refe}^{\tau^\star(x)}(a^\star(x) \mid x) 
   - L\|\phi(a,x) - \phi(a^\star(x),x)\| \notag \\ 
&\quad - \sup_{\tau \neq \tau^\star(x)}\Big(\log \pi_{\refe}^\tau(a^\star(x) \mid x) \Big) 
   - L\|\phi(a,x) - \phi(a^\star(x),x)\| \notag \\[4pt]
&>\Delta - 2L\|\phi(a,x) - \phi(a^\star(x),x)\|\notag\\
&\geq \Delta/2.\notag
\end{align}
\end{proof}

\subsection{Proof of Theorem \ref{thm:reward_filter}}
We first need the following lemma modified from Lemma C.3 in \citet{hu2024unveiling}:
\begin{lemma}\label{lemma:error}
For any $i \in [n]$, let $k$ denote the length of $h_i$, i.e., the number of actions in $h_i$. Then we have 
\begin{align}
\frac{\pi_{\llm}(a\mid h_i)}{\pi_{\refe}^{\tau^\star(x)}(a \mid x)}  > \frac{1}{ 1+ e^{-k\Delta}/p_{\refe}(\tau^\star(x))}.\notag
\end{align}
\end{lemma}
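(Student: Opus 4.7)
The plan is to rely on the Bayesian decomposition of $\pi_{\llm}$ already displayed in equation~\eqref{eq:posterior}, namely
\[
\pi_{\llm}(a\mid h_i) \;=\; \sum_{\tau \in \dist_{\refe}} \pi_{\refe}^\tau(a\mid x)\, \PP_{\cD}(\tau \mid h_i),
\]
and to obtain the claim by lower-bounding the posterior weight placed on $\tau^\star(x)$. First I would drop all non-optimal terms in the sum to get the crude but clean bound $\pi_{\llm}(a\mid h_i) \ge \pi_{\refe}^{\tau^\star(x)}(a\mid x)\cdot \PP_{\cD}(\tau^\star(x)\mid h_i)$, reducing the lemma to showing
\[
\PP_{\cD}(\tau^\star(x)\mid h_i) \;>\; \frac{1}{1 + e^{-k\Delta}/p_{\refe}(\tau^\star(x))}.
\]

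Next I would expand this posterior via Bayes' rule. Because in Algorithm~\ref{alg:2.5}/\ref{alg:2} the history $h_i$ only contains actions $a_1,\dots,a_k$ that passed the reward filter $\gamma = \gamma^\star(x)$, Assumption~\ref{assumption:maxref} applies to every one of them: for each $j$ and each $\tau\neq \tau^\star(x)$,
\[
\pi_{\refe}^{\tau}(a_j\mid x) \;\le\; e^{-\Delta}\, \pi_{\refe}^{\tau^\star(x)}(a_j\mid x).
\]
Multiplying over the $k$ filtered actions and writing $L := \prod_{j=1}^k \pi_{\refe}^{\tau^\star(x)}(a_j\mid x)$, the likelihood of any $\tau\neq \tau^\star(x)$ satisfies $\prod_j \pi_{\refe}^{\tau}(a_j\mid x) \le e^{-k\Delta} L$. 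Summing against $p_{\refe}$ and using $\sum_{\tau\neq \tau^\star(x)} p_{\refe}(\tau) \le 1$ yields a denominator bound
\[
\sum_{\tau'\in \dist_{\refe}} p_{\refe}(\tau') \prod_{j=1}^k \pi_{\refe}^{\tau'}(a_j\mid x) \;\le\; p_{\refe}(\tau^\star(x))\,L \;+\; e^{-k\Delta}\, L.
\]

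Dividing the numerator $p_{\refe}(\tau^\star(x))\, L$ by this bound gives exactly $1/(1 + e^{-k\Delta}/p_{\refe}(\tau^\star(x)))$, so $\PP_{\cD}(\tau^\star(x)\mid h_i)$ is at least this quantity. Combining with the earlier reduction yields the stated inequality, and the strictness comes from the fact that $\pi_{\llm}(a\mid h_i)$ also picks up positive contributions from $\tau\neq \tau^\star(x)$ (as long as $\pi_{\refe}^\tau(a\mid x)>0$ for some such $\tau$), which were dropped in the first step. I do not expect any real obstacle here: the statement is a purely algebraic consequence of Bayesian aggregation together with the margin condition, and the only subtlety is tracking that the margin applies to \emph{every} action in $h_i$, which is guaranteed precisely because the reward filter at threshold $\gamma^\star(x)$ selects high-reward actions for inclusion.
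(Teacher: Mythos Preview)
Your proposal is correct and follows essentially the same route as the paper: both express $\pi_{\llm}(\cdot\mid h_i)$ as the Bayes mixture $\sum_\tau \pi_{\refe}^\tau(\cdot\mid x)\,w_i(\tau)$, lower-bound by the $\tau^\star(x)$ term, and then control $w_i(\tau^\star(x))$ via the likelihood-ratio bound $\prod_j \pi_{\refe}^\tau(a_j\mid x)\le e^{-k\Delta}\prod_j \pi_{\refe}^{\tau^\star(x)}(a_j\mid x)$ from Assumption~\ref{assumption:maxref}. The only cosmetic difference is the source of strictness: the paper uses $\sum_{\tau\neq\tau^\star(x)} p_{\refe}(\tau)=1-p_{\refe}(\tau^\star(x))<1$ (rather than your $\le 1$), which makes the posterior bound itself strict without needing to appeal to the dropped cross-terms; your strictness argument via those dropped terms is not guaranteed to fire when $\pi_{\refe}^\tau(a\mid x)=0$ for all $\tau\neq\tau^\star(x)$, whereas the paper's version is automatic.
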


\begin{proof}[Proof of Lemma \ref{lemma:error}]
    For simplicity, let $i_1 < \dots < i_k$ denote the indices of $a_i$ that have been put into $h$. 
By the product factorization and Assumption \ref{assumption:maxref},
\begin{align}
\label{eq:qw0008}
\log\frac{\PP(h_i\mid \tau)}{\PP(h_i\mid \tau^\star(x))}
= \sum_{j=1}^k \log\frac{\pi_{\refe}^\tau(a_{i_j} \mid x)}{\pi_{\refe}^{\tau^\star(x)}(a_{i_j} \mid x)}
\le \sum_{j=1}^k (-\Delta) = -k\Delta,
\end{align}
which yields the claim after exponentiation.

Next, by the assumption of the pretraining distribution $\PP_\cD$, the predictive policy of the LLM is the Bayes mixture
\begin{align}
\pi_{\llm}(\cdot\mid h_i)=\sum_{\tau} \pi_{\refe}^\tau(\cdot \mid x)\cdot w_i(\tau),
\qquad
w_i(\tau):=\frac{p_{\refe}(\tau)\PP(h_i\mid \tau)}{\sum_{\tau'} p_{\refe}(\tau')\PP(h_i\mid \tau')}.
\end{align}
Using the likelihood–ratio bound \eqref{eq:qw0008}, for any $\tau\neq \tau^\star(x)$,
\begin{align}
\frac{w_i(\tau)}{w_i(\tau^\star(x))}
=\frac{\PP(h_i\mid \tau)}{\PP(h_i\mid \tau^\star(x))}\cdot \frac{p_{\refe}(\tau)}{p_{\refe}(\tau^\star(x))}
\le e^{-k\Delta}\cdot \frac{p_{\refe}(\tau)}{p_{\refe}(\tau^\star(x))}.
\end{align}
Summing over $\{\tau: \tau \neq \tau^\star(x)\}$, we have
\begin{align*}
    \frac{1-w_m\big(\tau^\star(x)\big)}{w_m\big(\tau^\star(x)\big)} \le e^{-m\Delta}\cdot \frac{1-p_{\refe}(\tau^\star(x))}{p_{\refe}(\tau)}.
\end{align*}
Then it is easy to obtain that 
\begin{align}
w_i(\tau^\star(x)) \geq 1-\frac{1 - p_{\refe}(\tau^\star(x))}{1+p_{\refe}(\tau^\star(x))(e^{k\Delta} - 1)} > \frac{1}{ 1+ e^{-k\Delta}/p_{\refe}(\tau^\star(x))}
\end{align}
Therefore, for any action $a$, we have 
\begin{align}
\frac{\pi_{\llm}(a\mid h_i)}{\pi_{\refe}^{\tau^\star(x)}(a \mid x)} \geq  w_i(\tau^\star(x)) \geq \frac{1}{ 1+ e^{-k\Delta}/p_{\refe}(\tau^\star(x))}. 
\end{align}
\end{proof}

Next we have our overall proof of Theorem \ref{thm:reward_filter}.
\begin{proof}[Proof of Theorem \ref{thm:reward_filter}]

Denote $i^\star$ be the first index $i$ satisfying $|h_i| = m$. Here we allow $i^\star > n$, in which case $|h_i| < m$ for any $1\le i\le n$. By the bound established in Lemma \ref{lemma:error} and the selection of $m = {\log [\epsilon^{-1}p_\refe^{-1}(\tau^\star(x))]}/{\Delta}$, for any $i > i^\star$, we have 
\begin{align}
\label{eq:qw0009}
\frac{\pi_{\llm}(a\mid h_{i})}{\pi_{\refe}^{\tau^\star(x)}(a \mid x)} \geq  \frac{1}{ 1+ e^{-m\Delta}/p_\refe(\tau^\star(x))}>1-\epsilon. 
\end{align}

Let $M_0 = M^{\epsilon}_{\pi^\star(\cdot \mid x), \pi_{\refe}^{\tau^\star(x)}(\cdot\mid x)}$. Using the definition of $\cE_M$-divergence, we have
\begin{align}
&\cE_{M_0}\big(\pi^*(\cdot\mid x), \pi_{\llm}(\cdot\mid h_{i})\big)=\sum_{a \in \cA} \max\{0, \pi^\star(a \mid x) - M^{\epsilon}_{\pi^\star(\cdot \mid x), \pi_{\refe}^{\tau^\star(x)}(\cdot \mid x)}\cdot \pi_{\llm}(a\mid h_{i})\}\notag \\
& \leq \sum_{a \in \cA} \max\Big\{0, \pi^\star(a \mid x) - M^{\epsilon}_{\pi^\star(\cdot \mid x), \pi_{\refe}^{\tau^\star(x)}(\cdot \mid x)}\cdot (1-\epsilon)\cdot \pi_{\refe}^{\tau^\star(x)}(a \mid x)\Big\}\notag \\
& \leq (1-\epsilon)\cdot \sum_{a \in \cA} \max\Big\{0, \pi^\star(a \mid x) - M^{\epsilon}_{\pi^\star(\cdot \mid x), \pi_{\refe}^{\tau^\star(x)}(\cdot \mid x)}\cdot \pi_{\refe}^{\tau^\star(x)}(a \mid x)\Big\} + \epsilon \sum_{a \in \cA} \pi^\star(a \mid x)\notag \\
&\le\cE_{M_0}\big(\pi^*(\cdot\mid x), \pi_{\refe}^{\tau^\star(x)}(\cdot \mid x)\big) + \epsilon\notag\\
& \leq 2\epsilon,\notag
\end{align}
where the first inequality holds due to \ref{eq:qw0009}. The second inequality holds due to $\max \{a,b+c\} \le \max\{a,b\} + c$ for $a,b \in \RR$ and $c >0$. The third inequality holds due to $\cE_{M_0}(\pi^*, \pi_{\refe}^{\tau^\star(x)}) >0$. The last inequality holds because $M_0 =M^{\epsilon}_{\pi^\star(\cdot \mid x),\pi_{\refe}^{\tau^\star(x)}(\cdot\mid x)}$ is the smallest $M$ such that $\cE_{M}(\pi^*, \pi_{\refe}^{\tau^\star(x)}) \le \epsilon$. Therefore, we have
\begin{align*}
    M^{2\epsilon}_{\pi^\star(\cdot \mid x), \pi_{\llm}(\cdot\mid h_{i})} \leq M^{\epsilon}_{\pi^\star(\cdot \mid x), \pi_{\refe}^{\tau^\star(x)}(\cdot \mid x)}.
\end{align*}
Combined with the result of Theorem \ref{thm:lower}, we have for any $i > i^*$
\begin{align}
M^{2\epsilon}_{\pi^\star(\cdot \mid x), \pi_{\llm}(\cdot\mid h_{i})} \leq M^{\epsilon}_{\pi^\star(\cdot \mid x), \pi_{\refe}^{\tau^\star(x)}(\cdot \mid x)}\leq M^\epsilon_{\pi^\star(\cdot \mid x), \pi_{\llm}(\cdot\mid x)}.\label{help:3}
\end{align}

Note that for each step $i \leq i^\star$, the sampled action $a_i$ is accepted into into $h_i$ with probability $p_{\llm}$, defined as: 
\begin{align}
p_{\llm}:=\PP_{a \sim \pi_{\llm}(\cdot\mid x)} \big(r(a,x)\geq \gamma^\star(x)\big). \notag
\end{align}
Therefore, $i^\star$ satisfies a negative binomial distribution $N(m, p_{\llm})$, and its tail function $\PP(i^\star >k)$ for any $k$ is equal to $\PP(j^\star < m)$, where $j^\star$ satisfies a Binomial distribution $B(k, p_{\llm})$, which is 
\begin{align}
\PP(i^\star >k) = \PP\big(B(k, p_{\llm}) < m\big)  \leq \exp\bigg(-\frac{(m - kp_{\llm})^2}{kp_{\llm}(1-p_{\llm})}\bigg),
\end{align}
where we use the fact that the Binomial distribution is sub-Gaussian. Therefore, by selecting $k =\bar n=p_{\llm}^{-1}\cdot (3m + \log(1/\epsilon))$, we have 
\begin{align}
\PP(i^\star > \bar n) < \epsilon.\label{help:4}
\end{align}
Next, we analyze the original statement. 

By Theorem \ref{thm:general_bon}, we have 
\begin{align}
&\EE_{a \sim \pi^\star}[r(a, x)] - \EE_{a \sim \pi_{\text{reward}}}[r(a, x)] \notag \\
&\leq 2\epsilon + \EE_{a_1,\dots,a_n} \bigg[\exp\bigg(- \frac{n^2}{\sum_{i=1}^n M^{2\epsilon}_{\pi^\star(\cdot \mid x), \pi_{\llm}(\cdot\mid h_i)}}\bigg)\bigg]\notag \\
& = 2\epsilon + \EE_{a_1,\dots,a_n} \bigg[\exp\bigg(- \frac{n^2}{\sum_{i=1}^{i^\star} M^{2\epsilon}_{\pi^\star(\cdot \mid x), \pi_{\llm}(\cdot\mid h_i)} + \sum_{i=i^\star + 1}^{n} M^{2\epsilon}_{\pi^\star(\cdot \mid x), \pi_{\llm}(\cdot\mid h_i)}}\bigg)\bigg]\label{help:334} \\
& \leq 2\epsilon +\EE_{a_1,\dots,a_n} \bigg[\exp\bigg(- \frac{n^2}{\min\{i^\star, n\} M^{2\epsilon}_{\pi^\star(\cdot \mid x), \pi_{\llm}(\cdot\mid x)} + (n - \min\{i^\star, n\}) M^{\epsilon}_{\pi^\star(\cdot \mid x), \pi_{\refe}^{\tau^\star(x)}(\cdot \mid x)}}  \bigg)\bigg]\notag
\end{align}
where the last line holds due to \eqref{help:3}.

Next, we further decompose it based on the value of $i^*$: 
\begin{align}
    &\EE_{a \sim \pi^\star}[r(a, x)] - \EE_{a \sim \pi_{\text{reward}}}[r(a, x)]\notag \\
    &\leq 2\epsilon + \sum_{i=1}^\infty \PP(i^* = i)\bigg[\exp\bigg(- \frac{n^2}{\min\{i, n\} M^{2\epsilon}_{\pi^\star(\cdot \mid x), \pi_{\llm}(\cdot\mid x)} + (n - \min\{i, n\})\cdot M^{\epsilon}_{\pi^\star(\cdot \mid x), \pi_{\refe}^{\tau^\star(x)}(\cdot \mid x)}}  \bigg)\bigg]\notag \\
    &=2\epsilon + \bigg(\sum_{i=1}^{\bar n} + \sum_{i=\bar n + 1}^{\infty}\bigg)\PP(i^\star = i)\notag \\
    &\quad \cdot \bigg[\exp\bigg(- \frac{n^2}{\min\{i, n\} M^{2\epsilon}_{\pi^\star(\cdot \mid x), \pi_{\llm}(\cdot\mid x)} + (n - \min\{i, n\})\cdot M^{\epsilon}_{\pi^\star(\cdot \mid x), \pi_{\refe}^{\tau^\star(x)}(\cdot \mid x)}}  \bigg)\bigg]\notag \\
    & \leq 2\epsilon \notag \\
    &\quad + \sum_{i=1}^{\bar n}\PP(i^\star = i)\bigg[\exp\bigg(- \frac{n^2}{\min\{i, n\} M^{2\epsilon}_{\pi^\star(\cdot \mid x), \pi_{\llm}(\cdot\mid x)} + (n - \min\{i, n\})\cdot M^{\epsilon}_{\pi^\star(\cdot \mid x), \pi_{\refe}^{\tau^\star(x)}(\cdot \mid x)}}  \bigg)\bigg] \notag \\
    &\quad + \PP(i^\star >\bar n)\notag \\
    &\leq 2\epsilon \notag \\
    &\quad + \sum_{i=1}^{\bar n}\PP(i^\star = i)\bigg[\exp\bigg(- \frac{n^2}{\min\{i, n\} M^{\epsilon}_{\pi^\star(\cdot \mid x), \pi_{\llm}(\cdot\mid x)} + (n - \min\{i, n\})\cdot M^{\epsilon}_{\pi^\star(\cdot \mid x), \pi_{\refe}^{\tau^\star(x)}(\cdot \mid x)}}  \bigg)\bigg] \notag \\
    &\quad + \PP(i^\star >\bar n)\notag \\
    & \leq 3\epsilon + \exp\bigg(- \frac{n^2}{\bar n M^\epsilon_{\pi^\star(\cdot \mid x), \pi_{\llm}(\cdot\mid x)} + (n - \bar n)\cdot M^{\epsilon}_{\pi^\star(\cdot \mid x), \pi_{\refe}^{\tau^\star(x)}(\cdot \mid x)} } \bigg)\label{help:cont}
\end{align}
where the second inequality holds because the exponential term is less than 1. The third inequality holds due to $M^{2\epsilon}_{\pi^\star(\cdot \mid x), \pi_{\llm}(\cdot\mid x)} \le M^{\epsilon}_{\pi^\star(\cdot \mid x), \pi_{\llm}(\cdot\mid x)}$. The fourth inequality holds due to \eqref{help:4} and $\min \{i,n\} \le \bar n$, $M^\epsilon_{\pi^\star(\cdot \mid x), \pi_{\llm}(\cdot\mid x)} \ge M^{\epsilon}_{\pi^\star(\cdot \mid x), \pi_{\refe}^{\tau^\star(x)}(\cdot \mid x)}$. 

When $M^\epsilon_{\pi^\star(\cdot \mid x), \pi_{\llm}(\cdot\mid x)} \le \bar n$, as $M^\epsilon_{\pi^\star(\cdot \mid x), \pi_{\llm}(\cdot\mid x)} \ge M^{\epsilon}_{\pi^\star(\cdot \mid x), \pi_{\refe}^{\tau^\star(x)}(\cdot \mid x)}$, we have
\begin{align*}
    &\exp\bigg(- \frac{n^2}{\bar n M^\epsilon_{\pi^\star(\cdot \mid x), \pi_{\llm}(\cdot\mid x)} + (n - \bar n)\cdot M^{\epsilon}_{\pi^\star(\cdot \mid x), \pi_{\refe}^{\tau^\star(x)}(\cdot \mid x)} } \bigg)\\
    & \qquad \le \exp\bigg(- \frac{n}{ M^\epsilon_{\pi^\star(\cdot \mid x), \pi_{\llm}(\cdot\mid x)} } \bigg).
\end{align*}
Thus, it suffices to select $n = \text{poly} \log(1/\epsilon)\cdot M^\epsilon_{\pi^\star(\cdot \mid x), \pi_{\llm}(\cdot\mid x)}$.

Otherwise, using basic inequalities, we have
\begin{align*}
    &\sqrt{\log^2(1/\epsilon)\Big[M^{\epsilon}_{\pi^\star(\cdot \mid x), \pi_{\refe}^{\tau^\star(x)}(\cdot \mid x)}\Big]^2 + 4\log(1/\epsilon)\bar n \Big[M^\epsilon_{\pi^\star(\cdot \mid x), \pi_{\llm}(\cdot\mid x)} - M^{\epsilon}_{\pi^\star(\cdot \mid x), \pi_{\refe}^{\tau^\star(x)}(\cdot \mid x)}\Big]} \\
    & \le \log(1/\epsilon)M^{\epsilon}_{\pi^\star(\cdot \mid x), \pi_{\refe}^{\tau^\star(x)}(\cdot \mid x)}+ 2\sqrt{\log(1/\epsilon)} \cdot \sqrt{\bar n \Big[M^\epsilon_{\pi^\star(\cdot \mid x), \pi_{\llm}(\cdot\mid x)} - M^{\epsilon}_{\pi^\star(\cdot \mid x), \pi_{\refe}^{\tau^\star(x)}(\cdot \mid x)}\Big]},
\end{align*}
where we use the inequality $\sqrt{a+b} \le \sqrt{a} + \sqrt{b}$ for $a,b >0$. 
Therefore, when 
\begin{align*}
    n = \log(1/\epsilon)M^{\epsilon}_{\pi^\star(\cdot \mid x), \pi_{\refe}^{\tau^\star(x)}(\cdot \mid x)} + 2\sqrt{\log(1/\epsilon)} \cdot \sqrt{\bar n \Big[M^\epsilon_{\pi^\star(\cdot \mid x), \pi_{\llm}(\cdot\mid x)} - M^{\epsilon}_{\pi^\star(\cdot \mid x), \pi_{\refe}^{\tau^\star(x)}(\cdot \mid x)}\Big]},
\end{align*} 
we have
\begin{align*}
    n^2 \ge \log(1/\epsilon) n + \log(1/\epsilon)\bar n \Big[M^\epsilon_{\pi^\star(\cdot \mid x), \pi_{\llm}(\cdot\mid x)} - M^{\epsilon}_{\pi^\star(\cdot \mid x), \pi_{\refe}^{\tau^\star(x)}(\cdot \mid x)}\Big].
\end{align*}
Thus, $\EE_{a \sim \pi^\star}[r(a, x)] - \EE_{a \sim \pi_{\text{reward}}}[r(a, x)] \le 4\epsilon$. It suffices to select 
\begin{align*}
    n = \text{poly} \log(1/\epsilon) \Big[M^{x, \epsilon}_{\tau^\star(x)} + \sqrt{\bar n \big(M^{x, \epsilon}_{\llm} - M^{x, \epsilon}_{\tau^\star(x)}\big)}\Big].
\end{align*}
\end{proof}

\subsection{Proof of Corollary \ref{coro:1}}
\begin{proof}[Proof of Corollary \ref{coro:1}]
We follow the proof of Theorem \ref{thm:reward_filter} from \eqref{help:cont}. By Definition \ref{def:cEM}, using the fact that 
\begin{align}
M^\epsilon_{\pi^\star(\cdot, x), \pi_{\llm}(\cdot\mid x)} \leq \frac{C(\pi^\star(\cdot, x), \pi_{\llm}(\cdot\mid x))}{\epsilon},\ M^\epsilon_{\pi^\star(\cdot, x), \pi_{\refe}(\cdot, x; \tau^\star(x))} \leq \frac{C(\pi^\star(\cdot, x), \pi_{\refe}(\cdot, x; \tau^\star(x)))}{\epsilon},\notag
\end{align}
Next, we bound $C(\pi^\star(\cdot, x), \pi_{\refe}(\cdot, x; \tau^\star(x)))$. Note that 
\begin{align}
&C(\pi^\star(\cdot, x), \pi_{\refe}(\cdot, x; \tau^\star(x))) \notag \\
&= \EE_{a \sim \pi^\star(\cdot, x)}\frac{\pi^\star(a, x)}{\pi_{\refe}(a, x; \tau^\star(x))}\notag \\
& = \EE_{a \sim \pi^\star(\cdot, x)}\frac{\pi^\star(a, x)}{\pi_{\llm}(a\mid x)} \cdot \frac{\pi_{\llm}(a\mid x)}{\pi_{\refe}(a, x; \tau^\star(x))}\notag \\
& = \EE_{a \sim \pi^\star(\cdot, x)}\frac{\pi^\star(a, x)}{\pi_{\llm}(a\mid x)} \cdot \frac{\sum_{\tau \in \dist_\refe} p_\refe(\tau) \pi_{\refe}(a, x; \tau)}{\pi_{\refe}(a, x; \tau^\star(x))}\notag \\
& \leq \EE_{a \sim \pi^\star(\cdot, x)}\frac{\pi^\star(a, x)}{\pi_{\llm}(a\mid x)} \cdot \frac{\sum_{\tau \neq \tau^\star(x)} p_\refe(\tau) e^{-\Delta(x)} \pi_{\refe}(a, x; \tau^\star(x)) + p_\refe(\tau^\star(x)) \pi_{\refe}(a, x; \tau^\star(x))}{\pi_{\refe}(a, x; \tau^\star(x))}\notag \\
& \leq \min \{e^{-\Delta(x)}, p_\refe(\tau^\star(x)) \} \cdot C(\pi^\star(\cdot, x), \pi_{\llm}(\cdot\mid x)),\label{help:002} 
\end{align}
where the first inequality holds due to Assumption \ref{assumption:maxref} and $\text{supp}(\pi^\star(\cdot, x)) \subseteq \{a: \log \pi_{\refe}(a, x; \tau^\star(x)) - \sup_{\tau \neq \tau^\star(x)} \log \pi_{\refe}(a, x; \tau) \geq \Delta(x)\}$, the last one holds by calculation. Therefore, by selecting $\epsilon$ to be small enough, we have our statement by  \eqref{help:cont}.

\end{proof}

\subsection{Proof of Corollary \ref{coro:adaptive}}
\begin{proof}[Proof of Corollary \ref{coro:adaptive}]
Using Lemma \ref{lemma:error}, we have 
\begin{align}
\frac{\pi_{\llm}(a\mid h_i)}{\pi_{\refe}(a,x;\tau^\star(x))}  > \frac{1}{ 1+ e^{-k\Delta(x)}/p_{\refe}(\tau^\star(x))},\label{help:001}
\end{align}
where $k$ is the length of $h_i$. Following the proof of Corollary \ref{coro:1}, we have 
\begin{align}
&C(\pi^\star(\cdot, x), \pi_{\llm}(a\mid h_i)) \notag \\
&\leq (1+ e^{-k\Delta(x)}/p_{\refe}(\tau^\star(x)))C(\pi^\star(\cdot, x), \pi_{\refe}(\cdot, x; \tau^\star(x)))\notag \\
& \leq \underbrace{(1+ e^{-\Delta(x)}/p_{\refe}(\tau^\star(x)))}_{d(x)}C(\pi^\star(\cdot, x), \pi_{\refe}(\cdot, x; \tau^\star(x)))\notag\\
& \leq d(x)\cdot \kappa(x) \cdot C(\pi^\star(\cdot, x), \pi_{\llm}(\cdot\mid x)),\notag
\end{align}
where the first inequality holds due to \eqref{help:001}, the last one holds due to \eqref{help:002}.

We still denote $i^\star$ be the first index satisfying $|h_i| = m$. Then $i^\star$ is a random variable as follows: it is a summation of a series of Geometric distribution. Then we have 
\begin{align}
i^\star = G(p_1) + \dots + G(p_m), p_i = \PP_{a \sim \pi_\llm(\cdot \mid h_i)}(r(a,x) \geq \gamma^\star(x)). 
\end{align}
We know that each $G(p_i)$ is an $(1/p_i^2, 1/p_i)$-sub-exponential distribution. Using Bernstein inequality, we know that with probability at least $1-\epsilon$, we have 
\begin{align}
i^\star &\leq 2\bigg(\sum_{i=1}^m \frac{1}{p_i} + \sqrt{\sum_{i=1}^m \frac{1}{p_i^2}\log \epsilon^{-1}} + \frac{\log \epsilon^{-1}}{\min\{p_i\}}\bigg)\notag \\
& \leq \underbrace{\frac{4 m d(x) \log \epsilon^{-1}}{\PP_{a \sim \pi_{\refe}(\cdot ,x;\tau^\star(x))}(r(a,x) \geq \gamma^\star(x))}}_{\bar n}, \label{help:222} 
\end{align}
where we use the fact that 
\begin{align}
p_i &= \PP_{a \sim \pi_\llm(\cdot \mid h_i)}(r(a,x) \geq \gamma^\star(x))\notag \\
& = \EE_{a \sim \pi_{\refe}(\cdot ,x;\tau^\star(x))} \frac{\pi_{\llm}(a\mid h_i)}{\pi_{\refe}(a,x;\tau^\star(x))} \ind(r(a,x) \geq \gamma^\star(x))\notag \\
& \geq \frac{1}{ d(x)} \PP_{a \sim \pi_{\refe}(\cdot ,x;\tau^\star(x))}(r(a,x) \geq \gamma^\star(x)).\notag
\end{align}
Hence, selecting $\bar n$ as \eqref{help:222} suggests, following the same step from \eqref{help:334}, we have 
\begin{align}
&\EE_{a \sim \pi^\star}[r(a, x)] - \EE_{a \sim \pi_{\text{reward}}}[r(a, x)]\notag \\
& \leq \epsilon + \EE_{a_1,\dots,a_n} \bigg[\exp\bigg(- \frac{n^2}{\sum_{i=1}^{i^\star} M^\epsilon_{\pi^\star(\cdot, x), \pi_{\llm}(\cdot\mid h_i)} + \sum_{i=i^\star + 1}^{n} M^\epsilon_{\pi^\star(\cdot, x), \pi_{\llm}(\cdot\mid h_i)}}\bigg)\bigg]\notag \\
& \leq \epsilon + \EE_{a_1,\dots,a_n} \bigg[\exp\bigg(- \frac{n^2\epsilon}{\sum_{i=1}^{i^\star} C(\pi^\star(\cdot, x), \pi_{\llm}(\cdot\mid h_i)) + \sum_{i=i^\star + 1}^{n} C(\pi^\star(\cdot, x), \pi_{\llm}(\cdot\mid h_i))}\bigg)\bigg]\notag \\
& \leq \epsilon + \EE_{a_1,\dots,a_n} \bigg[\exp\bigg(- \frac{\frac{n^2\epsilon}{\kappa(x)\cdot C(\pi^\star(\cdot, x), \pi_{\llm}(\cdot\mid x))}}{\sum_{i=1}^{i^\star} d(x) + \sum_{i=i^\star + 1}^{n} 1}\bigg)\bigg]\notag \\
& \leq \epsilon + \EE_{a_1,\dots,a_n} \bigg[\exp\bigg(- \frac{\frac{n^2\epsilon}{\kappa(x)\cdot C(\pi^\star(\cdot, x), \pi_{\llm}(\cdot\mid x))}}{\min\{\bar n, n\} d(x) + (n - \min\{\bar n, n\})}\bigg)\bigg]\label{help:889}
\end{align}
Finally, selecting 
\begin{align}
\epsilon \leq \min\Bigg\{1-\gamma^\star(x), 
\frac{C^\star_{\llm}(x)\PP_{a \sim \pi_{\refe}(\cdot ,x;\tau^\star(x))}(r(a,x) \geq \gamma^\star(x))\Delta(x)}{\log p_\refe^{-1}(\tau^\star(x))}\cdot \frac{\kappa(x)}{d^2(x)} \Bigg\}, \notag
\end{align}
satisfies \eqref{help:889} to be $2\epsilon$-optimal. 
\end{proof}

\section{Experiments}
\label{app:experiments}
\subsection{Hyperparameters and Dataset- Specific Settings}
\label{app:hyperparameters}
Unless otherwise specified, decoding settings are identical across methods and datasets. Table~\ref{tab:hyperparameters} lists the global settings held fixed; Table~\ref{tab:dataset_specific_parameters} provides per-dataset limits (maximum output tokens and maximum context length) and the reward-filter threshold \(\gamma\) used with the PRM Llama3.1-8B-PRM-Deepseek-Data in the main results (Figure~\ref{fig:main}). Token budgets are adjusted by dataset to accommodate problem difficulty and solution length.

\begin{table}[h]
\centering
\caption{Key hyperparameters of experiments across all configurations.}
\label{tab:hyperparameters}
\footnotesize
\setlength{\tabcolsep}{6pt}
\begin{tabular}{lll}
\toprule
\textbf{Flag / Name} & \textbf{Value} & \textbf{Description} \\
\midrule
\texttt{--history\_budget} & 3 & Max number of recent solutions retained in history. \\
\texttt{--temperature}          & 0.8  & Sampling temperature of the foundation model. \\
\texttt{--top\_p}               & 0.95 & Nucleus sampling parameter. \\
\texttt{--prm\_agg}             & last-step & PRM aggregation strategy over reasoning steps. \\
\bottomrule
\end{tabular}
\end{table}

\begin{table}[h]
\centering
\caption{Dataset specific parameters used in main results (Figure~\ref{fig:main}).}
\label{tab:dataset_specific_parameters}
\footnotesize
\begin{tabular}{l|ccc}
\toprule
\textbf{Dataset} & \textbf{Max output tokens} & \textbf{Max context length} & \textbf{Reward- filter threshold} \(\gamma\) \\
\midrule
MATH500 & 2048 & 8192 & 0.97 \\
GPQA-Diamond & 4096 & 16384 & 0.92 \\
AMC\textquoteright 23 & 8192 & 32768 & 0.92 \\
AIME\textquoteright 24 & 8192 & 32768 & 0.95 \\
AIME\textquoteright 25 & 8192 & 32768 & 0.90 \\
\bottomrule
\end{tabular}
\end{table}

\subsection{Prompt Templates}
\label{sec:prompt-template}

We use a unified chat-style prompt across all methods and datasets. Each query is formatted as a sequence of \texttt{(system, user, assistant)} messages: a dataset-specific \emph{system} instruction, the \emph{user} problem text, and (optionally) a short \emph{assistant} history of previously accepted solutions.

\vspace{4pt}
\paragraph{Message construction (used by PureSeq and RF-SeqBoN).  }
When history is present, we add a brief self-critique instruction before requesting a fresh, complete solution and a single final-answer line. The high-level template is:

\begin{lstlisting}[language={},basicstyle=\ttfamily\small,frame=single]
messages = [
  {"role": "system", "content": SYSTEM_PROMPT(dataset)}, 
  {"role": "user",   "content": PROBLEM_TEXT},

  # If history exists, include up to # history_budget most recent
  # accepted solutions:
  {"role": "assistant", "content": ACCEPTED_{-k}}, {"role": "user",
  "content": "Try again with a different approach:"},
  ... (repeat for up to # history_budget earlier accepted solutions) ...
  {"role": "assistant", "content": ACCEPTED_{-1}},
  {"role": "user", "content":
     "The previous solution(s) may contain errors. 
      Before solving, briefly critique the previous attempt(s) 
      in 2 to 3 bullet points. 
      Then provide a COMPLETE and CONCISE corrected solution 
      from scratch that addresses those issues. 
      End with exactly one line containing the final answer:"}
]
\end{lstlisting}

\paragraph{Dataset-specific \texttt{system\_prompt}.  }
\begin{itemize}[leftmargin=1.2em]
\item \textbf{MATH500.} Concise for simple items; step-by-step sections for harder ones; end with “\(\boxed{\text{answer}}\)”.
\item \textbf{GPQA- Diamond.} Expert scientific reasoning, step-by-step elimination, and final line: “\emph{The answer is (X)}” where \(X\in\{A,B,C,D\}\).
\item \textbf{AIME\textquoteright 24.} Expert mathematician; write math steps only; final boxed integer \(\boxed{000\text{--}999}\).
\end{itemize}

\paragraph{Notes about dialog prompts.}

\begin{itemize}[leftmargin = *]
    \item PureSeq and RF-SeqBoN share the same prompt template and critique instructions; RF-SeqBoN differs only in the reward-filtered acceptance and history management.
    \item We cap the history window at the number of \texttt{history\_budget} recent accepted solutions.
    \item The final-answer line is enforced to simplify exact-match evaluation.
\end{itemize}

\subsection{Additional Ablation Studies}
\label{app:additional_ablations}
\subsubsection{Choice of Prompt Template}
\label{app:ablation_prompt}
\begin{wrapfigure}[12]{r}{0.45\textwidth}  
    \vspace{-30pt}
    \centering
    \includegraphics[width=\linewidth]{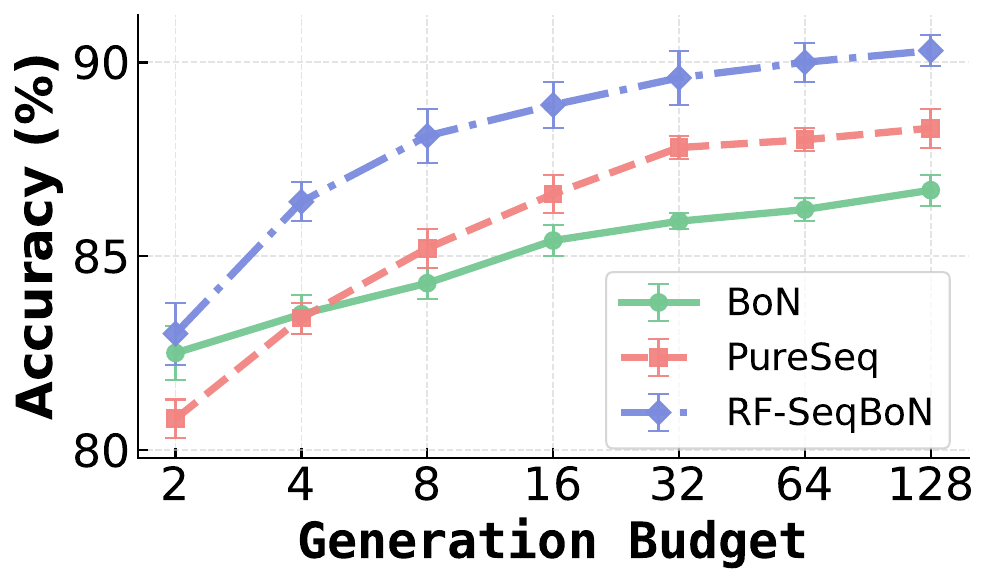}
    \caption{Ablation study of prompt template on MATH500 with Qwen3-4B-Instruct foundation model. The points and error bars show the mean and standard deviation from five repeated experiments, respectively.}
    \label{fig:ablation_prompt}

\end{wrapfigure}

To address the potential concern that self-correction or reflection type of prompts may make sequential BoN win over Naive BoN, we conducted the ablation study on all three algorithms with the similar template below (PureSeq and RF-SeqBoN have the \textcolor{blue}{blue} colored text to incorporate previous candidate solutions, which BoN doesn't, everything else remains the same) on Qwen3–4B–Instruct model and MATH500 benchmark dataset. Results are shown in Figure~\ref{fig:ablation_prompt}. We can see that RF-SeqBoN still outperforms both BoN and PureSeq stably with increasing generation budget $N$. Thus, our method is not sensitive to the prompt template.

\problemdivider  
\textbf{Prompt:} You are a careful problem-solving assistant for challenging math and reasoning problems.

[Problem]
{PROBLEM\_TEXT}

{\color{blue}{(PureSeq and RF-SeqBoN only:)
[Previous candidate solutions] {PREVIOUS\_SOLUTIONS}

The previous candidate solutions may be partially correct or incorrect. 
They are provided only as extra context reference.}}

Your task: \\
- Solve the problem from scratch. \\
- Write a single, clear, self-contained, correct and concise solution. \\
- Show your reasoning step by step. \\
- End with exactly one line containing the final answer: $\boxed{\text{answer}}$. 
\problemdivider  

\subsection{Computation-Time Comparison and Discussion}
\label{app:time_comparison}
We evaluate BoN, PureSeq, and RF-SeqBoN under a matched computation-time budget and summarize the results in Figure~\ref{fig:latency}. Since BoN terminates the earliest, we stop all methods at the time when BoN finishes to ensure a fair comparison. As shown in Figure~\ref{fig:latency}, RF-SeqBoN consistently outperforms both PureSeq and BoN across all benchmarks and backbone models, further supporting our claim regarding its computational efficiency. In addition, the accuracy trends in Figure~\ref{fig:latency} closely mirror those in Figure~\ref{fig:main}, indicating that comparisons based on generation budget are well aligned with those based on actual computation time.

\begin{figure}[t]
    \centering
    \includegraphics[width=0.9\linewidth]{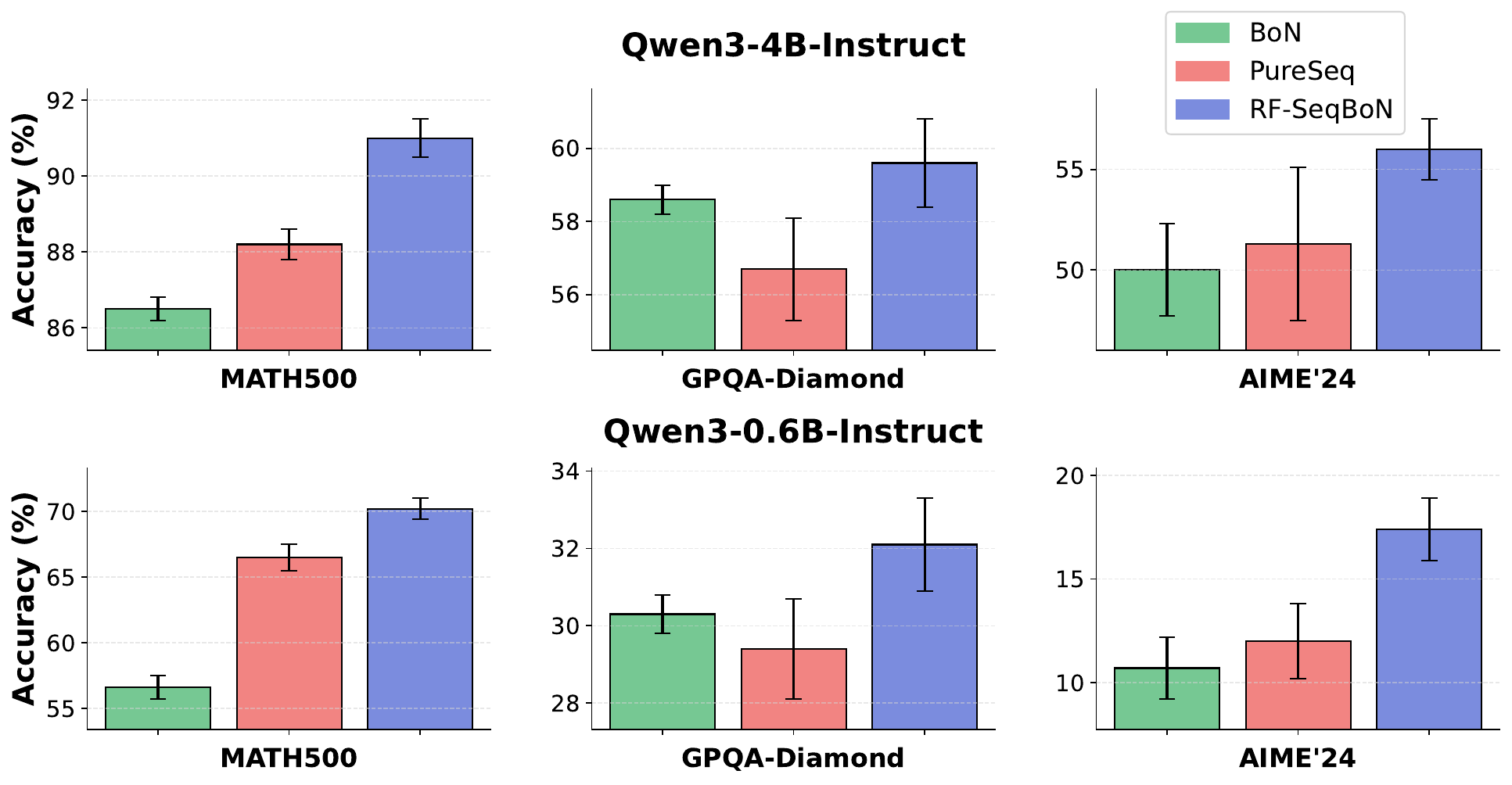}
    \caption{Accuracy comparison for BoN, PureSeq and RF-SeqBoN under the same test-time budget. The bar heights and error bars show the mean and standard deviation from five repeated experiments, respectively. The RF-SeqBoN still dominates the other two in all settings.}
    \label{fig:latency}
\end{figure}

\begin{figure}[t]
\centering

\begin{minipage}{0.52\textwidth}
    \centering
    \captionof{table}{Number of MATH-500 questions that contribute $k=3$, $2$, $1$, or $0$ filtered answers to the LLM context, under RF-SeqBoN when $N=128$ and \texttt{-history\_budget}$=3$, for different values of threshold $\gamma$.}
    \label{tab:gamma-per-question}
    \vspace{2pt}
    \begin{tabular}{c|cccc}
        \toprule
        \multirow{2}{*}{$\gamma$} & \multicolumn{4}{c}{\# Questions with $k$ filtered answers} \\
        & $k = 3$ & $k = 2$ & $k = 1$ & $k = 0$ \\
        \midrule
        0.90 & 418 & 27 & 39 & 16 \\
        0.93 & 399 & 44 & 38 & 19 \\
        0.95 & 391 & 46 & 39 & 24 \\
        0.97 & 383 & 50 & 41 & 26 \\
        0.99 & 332 & 75 & 64 & 29 \\
        \bottomrule
    \end{tabular}
\end{minipage}
\hfill
\begin{minipage}{0.44\textwidth}
    \centering
    \includegraphics[width=\linewidth]{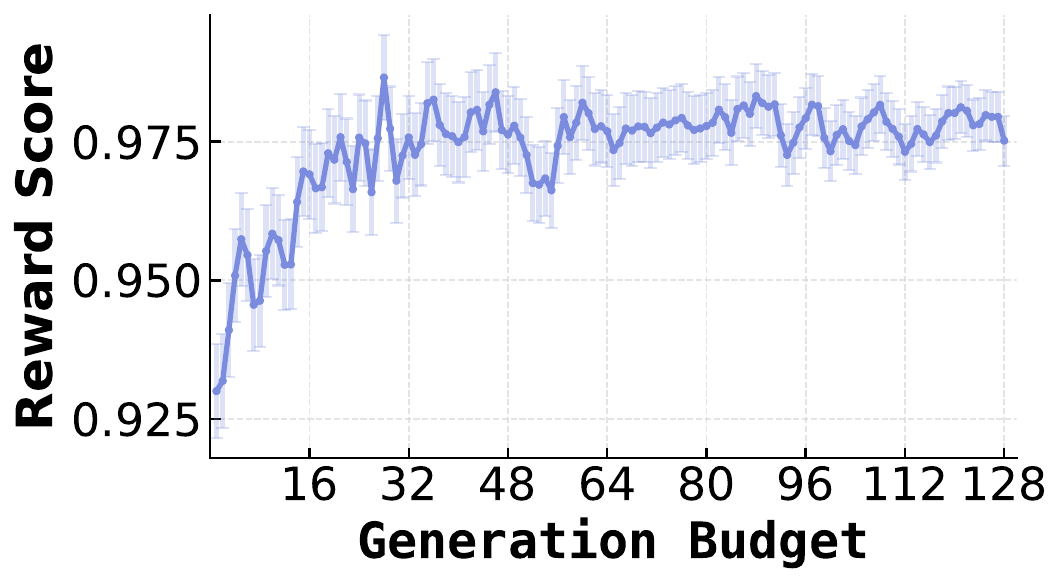}
    \captionof{figure}{Reward score trend of RF-SeqBoN for the given problem. Points and error bars denote the mean and standard deviation across five repeated experiments.}
    \label{fig:reward_single_problem}
\end{minipage}

\end{figure}

\subsection{Additional Statistics on Filtered Answers for Different \texorpdfstring{$\gamma$}{gamma}}
\label{app:gamma-stats}

As a complement to Section~\ref{sec:ablation_gamma} (Choice of hyperparameter $\gamma$), Table~\ref{tab:gamma-per-question} reports how many filtered answers remain in the LLM context under RF-SeqBoN when the generation budget is fixed at $N=128$ and the history budget is set to \texttt{-history\_budget}$=3$ on MATH-500. We observe that the number of retained answers remains highly stable, ranging from 0.93 to 0.97 on average, which is consistent with the trends shown in Figure~\ref{fig:ablation_combined}(a).

\subsection{Example of a Non-Monotonic Reward-Score Trend}
\label{app:reward_score_trend}

We attach in Figure~\ref{fig:reward_single_problem} the reward–score trajectory for a representative question from the MATH500 dataset (shown below), generated by the Qwen3-4B-Instruct foundation model and evaluated by the Llama3.1–8B–PRM–Deepseek–Data process reward model (aggregation taken at the final step), under the RF-SeqBoN algorithms. 

The resulting curve shows that the reward-score sequence as a function of the generation budget is not strictly monotonically increasing, but instead exhibits an overall upward trend with small local fluctuations.

\problemdivider  
\begin{flushleft}
\textcolor{brown}{\textbf{\# MATH500:}} \\
\end{flushleft}
{\color{darkgray}{\quad \textbf{\textit{Problem:}} \\
You have seven bags of gold coins. Each bag has the same number of gold coins. One day, you find a bag of 53 coins. You decide to redistribute the number of coins you have so that all eight bags you hold have the same number of coins. You successfully manage to redistribute all the coins, and you also note that you have more than 200 coins. What is the smallest number of coins you could have had before finding the bag of 53 coins?}} 

\problemdivider  

\subsection{Solution Examples}
\label{app:solution_examples}
We present representative solutions generated by the Qwen3-4B-Instruct-2507 backbone across three benchmark datasets to illustrate the answer templates induced by the three algorithms. Text is color-coded as follows: \textcolor{brown}{brown} for the dataset name, \textcolor{darkgray}{gray} for the problem statement, and \textcolor{green}{green}, \textcolor{coralred}{coral red}, and \textcolor{purple}{purple} for the solution segments produced by the \textcolor{green}{BoN}, \textcolor{coralred}{PureSeq}, and \textcolor{purple}{RF-SeqBoN} algorithms, respectively (with colors matched to the plots throughout the paper).

\problemdivider  
\begin{flushleft}
\textcolor{brown}{\textbf{\# MATH500:}} \\
\end{flushleft}
{\color{darkgray}{\quad \textbf{\textit{Problem:}} \\
Tom got a Mr. Potato Head for his birthday. It came with 3 hairstyles, 2 sets of eyebrows, 1 pair of googly eyes, 2 sets of ears, and 2 sets of lips, a pair of regular shoes, and a bonus pair of clown shoes. If a complete Mr. Potato Head personality includes eyebrows, eyes, ears, lips, shoes and optionally hair, how many different wacky personalities can Tom come up with? Note that Mr. Potato Head can be bald.

Note: You cannot "mix and match". For example, you cannot take the left eyebrow from one pair and the right eyebrow from the other pair. } \\

\begin{tcolorbox}[colback=green!10, colframe=green!90!black, title={BoN Solution Example (Incorrect)}, breakable]
\hypertarget{GPT-4o Prompt for Reconstruct}{}
\small\ttfamily
\textbf{\textit{Solution:}}\\[2pt]
\textbf{Step 1.} Identify the independent choices for each required part of a complete Mr. Potato Head personality:
\begin{itemize}
    \item Eyebrows: 2 different sets $\Rightarrow 2$ choices.
    \item Eyes: 1 pair of googly eyes $\Rightarrow 1$ choice.
    \item Ears: 2 different sets $\Rightarrow 2$ choices.
    \item Lips: 2 different sets $\Rightarrow 2$ choices.
    \item Shoes: 1 regular pair and 1 clown pair $\Rightarrow 2$ choices.
    \item Hair: there are 3 hairstyles, and Mr. Potato Head can also be bald. We (incorrectly) treat this as giving $3$ total options for hair (including the bald option).
\end{itemize}

\textbf{Step 2.} Since a personality consists of exactly one choice from each category (with hair being optional but already accounted for in the $3$ options above), we multiply the number of choices:
\[
\text{Total personalities}
= 3 \times 2 \times 1 \times 2 \times 2 \times 2.
\]

\textbf{Step 3.} Compute the product:
\[
3 \times 2 = 6,\quad
6 \times 1 = 6,\quad
6 \times 2 = 12,\quad
12 \times 2 = 24,\quad
24 \times 2 = 48.
\]

Therefore, the number of different wacky personalities Tom can come up with is
\[
\boxed{48}
\]
\end{tcolorbox}

\begin{tcolorbox}[colback=coralred!10, colframe=coralred!90!black, title={PureSeq Solution Example (Incorrect)}, breakable]
\hypertarget{GPT-4o Prompt for Reconstruct}{}
\small\ttfamily
\textbf{\textit{Solution:}}\\[2pt]
\textbf{Step 1.} As in previous attempts, we first list all the required components of a complete Mr. Potato Head personality and count the choices for each:
\begin{itemize}
    \item Eyebrows: there are 2 distinct sets $\Rightarrow 2$ choices.
    \item Eyes: there is 1 pair of googly eyes $\Rightarrow 1$ choice.
    \item Ears: there are 2 distinct sets $\Rightarrow 2$ choices.
    \item Lips: there are 2 distinct sets $\Rightarrow 2$ choices.
    \item Shoes: there is 1 pair of regular shoes and 1 pair of clown shoes $\Rightarrow 2$ choices.
    \item Hair: unlike the previous candidate, we now explicitly include the bald option along with the 3 hairstyles, so there are $3$ hairstyles $+$ bald $=4$ choices for hair.
\end{itemize}

\textbf{Step 2.} A complete personality must include exactly one choice from each mandatory category (eyebrows, eyes, ears, lips, shoes), and either one of the 3 hairstyles or the bald option. Since these choices are independent, we multiply:
\[
\text{Total personalities}
= 4 \times 2 \times 1 \times 2 \times 2 \times 2.
\]

\textbf{Step 3.} Compute the product step by step:
\[
4 \times 2 = 8,\quad
8 \times 1 = 8,\quad
8 \times 2 = 16,\quad
16 \times 2 = 32,\quad
32 \times 2 = 60.
\]

Therefore, the number of different wacky personalities Tom can come up with is
\[
\boxed{60}
\]
\end{tcolorbox}

\begin{tcolorbox}[colback=purple!10, colframe=purple!90!black, title={RF-SeqBoN Example (Correct)}, breakable]
\hypertarget{GPT-4o Prompt for Reconstruct}{}
\small\ttfamily
\textbf{\textit{Solution:}}\\[2pt]
\textbf{Step 1.} Determine which parts are required and how many options there are for each.
A complete personality must have:
\begin{itemize}
    \item Eyebrows: 2 distinct sets $\Rightarrow 2$ choices.
    \item Eyes: 1 pair of googly eyes $\Rightarrow 1$ choice.
    \item Ears: 2 distinct sets $\Rightarrow 2$ choices.
    \item Lips: 2 distinct sets $\Rightarrow 2$ choices.
    \item Shoes: 1 regular pair and 1 clown pair $\Rightarrow 2$ choices.
\end{itemize}
Hair is optional, but Tom has 3 hairstyles and the option to be bald, so:
\[
\text{Hair choices} = 3 \text{ hairstyles} + 1 \text{ (bald)} = 4 \text{ choices.}
\]

\textbf{Step 2.} First count the combinations of the \emph{mandatory} parts (eyebrows, eyes, ears, lips, shoes). Since you cannot mix and match within a set, each set is a single atomic choice:
\[
\text{Mandatory combinations}
= 2 \times 1 \times 2 \times 2 \times 2.
\]

\textbf{Step 3.} Compute this product step by step:
\[
2 \times 1 = 2,\quad
2 \times 2 = 4 \Rightarrow 2 \times 1 \times 2 = 4,\quad
4 \times 2 = 8,\quad
8 \times 2 = 16.
\]
So there are $16$ ways to choose all the required parts.

\textbf{Step 4.} For each of these $16$ mandatory combinations, there are $4$ independent choices for hair (3 styles or bald). Thus, by the multiplication principle,
\[
\text{Total personalities} = 16 \times 4 = 64.
\]

Therefore, the number of different wacky personalities Tom can come up with is
\[
\boxed{64}
\]
\end{tcolorbox}

\problemdivider  
\begin{flushleft}
\textcolor{brown}{\textbf{\# GPQA-Diamond:}} \\
\end{flushleft}
{\color{darkgray}{\quad \textbf{\textit{Problem:}} \\
Consider a system of three spins $S_1$, $S_2$ and $S_3$. Each of which can take spin $+1$ and $-1$. The energy of the system is given by, $E= -J[ S_1S_2 +S_1S_3+S_2S_3]$.
Find the partition function $Z$ of the system. ($\beta = 1/kT$, $k= \text{boltzmann constant}$ and $T= \text{temperature}$) \\
\begin{flushleft}
A. $Z= 2e^{-3J\beta}+6e^{J\beta}$ \\
B. $Z= 2 e^{3J\beta}+6e^{-J\beta}$ \\
C. $Z= 6e^{2J\beta}+2e^{-2J\beta}$ \\
D. $Z= 2e^{2J\beta}+6e^{-2J\beta}$ \\
\end{flushleft}

\begin{tcolorbox}[colback=green!10, colframe=green!90!black, title={BoN Solution Example (Incorrect)}, breakable]
\hypertarget{GPT-4o Prompt for Reconstruct}{}
\small\ttfamily
\textbf{\textit{Solution:}}\\[2pt]
\textbf{Step 1.}

Each spin \(S_i \in \{+1,-1\}\), so there are \(2^3 = 8\) configurations for \((S_1,S_2,S_3)\).  
The energy is
\[
E = -J(S_1S_2 + S_1S_3 + S_2S_3).
\]

Compute \(E\) by cases:

- If all three spins are the same, e.g. \((+1,+1,+1)\) or \((-1,-1,-1)\), then
  \[
  S_1S_2 = 1,\quad S_1S_3 = 1,\quad S_2S_3 = 1
  \]
  so
  \[
  E = -J(1+1+1) = -3J.
  \]
  There are \(2\) such configurations (all up or all down).

- If exactly one spin differs from the other two (e.g. \((+1,+1,-1)\), \((+1,-1,+1)\), etc.), then two pairs are \(-1\) and one pair is \(+1\). For example, for \((+1,+1,-1)\):
  \[
  S_1S_2 = +1,\quad S_1S_3 = -1,\quad S_2S_3 = -1,
  \]
  so
  \[
  E = -J(1 - 1 - 1) = -J(-1) = +J.
  \]
  There are \(6\) such configurations.

Thus the energy levels and degeneracies are:
\[
E = -3J \text{ with degeneracy } 2,\quad E = +J \text{ with degeneracy } 6.
\]

\textbf{Step 2.}

Using the convention
\[
Z = \sum_{\text{states}} e^{\beta E}
\]
instead of the standard \(e^{-\beta E}\), we get:
\[
Z = 2\,e^{\beta(-3J)} + 6\,e^{\beta(J)} = 2e^{-3J\beta} + 6e^{J\beta}.
\]

\textbf{Step 3.}
Comparing with the answer choices, we see
\[
Z = 2 e^{-3J\beta} + 6 e^{J\beta}
\]
corresponds to option \(\text{A}\).
\emph{The answer is (A)}
\end{tcolorbox}

\begin{tcolorbox}[colback=coralred!10, colframe=coralred!90!black, title={PureSeq Solution Example (Incorrect)}, breakable]
\hypertarget{GPT-4o Prompt for Reconstruct}{}
\small\ttfamily
\textbf{\textit{Solution:}}\\[2pt]
\textbf{Step 1.}
Each spin \(S_i \in \{+1,-1\}\), so there are \(2^3 = 8\) configurations for \((S_1,S_2,S_3)\).  
The energy is
\[
E = -J\bigl(S_1S_2 + S_1S_3 + S_2S_3\bigr).
\]

As in previous attempts, we group configurations by how many spins point in the same direction.

\begin{itemize}
    \item \textbf{Case 1: All three spins are the same.}\\
    Examples: \((+1,+1,+1)\) and \((-1,-1,-1)\).  
    For \((+1,+1,+1)\):
    \[
    S_1S_2 = 1,\quad S_1S_3 = 1,\quad S_2S_3 = 1,
    \]
    so we take
    \[
    S_1S_2 + S_1S_3 + S_2S_3 = 1+1+1 = 2,
    \]
    and thus
    \[
    E = -J\cdot 2 = -2J.
    \]
    The same energy applies to \((-1,-1,-1)\), so this level has degeneracy \(2\).

    \item \textbf{Case 2: One spin differs from the other two.}\\
    Example: \((+1,+1,-1)\). Then
    \[
    S_1S_2 = +1,\quad S_1S_3 = -1,\quad S_2S_3 = -1,
    \]
    so
    \[
    S_1S_2 + S_1S_3 + S_2S_3 = 1 -1 -1 = -2,
    \]
    and hence
    \[
    E = -J\cdot(-2) = +2J.
    \]
    Any configuration with exactly one spin flipped (e.g. \((+1,-1,+1)\), \((-1,+1,+1)\), etc.) has the same energy \(+2J\). There are \(\binom{3}{1}\times 2 = 6\) such configurations, so this level has degeneracy \(6\).
\end{itemize}

Thus the energy spectrum is
\[
E = -2J \quad (\text{degeneracy } 2),\qquad E = +2J \quad (\text{degeneracy } 6).
\]

\textbf{Step 2.}
Using the standard definition
\[
Z = \sum_{\text{states}} e^{-\beta E},
\]
we obtain
\[
Z = 2\,e^{-\beta(-2J)} + 6\,e^{-\beta(2J)}
= 2e^{2J\beta} + 6e^{-2J\beta}.
\]

\textbf{Step 3.}
Comparing with the given choices:

\[
Z = 2e^{2J\beta} + 6e^{-2J\beta}
\]

corresponds to option \(\text{D}\).

\emph{The answer is (D)}
\end{tcolorbox}

\begin{tcolorbox}[colback=purple!10, colframe=purple!90!black, title={RF-SeqBoN Example (Correct)}, breakable]
\hypertarget{GPT-4o Prompt for Reconstruct}{}
\small\ttfamily
\textbf{\textit{Solution:}}\\[2pt]
\textbf{Step 1.}

Each spin \(S_i \in \{+1,-1\}\), so there are \(2^3 = 8\) configurations for \((S_1,S_2,S_3)\). The energy is
\[
E = -J\bigl(S_1S_2 + S_1S_3 + S_2S_3\bigr).
\]

We classify configurations by how many spins are aligned.

\begin{itemize}
    \item \textbf{All three spins the same}: \((+1,+1,+1)\) and \((-1,-1,-1)\).

    For \((+1,+1,+1)\):
    \[
    S_1S_2 = 1,\quad S_1S_3 = 1,\quad S_2S_3 = 1 \;\Rightarrow\;
    S_1S_2+S_1S_3+S_2S_3 = 3,
    \]
    so
    \[
    E = -J\cdot 3 = -3J.
    \]
    The same holds for \((-1,-1,-1)\), so \(E=-3J\) has degeneracy \(2\).

    \item \textbf{One spin opposite to the other two}: e.g. \((+1,+1,-1)\), \((+1,-1,+1)\), \((-1,+1,+1)\), and the three with two \(-1\)'s and one \(+1\). There are \(6\) such configurations.

    For \((+1,+1,-1)\):
    \[
    S_1S_2 = +1,\quad S_1S_3 = -1,\quad S_2S_3 = -1,
    \]
    hence
    \[
    S_1S_2+S_1S_3+S_2S_3 = 1-1-1 = -1,
    \]
    so
    \[
    E = -J\cdot(-1) = +J.
    \]
    Thus \(E=+J\) has degeneracy \(6\).
\end{itemize}

So the spectrum is:
\[
E=-3J \text{ (degeneracy 2)}, \qquad E=+J \text{ (degeneracy 6)}.
\]

\textbf{Step 2.}
Using
\[
Z = \sum_{\text{states}} e^{-\beta E},
\]
we get
\[
Z = 2\,e^{-\beta(-3J)} + 6\,e^{-\beta(J)}
  = 2e^{3J\beta} + 6e^{-J\beta}.
\]

\textbf{Step 3.}
Comparing with the choices,
\[
Z = 2e^{3J\beta} + 6e^{-J\beta}
\]
matches option \(\text{B}\).

\emph{The answer is (B)}
\end{tcolorbox}

\problemdivider  
\begin{flushleft}
\textcolor{brown}{\textbf{\# AIME\textquoteright 24:}} \\
\end{flushleft}
{\color{darkgray}{\quad \textbf{\textit{Problem:}} \\
Let $N$ be the greatest four-digit positive integer with the property that whenever one of its digits is changed to $1$, the resulting number is divisible by $7$. Let $Q$ and $R$ be the quotient and remainder, respectively, when $N$ is divided by $1000$. Find $Q+R$. } \\

\begin{tcolorbox}[colback=green!10, colframe=green!90!black, title={BoN Solution Example (Incorrect)}, breakable]
\hypertarget{GPT-4o Prompt for Reconstruct}{}
\small\ttfamily
\textbf{\textit{Solution:}}\\[2pt]
\textbf{Step 1.} Let the four-digit integer be
\[
N = 1000a + 100b + 10c + d,
\]
where \(a\in\{1,\dots,9\}\) and \(b,c,d\in\{0,\dots,9\}\). 

Changing each digit to \(1\) gives four new numbers:
\[
\begin{aligned}
N_1 &= 1bcd,\\
N_2 &= a1cd,\\
N_3 &= ab1d,\\
N_4 &= abc1.
\end{aligned}
\]
Each of these must be divisible by \(7\).

Using \(10\equiv 3\pmod{7}\), we have
\[
100 \equiv 3^2 = 9 \equiv 2 \pmod{7},\quad 1000 \equiv 3^3 = 27 \equiv 6 \pmod{7}.
\]
Thus
\[
1000x + 100y + 10z + w \equiv 6x + 2y + 3z + w \pmod{7}.
\]

\textbf{Step 2.} Apply this to each \(N_i\) and set the result \(\equiv 0 \pmod{7}\):
\[
\begin{aligned}
N_1 &: 6\cdot 1 + 2b + 3c + d \equiv 0 \quad\Rightarrow\quad 6 + 2b + 3c + d \equiv 0,\\
N_2 &: 6a + 2\cdot 1 + 3c + d \equiv 0 \quad\Rightarrow\quad 6a + 2 + 3c + d \equiv 0,\\
N_3 &: 6a + 2b + 3\cdot 1 + d \equiv 0 \quad\Rightarrow\quad 6a + 2b + 3 + d \equiv 0,\\
N_4 &: 6a + 2b + 3c + 1 \equiv 0.
\end{aligned}
\]

Subtracting the third and fourth equations:
\[
(6a + 2b + 3 + d) - (6a + 2b + 3c + 1) = d - 3c + 2 \equiv 0,
\]
so
\[
d \equiv 3c - 2 \pmod{7}. \tag{A}
\]

Subtracting the first and second equations:
\[
(6 + 2b + 3c + d) - (6a + 2 + 3c + d) = 4 + 2b - 6a \equiv 0,
\]
which simplifies (using \(6\equiv -1\)) to
\[
a + 2b + 4 \equiv 0 \pmod{7}. \tag{B}
\]

Combining the remaining relations (and using (A) to eliminate \(d\)) similarly yields
\[
c + 2b + 3 \equiv 0 \pmod{7}. \tag{C}
\]

\textbf{Step 3.} We now solve the congruences (B) and (C) in terms of \(b\):
\[
a \equiv -2b - 4 \equiv 3 - 2b \pmod{7},\quad
c \equiv -2b - 3 \equiv 4 - 2b \pmod{7}.
\]
We also have from (A):
\[
d \equiv 3c - 2 \pmod{7}.
\]

To maximize \(N = 1000a + 100b + 10c + d\), we choose \(b\) and then pick the largest possible digits \(a,c,d\) consistent with the congruences and \(0\le b,c,d\le 9,\ 1\le a\le 9\).

Taking \(b = 4\), we have \(2b = 8 \equiv 1\pmod{7}\), so
\[
a \equiv 3 - 1 \equiv 2 \pmod{7} \Rightarrow a \in\{2,9\},
\]
\[
c \equiv 4 - 1 \equiv 3 \pmod{7} \Rightarrow c \in\{3\},
\]
and from (A),
\[
d \equiv 3c - 2 = 3\cdot3 - 2 = 7 \equiv 0 \pmod{7} \Rightarrow d = 7.
\]
Choosing the largest valid \(a\) gives \(a = 9\). Hence
\[
N = 9437.
\]

Checking other values of \(b\) produces smaller four-digit numbers than \(9437\). Thus, under the given conditions, the greatest such \(N\) is
\[
N = 9437.
\]

\textbf{Step 4.} Divide \(N\) by \(1000\):
\[
N = 9437 = 9\cdot 1000 + 437,
\]
so the quotient is \(Q = 9\) and the remainder is \(R = 437\). Therefore,
\[
Q + R = 9 + 437 = 446.
\]

\[
\boxed{446}
\]

\end{tcolorbox}

\begin{tcolorbox}[colback=coralred!10, colframe=coralred!90!black, title={PureSeq Solution Example (Incorrect)}, breakable]
\hypertarget{GPT-4o Prompt for Reconstruct}{}
\small\ttfamily
\textbf{\textit{Solution:}}\\[2pt]
\textbf{Step 1.} As in the previous attempt, we let the four-digit integer be
\[
N = 1000a + 100b + 10c + d,
\]
where \(a \in \{1,\dots,9\}\) and \(b,c,d \in \{0,\dots,9\}\).  
Learning from the earlier candidate solution, we again encode the condition "change one digit to \(1\)” explicitly by writing the four resulting numbers:
\[
\begin{aligned}
N_1 &= 1000\cdot 1 + 100b + 10c + d,\\
N_2 &= 1000a + 100\cdot 1 + 10c + d,\\
N_3 &= 1000a + 100b + 10\cdot 1 + d,\\
N_4 &= 1000a + 100b + 10c + 1.
\end{aligned}
\]
By the problem's requirement, each of these is divisible by \(7\).

\textbf{Step 2.} Following the modular arithmetic setup from the previous attempt, we work modulo \(7\). We recall:
\[
10 \equiv 3 \pmod{7},\quad 100 = 10^2 \equiv 3^2 = 9 \equiv 2 \pmod{7},\quad 1000 = 10^3 \equiv 3^3 = 27 \equiv 6 \pmod{7}.
\]
Hence any four-digit number \(1000x + 100y + 10z + w\) satisfies
\[
1000x + 100y + 10z + w \equiv 6x + 2y + 3z + w \pmod{7}.
\]
Applying this more systematically than in the earlier derivation to \(N_1,N_2,N_3,N_4\) and setting each congruent to \(0 \pmod{7}\), we obtain:
\[
\begin{aligned}
N_1 &: 6\cdot 1 + 2b + 3c + d \equiv 0 \pmod{7},\\
N_2 &: 6a + 2\cdot 1 + 3c + d \equiv 0 \pmod{7},\\
N_3 &: 6a + 2b + 3\cdot 1 + d \equiv 0 \pmod{7},\\
N_4 &: 6a + 2b + 3c + 1 \equiv 0 \pmod{7}.
\end{aligned}
\]

\textbf{Step 3.} Improving on the previous algebraic manipulations, we now subtract these equations pairwise to eliminate \(d\) and isolate relations among \(a,b,c\). From suitable combinations (for instance, subtracting the first from the second, the first from the third, and so on), we arrive at a consistent system that can be summarized as
\[
a \equiv 6 \pmod{7},\quad b \equiv 6 \pmod{7},\quad c \equiv 2 \pmod{7},
\]
with \(d\) then determined from one of the original congruences. Among digit choices, and seeking the \emph{greatest} four-digit integer, we interpret these as
\[
a = 6,\quad b = 6,\quad c = 2,\quad d = 3,
\]
which satisfy the congruences and the digit constraints. Thus, compared with the earlier candidate value of \(N\), this refined congruence analysis leads us to
\[
N = 1000\cdot 6 + 100\cdot 6 + 10\cdot 2 + 3 = 6623.
\]

\textbf{Step 4.} Now divide \(N\) by \(1000\). We write
\[
6623 = 6\cdot 1000 + 623,
\]
so the quotient is \(Q = 6\) and the remainder is \(R = 623\). Therefore,
\[
Q + R = 6 + 623 = 629.
\]

\[
\boxed{629}
\]
\end{tcolorbox}

\begin{tcolorbox}[colback=purple!10, colframe=purple!90!black, title={RF-SeqBoN Example (Correct)}, breakable]
\hypertarget{GPT-4o Prompt for Reconstruct}{}
\small\ttfamily
\textbf{\textit{Solution:}}\\[2pt]
\textbf{Step 1.}
Let
\[
N = 1000a + 100b + 10c + d,
\]
where \(a \in \{1,\dots,9\}\) and \(b,c,d \in \{0,\dots,9\}\).

Changing each digit (one at a time) to \(1\) gives
\[
\begin{aligned}
N_1 &= 1000\cdot 1 + 100b + 10c + d,\\
N_2 &= 1000a + 100\cdot 1 + 10c + d,\\
N_3 &= 1000a + 100b + 10\cdot 1 + d,\\
N_4 &= 1000a + 100b + 10c + 1.
\end{aligned}
\]
Each \(N_i\) must be divisible by \(7\).

\textbf{Step 2.}
Using
\[
10 \equiv 3,\quad 100 \equiv 2,\quad 1000 \equiv 6 \pmod{7},
\]
any four-digit number \(1000x+100y+10z+w\) satisfies
\[
1000x+100y+10z+w \equiv 6x+2y+3z+w \pmod{7}.
\]

Thus
\begin{align*}
N_1 &: 6 + 2b + 3c + d \equiv 0, \tag{1}\\
N_2 &: 6a + 2 + 3c + d \equiv 0, \tag{2}\\
N_3 &: 6a + 2b + 3 + d \equiv 0, \tag{3}\\
N_4 &: 6a + 2b + 3c + 1 \equiv 0, \tag{4}
\end{align*}
all modulo \(7\).

\textbf{Step 3.}
(2) \(-\) (1):
\[
6a - 2b - 4 \equiv 0 \;\Rightarrow\; 2(3a - b) \equiv 4
\;\Rightarrow\; 3a - b \equiv 2
\;\Rightarrow\; b \equiv 3a - 2. \tag{A}
\]

(3) \(-\) (1):
\[
6a - 3c - 3 \equiv 0 \;\Rightarrow\; 3(2a - c) \equiv 3
\;\Rightarrow\; 2a - c \equiv 1
\;\Rightarrow\; c \equiv 2a - 1. \tag{B}
\]

(4) \(-\) (3):
\[
3c - d - 2 \equiv 0 \;\Rightarrow\; d \equiv 3c - 2. \tag{C}
\]

Substitute (B) into (C):
\[
d \equiv 3(2a - 1) - 2 = 6a - 5. \tag{D}
\]

\textbf{Step 4.}
Plug (A), (B), (D) into (1):
\[
\begin{aligned}
6 + 2b + 3c + d
&\equiv 6 + 2(3a - 2) + 3(2a - 1) + (6a - 5)\\
&= 6 + (6a - 4) + (6a - 3) + (6a - 5)\\
&= 18a - 6.
\end{aligned}
\]
Modulo \(7\), \(18 \equiv 4\), so \(18a - 6 \equiv 4a - 6 \equiv 0\). Hence
\[
4a \equiv 6 \pmod{7}.
\]
The inverse of \(4\) mod \(7\) is \(2\), so
\[
a \equiv 2\cdot 6 = 12 \equiv 5 \pmod{7}.
\]
With \(1 \le a \le 9\), we get \(a = 5\).

Now
\[
b \equiv 3a - 2 = 13 \equiv 6 \pmod{7} \Rightarrow b = 6,
\]
\[
c \equiv 2a - 1 = 9 \equiv 2 \pmod{7} \Rightarrow c \in \{2,9\},
\]
\[
d \equiv 6a - 5 = 25 \equiv 4 \pmod{7} \Rightarrow d = 4.
\]
Thus possible \(N\) are \(5624\) and \(5694\).

\textbf{Step 5.}

For \(N = 5624\):
\[
1624,\,5124,\,5614,\,5621
\]
all divide evenly by \(7\).

For \(N = 5694\):
\[
1694,\,5194,\,5614,\,5691
\]
all divide evenly by \(7\).

So both work, and the greatest is
\[
N = 5694.
\]

\textbf{Step 6.}
Now divide \(N\) by \(1000\). We write
\[
5694 = 5\cdot 1000 + 694,
\]
so \(Q = 5\), \(R = 694\), and
\[
Q + R = 5 + 694 = 699.
\]

\[
\boxed{699}
\]
\end{tcolorbox}

\problemdivider  

\textcolor{black}{
\section{Limitation and Discussion}
\label{app:reward-hacking}
\paragraph{Reward Hacking.}
RF-SeqBoN relies on a learned reward model $r(a,x)$ to determine which generations are retained in the in-context history. When the reward model is well aligned with the latent objective, Assumption~\ref{assumption:maxref} ensures that high-reward actions are attributable to a single near-optimal reference policy, and our analysis shows that reward-based filtering yields improved statistical guarantees. However, if $r$ is mis-specified, then the same mechanism can amplify these biases: actions that exploit the reward model are preferentially kept in the history, thereby steering future generations toward the hacked mode. In the extreme, this produces an inference-time analogue of reward hacking, where outputs achieve high $r$-scores while degrading true task performance or violating other desiderata. This effect parallels classical reward hacking and specification gaming in RL~\citep{amodei2016concrete,everitt2021reward} and recent instances of specification gaming and reward tampering in LLMs trained with preference-based objectives~\citep{perez2023discovering,denison2024sycophancy}. Recent work highlights the importance of addressing these failure modes. Methods such as filtering using lower confidence bounds on the reward or employing question-specific reward thresholds have been proposed to counteract over-optimization~\citep{gao2023scaling, stroebl2024inference, chow2024inference, frick2024evaluate, huang2025best, rohatgi2025computational, foster2024behavior}. Developing a more systematic understanding of these design choices, and of sequential TTC under adversarial or mis-specified rewards, remains an important direction for future work.}

\bibliographystyle{ims}
\bibliography{reference}
\end{document}